\documentclass[twoside,11pt, final]{article}

%

%
%
%

\usepackage[abbrvbib, preprint]{jmlr2e}


\usepackage{custom}

\usepackage{lastpage}
\jmlrheading{21}{2020}{1-\pageref{LastPage}}{2/20}{9/20}{20-163}{Shuxiao Chen, Edgar Dobriban, and Jane H. Lee}
\ShortHeadings{Theory for Data Augmentation}{Chen, Dobriban and Lee}

\renewcommand{\hat}{\widehat}
\makeatletter
\newcommand*\bigcdot{\mathpalette\bigcdot@{.5}}
\newcommand*\bigcdot@[2]{\mathbin{\vcenter{\hbox{\scalebox{#2}{$\m@th#1\bullet$}}}}}
\makeatother




\firstpageno{1}

\begin{document}

\title{A Group-Theoretic Framework for Data Augmentation}

\author{\name Shuxiao Chen \email shuxiaoc@wharton.upenn.edu \\
       \name Edgar Dobriban \email dobriban@wharton.upenn.edu \\
       \addr Department of Statistics\\
       The Wharton School of the University of Pennsylvania\\
       Philadelphia, PA, 19104-6340, USA
       \AND
       \name Jane H.\ Lee \email janehlee@sas.upenn.edu \\
       \addr Department of Mathematics, and Computer and Information Science\\
       University of Pennsylvania\\
       Philadelphia, PA 19104-6309, USA
       }

\editor{Rob Fergus}

\maketitle

\begin{abstract}
Data augmentation is a widely used trick when training deep neural networks: in addition to the original data, properly transformed data are also added to the training set. However, to the best of our knowledge, a clear mathematical framework to explain the performance benefits of data augmentation is not available. In this paper, we develop such a theoretical framework. We show data augmentation is equivalent to an averaging operation over the orbits of a certain group that keeps the data distribution approximately invariant. We prove that it leads to variance reduction. We study empirical risk minimization, and the examples of exponential families, linear regression, and certain two-layer neural networks. We also discuss how data augmentation could be used in problems with symmetry where other approaches are prevalent, such as in cryo-electron microscopy (cryo-EM).
\end{abstract}

\begin{keywords}
Data Augmentation, Deep Learning, Empirical Risk Minimization, Invariance, Variance Reduction 
\end{keywords}

\section{Introduction}

Deep learning algorithms such as convolutional neural networks (CNNs) are successful in part because they exploit natural symmetry in the data. For instance, image identity is roughly invariant to translations and rotations: so a slightly translated cat is still a cat. Such invariances are present in many datasets, including image, text and speech data. Standard architectures are invariant to some, but not all transforms. For instance, CNNs induce an approximate equivariance to translations, but not to rotations. This is an \emph{inductive bias} of CNNs, and the idea dates back at least to the neocognitron \citep{fukushima1980neocognitron}.

To make models invariant to arbitrary transforms beyond the ones built into the architecture, \emph{data augmentation} is commonly used. Roughly speaking, the model is trained not just with the original data, but also with transformed data. Data augmentation is a crucial component of modern deep learning pipelines, and it is typically needed to achieve state of the art performance. It has been used, e.g., in AlexNet \citep{krizhevsky2012imagenet}, and other pioneering works \citep{cirecsan2010deep}. The state-of-the-art results on aggregator websites such as \url{https://paperswithcode.com/sota} often crucially rely on effective new ways of data augmentation. See Figure \ref{fig:exp} for a small experiment (see Appendix \ref{sec:exp} for details).

\begin{figure}[tb]
	\centering
	\includegraphics[width=0.45\textwidth]{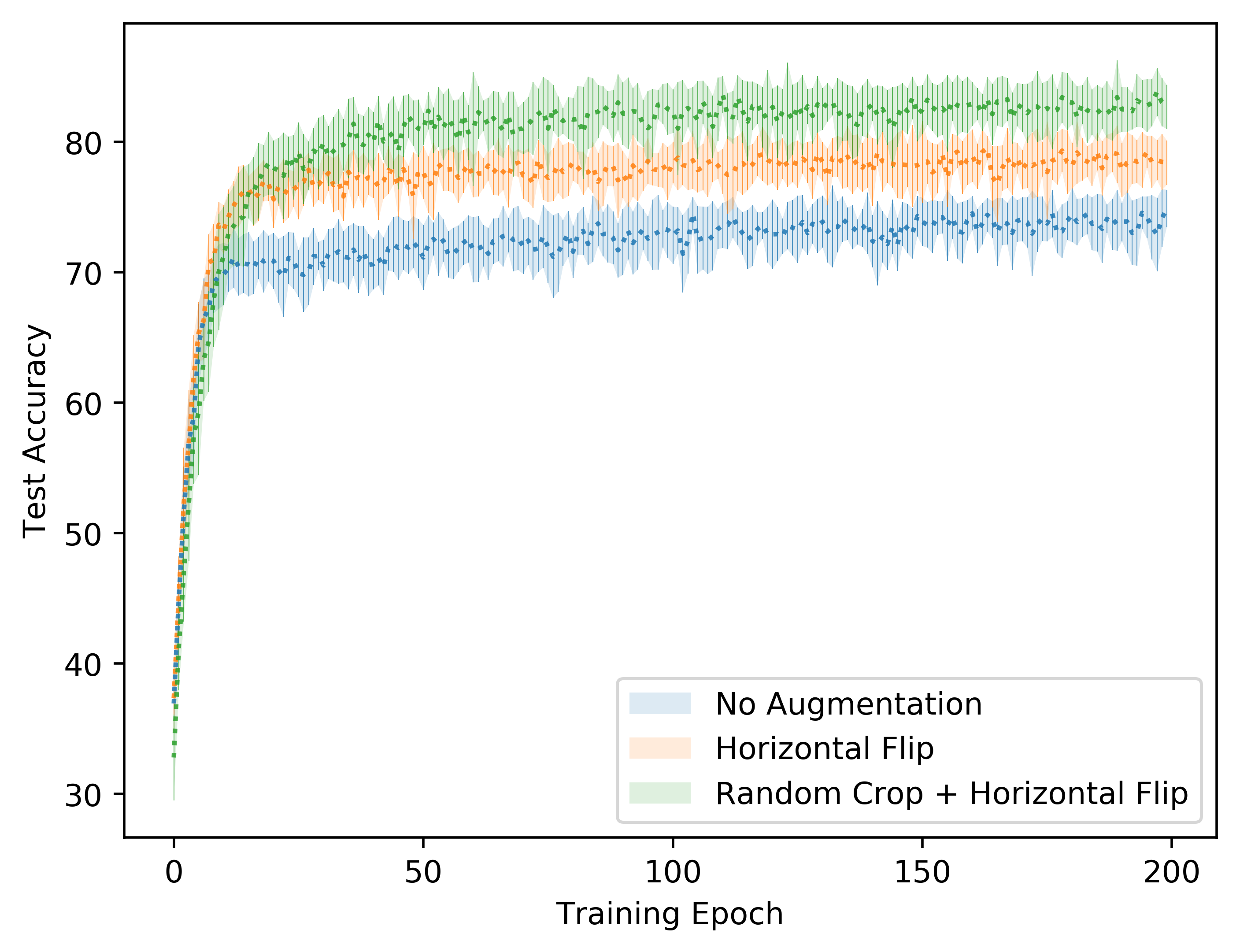}
	\includegraphics[width=0.45\textwidth]{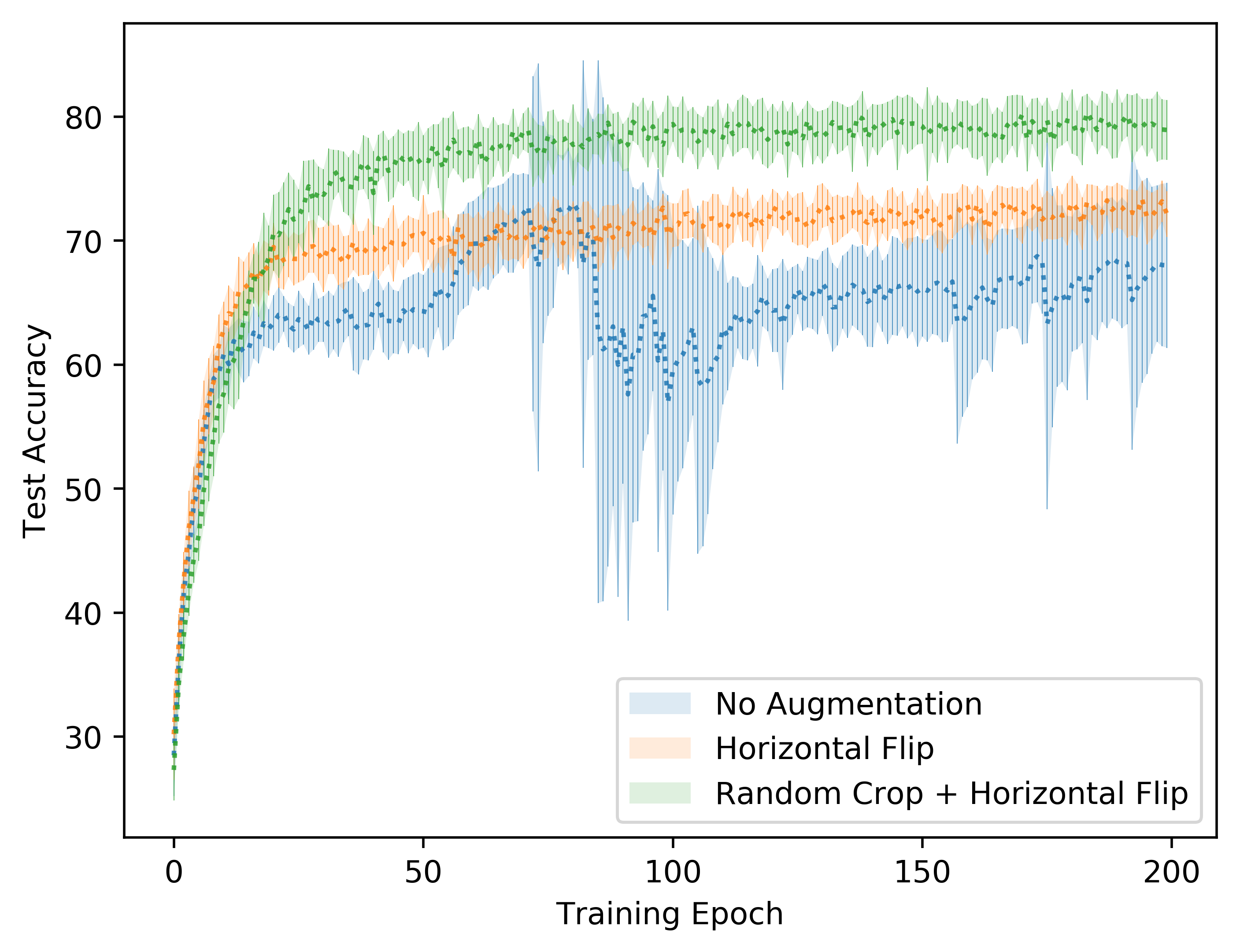}
	\caption{Benefits of data augmentation: A comparison of test accuracy across training epochs of ResNet18 \citep{resnet18} (1) without data augmentation, (2) horizontally flipping the image with $0.5$ probability, and (3) a composition of randomly cropping a $32 \times 32$ portion of the image and random horizontal flip. The experiment is repeated 15 times, with the dotted lines showing the average test accuracy and the shaded regions representing 1 standard deviation around the mean. The left graph shows results from training on the full CIFAR10 training data and the right uses half of the training data as that of the left. Data augmentation leads to increased performance, especially with limited data.}
	\label{fig:exp}
\end{figure}

Equivariant or invariant architectures (such as CNNs) are attractive for tackling invariance. However, in many cases, datasets have symmetries that are naturally described in a \emph{generative} form: we can specify generators of the group of symmetries (e.g., rotations and scalings). In contrast, computing the equivariant features requires designing new architectures. Thus, data augmentation is a universally applicable, generative, and algorithmic way to exploit invariances.
	
However, a general framework for understanding data augmentation is missing. Such a framework would enable us to reason clearly about the benefits offered by augmentation, in comparison to invariant features. Moreover, such a framework could also shed light on questions such as: How can we improve the performance of our models by simply adding transformed versions of the training data? Under what conditions can we see benefits? Developing such a framework is challenging for several reasons: first, it is unclear what mathematical approach to use, and second, it is unclear how to demonstrate that data augmentation ``helps". 

In this paper, we propose such a general framework. We use group theory as a mathematical language, and model invariances as ``approximate equality'' in distribution under a group action. We show that data augmentation can be viewed as invariant learning by averaging over the group action. We then demonstrate that data augmentation leads to sample efficient learning, both in the non-asymptotic setting (relying on results from stochastic convex optimization and Rademacher complexity), as well as in the asymptotic setting (by using asymptotic statistical theory for empirical risk minimizers/M-estimators). 

We show how to apply data augmentation beyond deep learning, to other problems in statistics and machine learning that have invariances. In addition, we explain the connections to several other important notions from statistics and machine learning, including sufficiency, invariant representations, equivariance, variance reduction in Monte Carlo methods, and regularization. 

We can summarize our main contributions as follows: 
\begin{enumerate}
	\item We study data augmentation in a group-theoretic formulation, where there is a group acting on the data, and the distribution of the data is equal (which we refer to as \emph{exact invariance}), or does not change too much (which we refer to as \emph{approximate invariance}) under the action. We explain that in empirical risk minimization (ERM), this leads to minimizing an augmented loss, which is the average of the original loss under the group action (Section \ref{sec:erm}). In the special case of maximum likelihood estimation (MLE), we discuss several variants of MLE that may potentially exploit invariance (Section \ref{subsec:variants_of_MLE}). We also propose to extend data augmentation beyond ERM, using the ``augmentation distribution'' (Section \ref{sec:bey_erm}).

	\item We provide several theoretical results to support the benefits of data augmentation. When the data is exactly invariant in distribution (exact invariance), we show that averaging over the group orbit (e.g., all rotations) reduces the variance of \emph{any function}. We can immediately conclude that estimators based on the ``augmentation distribution'' gain efficiency and augmentation reduces the mean squared error (MSE) of general estimators (Section \ref{subsec:gen_est}). 

	\item Specializing the variance reduction to the loss and the gradients of the loss, we show that the empirical risk minimizer with data augmentation enjoys favorable properties in a non-asymptotic setup. Specifically, ``loss-averaging'' implies that data augmentation reduces the Rademacher complexity of a loss class (Section \ref{subsubsec:effect_of_loss_avg}), which further suggests that the augmented model may generalize better. On the other hand, we show that ``gradient-averaging'' reduces the variance of the ERM when the loss is strongly-convex, using a recent result from stochastic convex optimization (Section \ref{subsubsec:effect_of_gradient_avg}).



	\item Moving to the asymptotic case, we characterize the \emph{precise} variance reduction obtained by data augmentation under exact invariance. We show that this depends on the covariance of the loss gradients along the group orbit (Section \ref{subsubsec:asymp_res_under_exact_invariance}). This implies that data augmentation can improve upon the Fisher information of the un-augmented maximum likelihood estimator (MLE) (Section \ref{subsubsec:implication_for_mle}). For MLE, we further study the special case when the subspace of parameters with invariance is a low-dimensional manifold. We connect this to geometry showing that the projection of the gradient onto the tangent space is always invariant; however, it does not always capture all invariance. As a result, the augmented MLE cannot always be as efficient as the ``constrained MLE'', where the invariance is achieved by constrained optimization.

	\item We work out several examples of our theory under exact invariance: exponential families (Section \ref{subsec:exponential_family}), least-squares regression (Section \ref{subsec:param_regression}) and least squares classification (Section \ref{subsec:param_classification}). As a notable example, we work out the efficiency gain for two-layer neural networks with circular shift data augmentation in the heavily underparametrized regime (where most of our results concern quadratic activations). We also provide an example of ``augmentation distribution'', in the context of linear regression with general linear group actions (Section \ref{subsec:lin_reg_aug_distr}).

	\item We extend most of the results to the approximate invariance case, where the distribution of the data is close, but not exactly equal to its transformed copy (Section \ref{sec:approx_inv}). Using optimal transport theory, we characterize an intriguing bias-variance tradeoff: while the orbit averaging operation reduces variance, a certain level of bias is created because of the non-exact-invariance. This tradeoff suggests that in practice where exact invariance does not hold, data augmentation may not always be beneficial, and its performance is governed by both the variability of the group and a specific Wasserstein distance between the data and its transformed copy.

	\item We illustrate the bias-variance tradeoff by studying the generalization error of over-parameterized two-layer networks trained by gradient descent (Section \ref{sec:overparam_2l_nets}), using recent results on Neural Tangent Kernel (see, e.g., \citealt{jacot2018neural,arora2019exact,ji2019polylogarithmic,chen2019much}).

	\item We also describe a few important problems where symmetries occur, but where other approaches---not data augmentation---are currently used (Section \ref{sec:potential_applications}): cryo-electron microscopy (cryo-EM), spherically invariant data, and random effects models. These problems may be especially promising for using data augmentation.
\end{enumerate}


\section{Some related work}\label{sec:related_work}
In this section, we discuss some related works, in addition to those that are mentioned in other places.\\

{\bf Data augmentation methodology in deep learning.} 
There is a great deal of work in developing efficient methods for data augmentation in deep learning. Here we briefly mention a few works. Data augmentation has a long history, and related ideas date back at least to \cite{baird1992document}, who built a ``pseudo-random image generator'', that ``\emph{given an image and a set of model parameters, computes a degraded image}''. This is recounted in \cite{schmidhuber2015deep}.

Conditional generative adversarial networks (cGAN) are a method for learning to generate from the class-conditional distributions in a classification problem \citep{mirza2014conditional}. This has direct applications to data augmentation. Data augmentation GANs (DAGAN) \citep{antoniou2017data} are a related approach that train a GAN to discriminate between $x,x_g$, and $x,x'$, where $x_g$ is generated as $x_g = f(x,z)$, and $x'$ is an independent copy. This learns the conditional distribution $x'|x$, where $x',x$ are sampled ``independently" from the training set. Here the training data is viewed as non-independent and they learn the dependence structure. 
	
\cite{hauberg2016dreaming} construct class-dependent distributions over diffeomorphisms for learned data augmentation. \cite{ratner2017learning} learn data augmentation policies using reinforcement learning, starting with a known set of valid transforms. \cite{tran2017bayesian} propose a Bayesian approach. RenderGAN \citep{sixt2018rendergan} combines a 3D model with GANs for image generation. \cite{devries2017dataset} propose to perform data augmentation in feature space. AutoAugment \citep{cubuk2018autoaugment} is another approach for learning augmentation policies based on reinforcement learning, which is one of the state of the art approaches. RandAugment \citep{cubuk2019randaugment} proposes a strategy to reduce the search space of augmentation policies, which also achieves state of the art performance on ImageNet classification tasks. \cite{hoffer2019augment} argues that replicating instances of samples within the same batch with different data augmentations can act as an accelerator and a regularizer when appropriately tuning the hyperparameters, increasing both the speed of training and the generalization performance.\\
	
{\bf Neural net architecture design.}
There is a parallel line of work designing invariant and equivariant neural net architectures. A key celebrated example is convolutions, dating back at least to the neocognitron \citep{fukushima1980neocognitron}, see also \cite{lecun1989backpropagation}. More recently, group equivariant Convolutional Neural Networks (G-CNNs), have been proposed, using G-convolutions to exploit symmetry \citep{cohen2016group}. That work designs concrete architectures for groups of translations, rotations by 90 degrees around any center of rotation in a square grid, and reflections.  \cite{dieleman2016exploiting} designed architectures for cyclic symmetry. 
	
\cite{worrall2017harmonic} introduces Harmonic Networks or H-Nets, which induce equivariance to patch-wise translation and 360 degree rotation. They rely on circular harmonics as invariant features. \cite{cohen2016steerable} propose steerable CNNs and \cite{cohen2018general} develop a more general approach. 
There are several works on SO(3) equivariance, see \cite{cohen2018spherical,Esteves_2018_ECCV,esteves2018cross,esteves2019equivariant}. 

\cite{gens2014deep} introduces deep symmetry networks (symnets), that form feature maps over arbitrary symmetry groups via  kernel-based interpolation to pool over symmetry spaces. See also \cite{ravanbakhsh2017equivariance,kondor2018generalization,weiler20183d,kondor2018clebsch}.
There are also many examples of data augmentation methods developed in various application areas, e.g., \cite{,jaitly2013vocal,xie2019unsupervised,park2019specaugment,ho2019population}, etc.\\

{\bf Data augmentation as a form of regularization.} 
There is also a line of work proposing to add random or adversarial noise to the data when training neural networks. The heuristic behind this approach is that the addition of noise-perturbed data should produce a classifier that is robust to random or adversarial corruption.

For example, \cite{devries2017improved} proposes to randomly mask out square regions of input images and fill the regions with pure gray color; \cite{zhong2017random} and \cite{lopes2019improving} propose to randomly select a patch within an image and replace its pixels with random values; \cite{bendory2018bispectrum} proposes to add Perlin noise \citep{perlin1985image} to medical images; \cite{zhang2017mixup} proposes to train with convex combinations of two images \emph{as well as their labels}. The experiments done by those papers show that augmenting with noise-perturbed data can lead to lower generalization error and better robustness against corruption. 

\cite{szegedy2013intriguing} and \cite{cohen2019certified} demonstrate that training with adversarial examples can lead to some form of regularization. \cite{hernandez2018data} has argued that data augmentation can sometimes even replace other regularization mechanisms such as weight decay, while \cite{hernandez2018further} has argued that this can be less sensitive to hyperparameter choices than other forms of regularization.
While this approach is also called data augmentation in the literature, it is \emph{fundamentally different} from what we consider. We study a way to exploit invariance in the data, while those works focus on smoothing effects (adding noise cannot possibly lead to exactly invariant distributions). 

More related to our approach, \cite{maaten2013learning} propose to train the model by minimizing the expected value of the loss function under the ``corrupting distribution'' and show such a strategy improves generalization. \cite{chao2017stochastic,mazaheri2019stochastic} aim to reduce the variance and enhance stability by ``functional integration'' inspired averaging over a large number of different representations of the same data. However, our quantitative approaches differ. \cite{lyle2016analysis} study the effect of invariance on generalization in neural networks. They study feature averaging, which agrees with our augmentation distribution defined later. They mention variance reduction, but we can, in fact, prove it.\\
	
{\bf Other works connected to data augmentation.}
There is a tremendous amount of other work connected to data augmentation. On the empirical end, \cite{bengio2011deep} has argued that the benefit of data augmentation goes up with depth. On the theoretical end, \cite{rajput2019does} investigate if gaussian data augmentation leads to a positive margin, with some negative results. Connecting to adversarial examples, \cite{engstrom2017rotation} shows that adversarially chosen group transforms such as rotations can already be enough to fool neural network classifiers. \cite{javadi2019hessian} shows that data augmentation can reduce a certain Hessian-based complexity of neural networks. \cite{liu2019bad} show that data augmentation can significantly improve the optimization landscape of neural networks, so that SGD avoids bad local minima and leads to much more accurate trained networks. \cite{hernandez2018deep} has shown that it also leads to better biological plausibility in some cases.

\cite{dao19b} also seek to establish a theoretical framework for understanding data augmentation, but focus on the connection between data augmentation and kernel classifiers. \cite{dao19b} study $k$-NN and kernel methods. They show how data augmentation with a kernel classifier yields approximations which look like feature averaging and variance regularization, but do not explicitly quantify how this \emph{improves} classification. One of the most related works by \cite{bloem2019probabilistic} use similar probabilistic models, but without focusing on data augmentation.

We also note that data augmentation has another meaning in Bayesian statistics, namely the introduction of auxiliary variables to help compute the posterior \citep[see e.g.,][]{tanner1987calculation}. The naming clash is unfortunate. However, since the term ``data augmentation'' is well established in deep learning, we decided to keep it in our current work.\\

{\bf Group invariance in statistical inference.}
There has been significant work on group invariance in statistical inference \citep[e.g.,][]{giri1996group}. However, the questions investigated there are different from the ones that we study. Among other contributions, \cite{helland2004statistical} argues that group invariance can form natural non‐informative priors for the parameters of the model, and introduces permissible sub‐parameters as those upon which group actions can be defined.\\

{\bf Physical invariances.}
There is a long history of studying invariance and symmetry in physics. Invariances lead to conservation laws, such as conservation of mass and momentum. In addition, invariances in Hamiltonians of physical systems lead to reductions in the number of parameters of probability distributions governing the systems. This has been among the explanations proposed of why deep learning works \citep{lin2017does}.



\section{Methodology for data augmentation}\label{sec:method}
	
\subsection{ERM}\label{sec:erm}

We start by explaining our framework in the context of empirical risk minimization (ERM). 
	
Consider observations $X_1, \ldots, X_n \in \xx$ (e.g., images along with their labels) sampled i.i.d. from a probability distribution $\P$ on the sample space $\xx$. Since data augmentation is a way of ``teaching invariance to the model'', we need to assume our data is invariant to certain transformations. Consider thus a group $G$ of transforms (e.g., the set of all rotations of images), which \emph{acts} on the sample space: there is a function $\phi: G\times \xx \to \xx, (g, x)\mapsto\phi(g, x)$, such that $\phi(e, x) = x$ for the identity element $e \in G$, and $\phi(gh, x) = \phi(g, \phi(h, x))$ for any $g, h \in G$. For notational simplicity, we write $\phi(g, x)\equiv gx$ then there is no ambiguity. To model invariance, we assume that for any group element $g \in G$ and almost any $X\sim \P$, we have an ``approximate equality'' in distribution:
\beq
\label{eq:inv}
X \approx_d gX.
\eeq
For ease of exposition, we start with \emph{exact invariance} $X =_d gX$ in Section \ref{sec:exact_inv}, and we handle \emph{approximate invariance} in Section \ref{sec:approx_inv}.

For supervised learning applications, $X_i=(Z_i,Y_i)$ contains both the features $Z_i$ and the outcome $Y_i$. The assumption \eqref{eq:inv} means that the probability of an image being a bird is (either exactly or approximately) the same as the probability for a rotated image.

As an aside, the invariance in distribution \eqref{eq:inv} arises naturally in many familiar statistical settings, including permutation tests, time series analysis. See Section \ref{subsec:otherconn}.
	
In the current context, data augmentation corresponds to ``adding all datapoints $gX_i$, $g\in G$, $i=1,\ldots, n$" to the dataset. When the group is finite, this can effectively be implemented by enlarging the data size. However, many important groups are infinite, and to understand data augmentation in that setting, it is most clear if we argue from first principles.

\begin{algorithm}[bt]
\small
\SetKwInOut{Input}{Input}
\SetKwInOut{Output}{Output}
\Input{Data $X_i$, $i=1,\ldots,n$; Method to compute gradients $\nabla L(\theta,X)$ of the loss; Method to sample augmentations $g\in G$, $g\sim \mbQ$; Learning rates $\eta_t$; Batch sizes $|S_t|$; Initial parameters $\theta_0$; Stopping criterion.}

\Output{Final parameters.}
Set $t=0$

While stopping criterion is not met 

\quad Sample random minibatch $S_t \subset \{1,\ldots,n\}$

\quad Sample random augmentation $g_{i, t}\sim \mbQ$ for each batch element

\quad Update parameters

$$ \theta_{t+1} \gets \theta_{t} - \frac{\eta_t}{|S_t|} \sum_{i\in S_t} \nabla L(\theta, g_{i, t} X_i)$$

\quad $t\gets t+1$

\Return $\theta$

\caption{Augmented SGD}
\label{alg:augsgd}
\end{algorithm}

In practice, data augmentation is performed via the following approach. To start, we consider a loss function $L(\theta, X)$, and attempt to minimize the empirical risk
\begin{equation}
    \label{eq:empirical_risk}
    R_n(\theta) := \frac{1}{n} \sum_{i=1}^n L(\theta, X_i). 
\end{equation}
We iterative over time $t = 1, 2, \hdots$ using stochastic gradient descent (SGD) or variants (see Algorithm \ref{alg:augsgd}). At each step $t$, a minibatch of $X_i$'s (say with indices $S_t$) is chosen. In data augmentation, a random transform $g_{i, t}\in G$ is sampled and applied to each data point in the minibatch. Then, the parameter is updated as
\begin{equation}
    \label{eq:aug_sgd}
    \theta_{t+1} = \theta_{t} - \frac{\eta_t}{|S_t|} \sum_{i\in S_t} \nabla L(\theta, g_{i, t} X_i).
\end{equation}

Here we need a probability distribution $\mathbb{Q}$ on the group $G$, from which $g_{i, t}$ is sampled. For a finite $G$, one usually takes $\mbQ$ to be the uniform distribution. However, care must be taken if $G$ is infinite. We assume $G$ is a compact topological group, and we take $\mbQ$ to be the \emph{Haar probability measure}.\footnote{Haar measures are used for convenience. Most of our results hold for more general measures with slightly more lengthy proofs.} Hence, for any $g\in G$ and measurable $S \subseteq G$, the following \emph{translation invariant} property holds: $\mbQ(gS) = \mbQ(S), \mbQ(Sg) = \mbQ(S)$.

A key observation is that the update rule \eqref{eq:aug_sgd} corresponds to SGD on an \emph{augmented} empirical risk, where we take an average over all augmentations according to the measure $\mbQ$:
\begin{equation}
    \label{eq:aug_empirical_risk}
    \min_{\theta} \bar{R}_{n} (\theta) := \frac{1}{n} \sum_{i = 1}^n \int_G L(\theta, gX_i) d\mbQ(g).
\end{equation}    
To be precise, $\nabla L(\theta, g_{i, t} X_i)$ is an unbiased stochastic gradient for the \emph{augmented} loss function 
\begin{equation}
    \label{eq:aug_loss}
    \bar{L}(\theta, X) := \int_G L(\theta, gX) d\mbQ(g),
\end{equation}
and hence we can view the resulting estimator as an empirical risk minimizer of $\bar{R}_n = n^{-1}\sum_{i=1}^n \bar{L}(\theta, X_i)$. 
	
This can be viewed as Rao-Blackwellizing the loss, meaning taking a conditional expectation of the loss over the conditional distribution of $x$ belonging to a certain group orbit. See Lemma \ref{lemma:exact_invariance_lemma} for a precise statement and Section \ref{subsec:otherconn} for more discussion. \\



\subsection{Variants of MLE under invariance} \label{subsec:variants_of_MLE}
The maximum likelihood estimation (MLE) is a special case of ERM when there is an underlying parametric model. The usual approach goes as follows. One starts with a parametric statistical model (i.e., a collection of probability measures) $\{\P_\theta: \theta\in\Theta\}$, where $\Theta$ is some parameter space. Let $p_\theta$ be the density of $\mbP_\theta$, and we define the log-likelihood function as $\ell_\theta(x) = \log p_\theta(x)$. We then define the MLE as any solution of the following problem:
\begin{equation}
\label{eq:vanilla_mle}
	\hat\theta_{\textnormal{MLE},n} \in \arg\max_{\theta \in \Theta} \frac{1}{n}\sum_{i\in[n]} \ell_\theta(X_i).
\end{equation}	
The invariance assumption \eqref{eq:inv} is only imposed on the ``true'' parameter $\thetanull$, so that for $\P_\thetanull$-a.e. $x$ and $\mbQ$-a.e. $g$, we have
\begin{equation}
	\label{eq:inv_assump_in_mle}
	p_\thetanull(gx) \cdot |\det Jac(x\to gx)| = p_\thetanull(x),
\end{equation}	
where $Jac(x\to gx)$ is the Jacobian of the transform $x \mapsto gx$.

There are multiple ways to adapt the likelihood function to the invariance structure (with data augmentation being one of them), and we detail some of them below. \\
	
{\bf Constrained MLE.} The invariance structure is a constraint on the density function. Hence we obtain a natural \emph{constrained} (or restricted) maximum likelihood estimation problem. Define the \emph{invariant subspace} as 
\begin{equation}
\label{eq:inv_subspace}
\Theta_G = \{\theta\in\Theta:X =_d gX,\forall g\in G\}.
\end{equation} 
Then the constrained MLE is 
\begin{equation}
\label{eq:constr_mle}
	\hat\theta_{\textnormal{cMLE},n} \in \arg\max_{\theta \in \Theta_G} \sum_{i\in[n]} \ell_\theta(X_i).
\end{equation}	
	
In general, this can be much more sample-efficient than the original MLE. For instance, suppose we are trying to estimate a normal mean based on one sample: $X\sim \N(\theta,1)$. Let the group be negation, i.e., $G = (\{\pm 1\},\cdot) = \mathbb{Z}_2$. Then $\Theta_G = \{0\}$, because the only normal density symmetric around zero is the one with zero mean. Hence, the invariance condition uniquely identifies the parameter, showing that the constrained MLE perfectly recovers the parameter.
	
However, in general, optimizing over the restricted parameter set may be computationally more difficult. This is indeed the case in the applications we have in mind. For instance, in deep learning, $G$  may correspond to the set of all translations, rotations, scalings, shearings, color shifts, etc. And it's not clear how to obtain information on $\Theta_G$ (for example, how to compute the projection operator onto $\Theta_G$).\\

{\bf Augmented MLE.} A particular example of the augmented ERM is augmented maximum likelihood estimation. Here the loss is the negative log-likelihood, $L(\theta,x)  = -\ell_\theta(x)$. Then the augmented ERM program \eqref{eq:aug_empirical_risk} becomes 
\begin{equation}
\label{eq:augmented_mle}
	\hat \theta_{\textnormal{aMLE},n} \in \arg \max_\theta \sum_{i\in[n]} \int_G \ell_\theta(gX_i) d\mbQ(g) .
\end{equation}

{\bf Invariant MLE.} Another perspective to exploit invariance is that of invariant representations, i.e., learning over representations $T(x)$ of the data such that $T(gx) = T(x)$  (see also Section \ref{subsec:otherconn}). However, it turns out that in some natural examples, the invariant MLE does not gain over the usual MLE (see Appendix \ref{invrep_thy} for a specific example). Thus we will not consider this in much detail.\\
	
{\bf Marginal MLE.}	There is a natural additional method to estimate the parameters, the marginal MLE. Under exact invariance, our original local invariance assumption \eqref{eq:inv_assump_in_mle} is equivalent to the following \emph{latent variable} model. Under the true parameter $\theta_0$, we sample a random group element $g\sim \mbQ$, and a random datapoint $\tilde X \sim  \mbP_\thetanull$, i.e., 
\begin{equation}
\label{eq:latet_var_model}
	\tilde X \sim \mbP_\thetanull,\,g\sim \mbQ.
\end{equation}
Then, we observe $X = g\tilde X$. We repeat this independently over all datapoints to obtain all $X_i$. Since $gX =_d X$ under $\theta_0$, this sampling process is exactly equivalent to the original model, under $\theta_0$. 
	
Suppose that instead of fitting the model \eqref{eq:latet_var_model}, we attempt to fit the relaxed model $\tilde X \sim \mbP_\theta,\, g \sim \mbQ$, observing  $X = g\tilde X$. The only change is that we assume that the invariance holds for all parameter values. This model is mis-specified, nonetheless, it may be easier to fit computationally. Moreover, its MLE may still retain consistency for the original true parameter.
Now the maximum marginal likelihood estimator (ignoring terms constant with respect to $\theta$) can be written as:
\begin{equation}
	\label{eq:def_of_marginal_mle}
	\hat \theta_{\textnormal{mMLE},n} = \arg \max_\theta \sum_{i\in[n]} 
	\log\left(\int_G p_\theta(gX_i) d\mbQ(g)\right).
\end{equation}
We emphasize that this is not the same as the augmented MLE estimator considered above. This estimator has the $\log(\cdot)$ terms outside the $G$-integral, while the augmented one effectively has the $\log(\cdot)$ terms inside.\\

\begin{table}[bt]
	\renewcommand{\arraystretch}{1.8}
	\small
	\centering
	\caption{Optimization objectives}
	\label{opt}
	\begin{tabular}{|l|l|l|l|l|l|}
		\hline
		Method  & Sample Objective & Population Objective\\ \hline
		ERM/MLE        
		& $\underset{\theta\in\Theta}\min\frac{1}{n}\sum_{i\in[n]} L(\theta,X_i)$ 
		& $\underset{\theta\in\Theta}{\min}\E_{\theta_0}L(\theta,X)$ \\ \hline
		Constrained ERM/MLE          
		& $\underset{\theta\in\Theta_G}{\min}\frac{1}{n}\sum_{i\in[n]} L(\theta,X_i)$ 
		& $\underset{\theta\in\Theta_G}\min\E_{\theta_0}L(\theta,X)$  \\ \hline
		Augmented ERM/MLE 
		& $\underset{{\theta\in\Theta}}{\min}\frac{1}{n}\sum_{i\in[n]} \int L(\theta,gX_i) d\mbQ(g)$ 
		& $\underset{{\theta\in\Theta}}{\min}\E_{\theta_0}\int L(\theta,gX) d\mbQ(g)$ \\ \hline
		Invariant ERM/MLE        
		& $\underset{{\theta\in\Theta}}{\min}\frac{1}{n}\sum_{i\in[n]} L(\theta,T(X_i))$,\ \ \  $T(x) = T(gx)$  
		& $\underset{{\theta\in\Theta}}{\min}\E_{\theta_0}L(\theta,T(X))$  \\ \hline
		Marginal MLE
		& $\underset{{\theta\in \Theta}}{\max} \frac{1}{n}\sum_{i\in[n]} \log\left(\int p_\theta(gX_i) d\mbQ(g)\right)$ 
		& $\underset{\theta\in\Theta}{\max}\E_{\theta_0} \log\left(\int p_\theta(gX) d\mbQ(g)\right)$ \\ \hline
	\end{tabular}
\end{table}
	
{\bf Summary of methods.} Thus we now have several estimators for the original problem: MLE, constrained MLE, augmented MLE, invariant MLE, and marginal MLE. We note that the former four methods are general in the ERM context, whereas the last one is specific to likelihood-based models. See Table \ref{opt} for a summary. \emph{Can we understand them?} In the next few sections, we will develop theoretical results to address this question.

\subsection{Beyond ERM}\label{sec:bey_erm}

The above ideas apply to empirical risk minimization. However, there are many popular algorithms and methods that are not most naturally expressed as plain ERM. For instance:
	
\begin{enumerate}
	\item Regularized ERM, e.g., Ridge regression and Lasso 
	\item Shrinkage estimators, e.g., Stein shrinkage
	\item Nearest neighbors, e.g., k-NN classification and regression
	\item Heuristic algorithms like Forward stepwise (stagewise, streamwise) regression, and backward stepwise. While these may be associated with an objective function, there may be no known computationally efficient methods for finding global solutions.
\end{enumerate}	
	
To apply data augmentation for those methods, let us consider a general estimator $\htheta(x)$ based on data $x$. The simplest idea would be to try to compute the estimator on \emph{all the data}, including the actual and transformed sets. Following the previous logic, if we have the invariance $X \approx_d gX$ for all $g\in G$, then after observing data $x$, we should ``augment" our data with $gx$, for all $g\in G$. Finally, we should run our method on this data. However, this idea can be impractical, as the entire data can be too large to work with directly. Therefore, we will take a more principled approach and work through the logic step by step, considering all possibilities. We will eventually recover the above estimator as well. \\
	
{\bf Augmentation distribution \& General augmentations.}  We define the \emph{augmentation distribution} as the set of values
\begin{equation}
\label{eq:aug_distr}
\htheta(gx),\,\, g\in G.
\end{equation}
We think of $x = (x_1,\ldots,x_n)$ as the entire dataset, and of $g\in G$ being a group element summarizing the action on every data point. The augmentation distribution is simply the collection of values of the estimator we would get if we were to apply all group transforms to the data. It has a special role, because we think of each transform as equally informative, and thus each value of the augmentation distribution is an equally valid estimator.
	
We can also make \eqref{eq:aug_distr} a proper probability distribution by taking a random $g\sim\mbQ$. Then we can construct a final estimator by computing a summary statistic on this distribution, for instance, the mean
\begin{equation}
	\label{eq: augmentation_via_aug_distr}
	\htheta_G(x) = \E_{g\sim\mbQ}\htheta(gx).
\end{equation}
	
It is worth noticing that this summary statistic is exactly invariant, so that $\htheta_G(x) = \htheta_G(gx)$. Moreover, this estimator can be approximated in the natural way in practice via sampling $g_i\sim\mbQ$ independently: 
\begin{equation}
	\label{eq: approx_aug_distr}
\htheta_k(x) = \frac1k \sum_{i=1}^k\htheta(g_ix).
\end{equation}

{\bf Connection to previous approach.} To see how this connects to the previous ideas, let us consider $X = (X_1,\ldots,X_n)$, and let $\htheta$ be the ERM with loss function $L(\theta,\cdot)$. Consider the group $G^n = G\times \ldots \times G$ acting on $X$ elementwise. Then, the augmentation distribution is the set of values
\beqs
	\htheta(X_1,\ldots,X_n; g_1,\ldots, g_n):=\arg\min_\theta\frac{1}{n}\sum_{i\in[n]} L(\theta,g_iX_i),
\eeqs
where we range $g\in G$.
Then, the final estimator, according to \eqref{eq: augmentation_via_aug_distr}, would be 
\beqs
	\htheta_G(X_1,\ldots,X_n):=\E_{g_1,\ldots, g_n\sim Q}\arg\min_\theta\frac{1}{n}\sum_{i\in[n]} L(\theta,g_iX_i).
\eeqs
	Compared to the previous augmented ERM estimator, the current one changes the order of averaging over $G$ and minimization. Specifically, the previous one is $\arg\min\E_g R_n(\theta; gX)$, while the current one is $\E_g \arg\min R_n(\theta;gX)$.  If we know that the estimator is obtained from minimizing a loss function, then we can average that loss function; but in general we can only average the estimator, which justifies the current approach. 
	
The two approaches above are closer than one may think. We can view the SGD iteration \eqref{eq:aug_sgd} as an online optimization approach to minimize a randomized objective of the form $\sum_{i\in[n]} L(\theta,g_iX_i)$. This holds exactly if we take one pass over the data in a deterministic way, which is known as a type of incremental gradient method. In this case, minimizing the augmented ERM has a resemblance to minimizing the mean of the augmentation distribution. However, in practice, people take multiple passes over the data, so this interpretation is not exact. \\

{\bf Augmentation in sequence of estimators.} In the above generalization of data augmentation beyond ERM, we assumed only the bare minimum, meaning that the estimator $\htheta(x)$ exists. Suppose now that we have slightly more structure, and the estimator is part of a sequence of estimators $\htheta_n$, defined for all $n$. This is a mild assumption, as in general estimators can be embedded into sequences defined for all data sizes.  Then we can directly augment our dataset $X_1,\ldots, X_n$ by adding new datapoints. We can define augmented estimators in several ways. For instance, for any fixed $m$, we can compute the estimator on a uniformly resampled set of size $m$ from the data, applying uniform random transforms:
	\begin{align*}
	&\htheta_m(g_1 X_{i_1},g_2 X_{i_2},\ldots, ,g_m X_{i_m})\\
	&i_k \sim \textnormal{Unif}([n]),\,\, g_k\sim \mbQ.
	\end{align*}
This implicitly assumes a form of symmetry of the estimator with respect to its arguments. There are many variations: e.g., we may insist that $m$ should be a multiple of $n$, or we can include all datapoints. This leads us to a ``completely augmented'' estimator which includes all data and all transforms (assuming $|G|$ is finite)
\begin{align*}
	&\htheta_{n|G|}(\{g_j X_{i}\}_{i\in [n],j \in [|G|]}).
\end{align*}
	
The advantage of the above reasoning is that it allows us to design augmented/invariant learning procedures extremely generally.

\subsection{Connections and perspectives} \label{subsec:otherconn}
	
Our approach has connections to many important and well-known concepts in statistics and machine learning. Here we elaborate on those connections, which should help deepen our understanding of the problems under study.\\
	
{\bf Sufficiency.} The notion of sufficiency, due to Ronald A Fisher, is a fundamental concept in statistics. Given an observation (datapoint) $X$ from a statistical model $X\sim \mbP$, a statistic $T:=T(X)$ is said to be sufficient for a parameter $\theta:=\theta(\P)$ if the conditional distribution of $X|T=t$ does not depend on $\theta$ for almost any $t$. Effectively, we can reduce the data $X$ to the statistic $T$ without any loss of information about $\theta$.
A statistic $T$ is said to be minimal sufficient if any other sufficient statistic is a function of it.
	
In our setup, assuming the invariance $X =_d gX$, on the invariant subspace $\Theta_G$, the \emph{orbits} $G\cdot x:=\{gx|g\in G\}$ are minimal sufficient for $\theta$. More generally, the local invariance condition (where invariance only holds for a subset of the parameter space) implies that the group orbits are a \emph{locally sufficient} statistic for our model. From the perspective of statistical theory, this suggests that we should work with the orbits. However, this can only be practical under the following conditions: 
\benum
	\item We can conveniently compute the orbits, or we can conveniently find representatives;
	\item We can compute the distribution induced by the model on the orbits;
	\item We can compute estimators/learning rules defined on the orbits in a convenient way.
\eenum
	
This is possible in many cases \citep{lehmann1998theory,lehmann2005testing}, but in complex cases such as deep learning, some or all of these steps can be impractical. For instance, the set of transforms may include translations, rotations, scalings, shearings, color shifts, etc. How can we compute the orbit of an image? It appears that an explicit description would be hard to find. \\

{\bf Invariant representations.} The notions of invariant representations and features are closely connected to our approach. Given an observation $x$, and a group of transforms $G$ acting on $\mathcal{X}$, a feature $F: \mathcal{X} \to  \mathcal{Y}$ is invariant if $F(x) = F(gx)$ for all $x,g$. This definition does not require a probabilistic model.  By design, convolutional filters are trained to look for spatially localized features, such as edges, whose pixel-wise representation is invariant to location. 
In our setup, we have a group acting on the data. In that case, it is again easy to see that the orbits $G\cdot x:=\{gx|g\in G\}$ are the maximal invariant representations. 
	
Related work by Mallat, B\"olcskei and others \citep[e.g.,][]{mallat2012group,bruna2013invariant,wiatowski2018mathematical,anselmi2019symmetry} tries to explain how CNNs extract features, using ideas from harmonic analysis. They show that the features of certain models of neural networks (Deep Scattering Networks for Mallat) are increasingly invariant with respect to depth. \\

{\bf Equivariance.} The notion of equivariance is also key in statistics \citep[e.g.,][]{lehmann1998theory}. A statistical model is called equivariant with respect to a group $G$ acting on the sample space if there is an \emph{induced group} $G^*$ acting on the parameter space $\Theta$ such that for any $X\sim \P_\theta$, and any $g\in G$, there is a $g^*\in G^*$ such that $gX\sim \P_{g^*\theta}$.  Under equivariance, it is customary to restrict to equivariant estimators, i.e., those that satisfy $\htheta(gx) = g^*\htheta(x)$. Under some conditions, there are Uniformly Minimum Risk Equivariant (UMRE) estimators. 
	
Our invariance condition can be viewed as having a ``trivial" induced group $G^*$, which always acts as the identity. Then the equivariant estimators are those for which $\htheta(gx) = \htheta(x)$. Thus, equivariant estimators are invariant on the orbits. 
	
The above mentioned UMRE results crucially use that the induced group has large orbits on the parameter space  (or in the extreme case, is transitive), so that many parameter values are equivalent. In contrast, we have the complete opposite setting, where the orbits are singletons. Thus our setting is very different from classical equivariance. \\

{\bf Exact invariance in some statistical models.} The exact invariance in distribution $X=_d gX$ arises naturally in many familiar statistical settings. In permutation tests, we are testing a null hypothesis $H_0$, under which the distribution of the observed data $X_1,\ldots,X_n$ is invariant to some permutations. For instance, in two-sample tests against a mean shift alternative, we are interested to test the null of no significant difference between two samples. Under the null, the data is invariant to all permutations, and this can be the basis of statistical inference \citep[e.g.,][]{lehmann2005testing}. In this case, the group is the symmetric group of all permutations, and this falls under our invariance assumption. However, the goals in permutation testing are very different from our ones.

In time series analysis, we have observations $X=(\ldots, X_1, X_2,\ldots, X_t, \ldots)$ measured over time. Here, the classical notion of strict stationarity is the same as invariance under shifts, i.e., $(\ldots, X_t,X_{t+1},\ldots) =_d (\ldots,X_{t+1},X_{t+2},\ldots)$ (e.g., \citealt{brockwell2009time}). Thus, our invariance in distribution can capture an important existing notion in time series analysis. Moreover, data augmentation corresponds to adding all shifts of the time series to the data. Hence, it is connected to auto-regressive methods.

We discuss other connections, e.g., to nonparametric density estimation under symmetry constraints, and U-statistics, in other places of this paper. \\

{\bf Variance reduction in Monte Carlo.} Variance reduction techniques are widely used in Monte Carlo methods \citep{robert2013monte}.  Data augmentation can be viewed as a type of variance reduction, and is connected to other known techniques.  For instance, $f(gX)$ can be viewed as \emph{control} variates for the random variable $f(X)$. The reason is that $f(gX)$ has the same marginal distribution, and hence the same mean as $f(X)$. Taking averages can be viewed as a suboptimal, but universal way to combine control variates.  
	
We briefly mention that under a reflection symmetry assumption, data augmentation can also be viewed as a special case of the method of \emph{antithetic variates}. \\

{\bf Connection to data augmentation.} We can summarize the connections to the areas mentioned above. Data augmentation is \emph{computationally feasible approach} to \emph{approximately} learn on the orbits (which are both the minimal sufficient statistics and maximal invariant features). The computational efficiency is partly because we never explicitly compute or store the orbits and invariant features. 


\section{Theoretical results under exact invariance}\label{sec:exact_inv}
	
In this section, we present our theoretical results for data augmentation under the assumption of \emph{exact invariance}: $g X =_d X$ for almost all $g \sim \mbQ$ and $x \sim \P$.

\subsection{General estimators} \label{subsec:gen_est}
We start with some general results on variance reduction. The following lemma characterizes the bias and variance of a general estimator under augmentation:
\begin{lemma}[Invariance lemma]
	\label{lemma:exact_invariance_lemma}
	Let $f$ be an arbitrary function s.t. the map $(X, g) \mapsto f(gX)$ is in $L^2(\P \times \mbQ)$. Assume exact invariance holds for $\mbQ$-almost all $g\in G$. Let $\bar f(x) : =  \E_{g\sim \mbQ} f(gx)$ be the ``orbit average'' of $f$. Then:
	\benum
		\item For any $x$, $\bar f(x)$ is the conditional expectation of $f(X)$, conditional on the orbit: $\bar f(x) = \E[f(X)  | X\in Gx]$, where $Gx:= \{gx: g\in G\}$;
		\item Therefore, by the law of total expectation, the mean of $\bar f(X)$ and $f(X)$ coincide: $\E_{X\sim \P} f(X) = \E_{X\sim \P} \bar f(X)$;
		\item By the law of total covariance, the covariance of $f(X)$ can be decomposed as
		$$
		\C_{X\sim \P} f(X) = \C_{X\sim \P} \bar f(X) + \E_{X\sim \P} \C_{g\sim \mbQ} f(gX);
		$$
		\item Let $\varphi$ be any real-valued convex function. Then $\E_{X\sim \P} [\varphi(f(X))] \geq \E_{X\sim \P} [\varphi(\bar f(X))]$.
	\eenum
\end{lemma}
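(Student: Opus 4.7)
The proof will hinge on Part 1; Parts 2--4 then follow from standard tools (tower property, law of total covariance, conditional Jensen's inequality). So the main task is to show that the orbit average $\bar f(x) = \E_{g\sim\mbQ} f(gx)$ is a version of the conditional expectation $\E[f(X)\mid \sigma(Gx)]$, where $\sigma(Gx)$ is the sub-$\sigma$-algebra of $G$-invariant measurable sets in $\xx$.

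\textbf{Step 1 (core step).} To establish Part 1, I would verify the two defining properties of conditional expectation. First, $\bar f(X)$ is $\sigma(Gx)$-measurable: by translation invariance of the Haar probability measure $\mbQ$, for any $h\in G$ we have $\bar f(hx) = \int_G f(ghx)\,d\mbQ(g) = \int_G f(g'x)\,d\mbQ(g') = \bar f(x)$, so $\bar f$ is $G$-invariant and hence $\sigma(Gx)$-measurable. Second, for any $G$-invariant measurable set $A\subseteq\xx$, I would check $\int_A \bar f\,d\P = \int_A f\,d\P$. Applying Fubini (justified by the $L^2$ hypothesis, which gives $L^1$ integrability) and then exact invariance $gX =_d X$:
\begin{equation*}
\int_A \bar f(x)\,d\P(x) = \int_G \int_\xx \mathbf{1}_A(x) f(gx)\,d\P(x)\,d\mbQ(g) = \int_G \int_\xx \mathbf{1}_A(gx) f(gx)\,d\P(x)\,d\mbQ(g),
\end{equation*}
using $G$-invariance of $A$. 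Then the pushforward identity $\int h(gx)\,d\P(x) = \int h(y)\,d\P(y)$ (exact invariance applied to $h = \mathbf{1}_A f$) makes the inner integral equal to $\int_A f\,d\P$, and since $\mbQ$ is a probability measure we obtain $\int_A \bar f\,d\P = \int_A f\,d\P$. This identifies $\bar f(X)$ with the conditional expectation on orbits.

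\textbf{Step 2 (Parts 2--4).} Part 2 is an immediate application of the tower property: $\E f(X) = \E[\E[f(X)\mid \sigma(Gx)]] = \E \bar f(X)$. Part 3 follows from the law of total covariance applied to the same conditioning $\sigma$-algebra; the conditional covariance $\C[f(X)\mid X\in Gx]$ equals $\C_{g\sim\mbQ} f(gx)$ because, under exact invariance, the conditional distribution of $X$ given its orbit is the push-forward of $\mbQ$ (a consequence of Step 1 applied to indicator functions of Borel sets in $\xx$). Part 4 is the conditional Jensen inequality: for convex $\varphi$, $\varphi(\bar f(X)) = \varphi(\E[f(X)\mid\sigma(Gx)]) \leq \E[\varphi(f(X))\mid\sigma(Gx)]$, and taking an outer expectation gives $\E\varphi(\bar f(X)) \leq \E\varphi(f(X))$.

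\textbf{Main obstacle.} The only real subtlety is Step 1 — specifically, being careful about the measure-theoretic identification of the orbit $\sigma$-algebra and applying exact invariance in the right direction. Once the Fubini--plus--invariance calculation is in place, everything else is a one-line invocation of a standard identity. I would not belabor the analytic hypotheses (compactness of $G$, existence of Haar measure, measurability of the action), since these are already assumed in Section~\ref{sec:erm}; the $L^2(\P\times\mbQ)$ assumption on $(x,g)\mapsto f(gx)$ comfortably covers the applications of Fubini and the existence of the conditional variance used in Part 3.
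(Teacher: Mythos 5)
Your proposal is correct and follows essentially the same route as the paper: part 1 is established by verifying the partial-averaging property of conditional expectation via Fubini, exact invariance, and translation invariance of the Haar measure, and parts 2--4 then follow from the tower property, the law of total covariance, and Jensen. The only cosmetic difference is that you condition on the $\sigma$-algebra of all $G$-invariant sets while the paper works with the single orbit event $A=\{X\in Gx\}$ and the law of total covariance for $f(gX)$ under $\mbQ\times\P$; these yield the same decomposition.
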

\begin{proof}
	See Appendix \ref{subappend:proof_of_inv_lemma}.
\end{proof}
Figure \ref{fig:orbit_avg} gives an illustration of orbit averaging.\\

\begin{figure}[bt]
	\centering
	\includegraphics[scale=0.7]{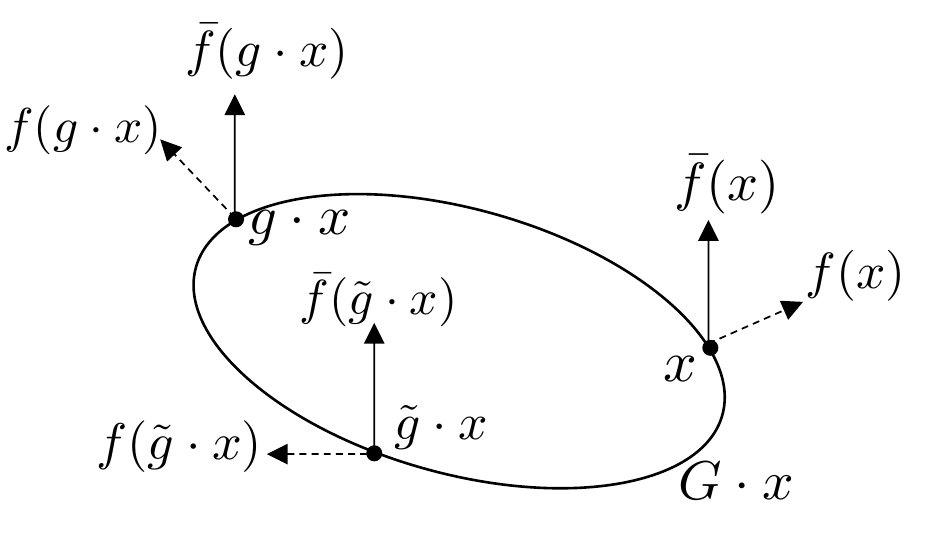}
	\caption{A pictorial illustration of orbit averaging. The circle represents the orbit $G\cdot x=\{gx: g\in G\}$, and $x, gx, \tilde gx$ are three different points on this orbit. Although $f(x), f(gx), f(\tilde gx)$ may be different, after orbit averaging, $\bar f(x), \bar f(gx), \bar f(\tilde gx)$ all take the same value.}
	\label{fig:orbit_avg}
\end{figure}

{\bf Augmentation leads to variance reduction.}  This lemma immediately implies that data augmentation has favorable variance reduction properties. For general estimators, we obtain a direct consequence: 
\begin{proposition}[Augmentation decreases MSE of general estimators] 
	\label{prop:cov_of_general_estimators}
	Assume exact invariance holds. Consider an estimator $\htheta(X)$ of $\thetanull$, and its augmented version $\htheta_G(X) = \E_{g\sim \mbQ} \htheta(gX)$. The bias of the augmented estimator is the same as the bias of the original estimator, and the covariance matrix decreases in the Loewner order:
		$$\Cov{\htheta_G(X) } \preceq \Cov{\htheta(X) }.$$
	Hence, the mean-squared erro (MSE) decreases by augmentation. Moreover, for any convex loss function $L(\thetanull, \cdot)$, we have 
	$$
	\E L(\thetanull, \htheta(X)) \geq \E L(\thetanull, \htheta_G(X)).
	$$
\end{proposition}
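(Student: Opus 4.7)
The proof should be essentially a direct corollary of the Invariance Lemma (Lemma~\ref{lemma:exact_invariance_lemma}), applied with $f = \htheta$ so that the orbit average $\bar f$ is exactly the augmented estimator $\htheta_G$. My plan is therefore to work through each of the four conclusions by matching them to a specific part of the lemma; the main care needed is in extending the lemma, which is stated for a single real-valued $f$, to vector-valued estimators.

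First I would address the bias claim. Applying part (2) of Lemma~\ref{lemma:exact_invariance_lemma} componentwise to the coordinates of $\htheta$ gives $\E_{X\sim\P}\htheta(X) = \E_{X\sim\P}\htheta_G(X)$, and subtracting $\thetanull$ shows the two bias vectors coincide. Next, for the Loewner inequality, I would invoke the vector-valued analogue of part (3): by the law of total covariance with conditioning on the orbit $G\cdot X$ (and using part (1) of the lemma to identify $\htheta_G(x)$ as the conditional expectation of $\htheta(X)$ given $X\in Gx$), one obtains
\[
\Cov_{X\sim\P}\htheta(X) \;=\; \Cov_{X\sim\P}\htheta_G(X) \;+\; \E_{X\sim\P}\Cov_{g\sim\mbQ}\htheta(gX).
\]
The last term is an expectation of PSD matrices, hence PSD, which gives $\Cov \htheta_G(X) \preceq \Cov \htheta(X)$ in the Loewner order. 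Extending part (3) to the vector case is the one step that is not a verbatim quote of the lemma, but it is a routine consequence of applying the scalar version to all linear combinations $u^\top\htheta$ of the coordinates.

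The MSE bound then follows immediately: since $\mathrm{MSE}(\htheta) = \|\mathrm{bias}(\htheta)\|^2 + \tr(\Cov\htheta)$, the bias is unchanged, and taking the trace of the Loewner inequality gives $\tr(\Cov\htheta_G) \leq \tr(\Cov\htheta)$. For the convex loss inequality, I would apply part (4) of Lemma~\ref{lemma:exact_invariance_lemma} with $\varphi(\cdot) := L(\thetanull, \cdot)$, which is convex by hypothesis, and $f = \htheta$; the conclusion $\E\varphi(f(X)) \geq \E\varphi(\bar f(X))$ is exactly the desired inequality.

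The hard part, such as it is, lies entirely in Lemma~\ref{lemma:exact_invariance_lemma} itself (in particular the identification of $\bar f$ with the conditional expectation on the orbit, which requires invariance); given that lemma, this proposition is a bookkeeping exercise. The only substantive check is the vector-valued upgrade of the covariance decomposition and verifying the $L^2$ integrability hypothesis of the lemma is inherited from the implicit assumption that $\htheta$ has finite second moment (necessary for the statement about covariance and MSE to be meaningful in the first place).
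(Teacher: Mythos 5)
Your proposal is correct and matches the paper's (implicit) argument: the paper gives no separate proof, presenting the proposition as a direct consequence of Lemma~\ref{lemma:exact_invariance_lemma} applied with $f=\htheta$, exactly as you do. Note only that the lemma as stated already covers vector-valued $f$ (part 3 is a matrix covariance decomposition and part 4 takes $\varphi$ real-valued on the range of $f$), so your ``vector-valued upgrade'' via linear combinations, while harmless, is not actually needed.
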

The estimator $\htheta_G(X)$ is an instance of data augmentation via \emph{augmentation distribution} \eqref{eq: augmentation_via_aug_distr}. We see that the augmented estimator is no worse than the original estimator according to any convex loss. In some cases, using such a strategy gives an unexpectedly large efficiency gain (See Section \ref{subsec:lin_reg_aug_distr} for an example). 

For ERM and MLE, the claim implies that the variance of the augmented loss, log-likelihood, and score functions all decreases.     
For other estimators based on the augmentation distribution, such as the median, one can show that other measures of error, such as the mean absolute error decrease. 
This shows how data augmentation can be viewed as a form of \emph{algorithmic regularization}. Indeed, the mean behavior of loss/log-likelihood, etc are all unchanged, but the variance decreases. This indeed shows that augmentation is a natural form of regularization. \\
	
{\bf Connection to Rao-Blackwell theorem.} Lemma \ref{lemma:exact_invariance_lemma} has the same flavor as the celebrated Rao-Blackwell theorem, where a ``better'' estimator is constructed from a preliminary estimator by conditioning on a sufficient statistic. In fact, the orbit $GX$ is a sufficient statistic in many cases and we provide a classical example below to illustrate this point:
\begin{example}[U-statistic as an augmented estimator]
	Consider data $X_1, ..., X_n$ sampled i.i.d. from some distribution $\P$. We are interested in estimating some functional $\theta$ of $\P$. Suppose we have a crude preliminary estimator $\htheta(X_1, ..., X_r)$, which takes $r \le n$ arguments. Let $G$ be the group acting on $(X_1, ..., X_n)$ by randomly selecting $r$ samples without replacement. Then the augmented estimator is
	$$
		\E_{g \sim \mbQ}[\htheta(g(X_1,..., X_n))] =\frac{1}{\binom{n}{r}} \sum_{i_1\neq i_2 \neq \cdots \neq i_r} \htheta(X_{i_1}, ..., X_{i_r}),
	$$
	where the summation is taken over the set of all unordered subsets $\{i_1, ..., i_r\}$ of $r$ different integers chosen from $[n]$.
	This is the U-statistic of order $r$ with kernel $\htheta$. The statistical properties of the U-statistic are better than its non-augmented counterpart, which does not use all the data. There are well-known explicit formulas for the variance reduction \citep[e.g.,][]{van1998asymptotic}.
\end{example}

{\bf Beyond groups.} Some of our conclusions hold without requiring a group structure on the set of transforms. Instead, it is enough to consider a set (i.e., a semigroup, because the identity always makes sense to include) of transforms $T:\xx\to\xx$, with a probability measure $\mbQ$ on them. This is more realistic in some applications, e.g., in deep learning where we also consider transforms such as cropping images, which are not invertible. Then we still get the variance reduction as in Lemma \ref{lemma:exact_invariance_lemma} (specifically, part 2, 3, and 4 still hold). Therefore, under appropriate regularity conditions, we also get improved performance for augmented ERM, as stated previously, as well as in the next sections. However, some of the results and interpretations do not hold in this more general setting. Specifically, the orbits are not necessarily defined anymore, and so we cannot view the augmented estimators as conditional expectations over orbits.
The semigroup extension may allow us to handle methods such as subsampling or the bootstrap in a unified framework.
	
\subsection{ERM / M-estimators}\label{subsec:theory_for_erm_under_exact_inv}
We now move on to present our results on the behavior of ERM. Recall our setup from Section \ref{sec:erm}, namely that 
\begin{align*}
	& \thetanull  \in \arg\min_{\theta\in \Theta} \E L(\theta, X) , \qquad  &\theta_G  \in \arg \min_{\theta \in \Theta} \E \bar L(\theta, X), \\
	& \htheta_n  \in \arg\min_{\theta \in \Theta} \frac{1}{n} \sum_{i = 1}^n L(\theta, X_i), \qquad  &\htheta_{n, G} \in \arg\min_{\theta\in \Theta}  \frac{1}{n} \sum_{i= 1}^n \bar L(\theta, X_i),
\end{align*}
where $\Theta$ is some parameter space.
Under exact invariance, it is easy to see that $\E L(\theta, X) = \E \bar L(\theta, X)$, and hence there is a one-to-one correspondence between $\thetanull$ and $\theta_G$. To evaluate an estimator $\htheta$, we may want
\benum
	\item small generalization error: we want $\E L(\htheta, X) - \E L(\thetanull, X)$ to be small;
	\item small parameter estimation error: we want $\|\htheta - \thetanull\|_2$ to be small.
	\eenum
Comparing $\htheta_n$ with $\htheta_{n, G}$, we will see in Section \ref{subsubsec:effect_of_loss_avg} that the reduction in generalization error can be quantified by averaging the \emph{loss function} over the orbit, whereas in Section \ref{subsubsec:effect_of_gradient_avg}, it is shown that the reduction in parameter estimation error can be quantified by averaging the \emph{gradient} over the orbit.
	
\subsubsection{Effect of loss-averaging} \label{subsubsec:effect_of_loss_avg}
To obtain a bound on the generalization error, we first present what can be deduced quite directly from known results on Rademacher complexity. The point of this section is mainly pedagogical, i.e., to remind readers of some basic definitions, and to set a baseline for the more sophisticated results to follow. We recall the classical approach of decomposing the generalization error into terms that can be bounded via concentration and Rademacher complexity \citep{bartlett2002rademacher,shalev2014understanding}:
\begin{align*}
	\E L(\htheta_n, X) - \E L(\thetanull, X) & =  \E L(\htheta_n, X) - \frac{1}{n}\sum_{i=1}^n L(\htheta_n, X_i) + \frac{1}{n}\sum_{i=1}^n L(\htheta_n, X_i) - \E L(\thetanull, X).
\end{align*}
The second half of the RHS above can be bounded as
\begin{align*}
	\frac{1}{n}\sum_{i=1}^n L(\htheta_n, X_i) - \E L(\thetanull, X) 
	& \leq \frac{1}{n}\sum_{i=1}^n L(\thetanull, X_i) - \E L(\thetanull, X),
	\end{align*}
because $\htheta_n$ is a minimizer of the empirical risk.
Hence we arrive at
\begin{align*}
	\E L(\htheta_n, X) - \E L(\thetanull, X) & \leq \E L(\htheta_n, X) - \frac{1}{n}\sum_{i=1}^n L(\htheta_n, X_i) + \frac{1}{n}\sum_{i=1}^n L(\thetanull, X_i) - \E L(\thetanull, X) \\ 
	& \leq  \sup_{\theta\in \Theta} \bigg|\frac{1}{n} \sum_{i=1}^n L(\theta, X_i) - \E L(\theta, X)\bigg| + \bigg(\frac{1}{n}\sum_{i=1}^n L(\thetanull, X_i) - \E L(\thetanull, X)\bigg).
\end{align*}
Using exact invariance, a similar computation gives
$$
	\E L(\htheta_{n, G}, X) - \E L(\thetanull, X) \leq  \sup_{\theta\in \Theta} \bigg|\frac{1}{n} \sum_{i=1}^n \bar L(\theta, X_i) - \E \bar L(\theta, X)\bigg| + \bigg(\frac{1}{n}\sum_{i=1}^n \bar L(\thetanull, X_i) - \E \bar L(\thetanull, X)\bigg).
$$
To control the two terms in the RHS, we need to show that both $n^{-1}\sum_{i=1}^n L(\theta, X_i)$ and $n^{-1}\sum_{i=1}^n \bar L(\theta, X_i)$ concentrate around their means, respectively, uniformly for all $\theta \in \Theta$. Intuitively, as $\bar L$ is an averaged version of $L$, we would expect that the concentration of $\bar L$ \emph{happens at a faster rate} than that of $L$, hence giving a tighter generalization bound. We will make this intuition rigorous in the following theorem:

\begin{theorem}[Rademacher bounds under exact invariance] \label{thm:rademacher_bound_under_exact_inv}
	Assume the loss $L(\cdot, \cdot) \in [0, 1]$ and assume exact invariance holds. Then with probability at least $1-\delta$ over the draw of $X_1, ..., X_n$, we have
	\begin{align*}
		\E L(\htheta_n, X) - \E L(\thetanull, X) & \leq 2 \rad_n(L \circ \Theta) +  \sqrt{\frac{2\log 2/\delta}{n}}
		\textnormal{   \,\,\,\,\,\, (Classical Rademacher bound)}
		\\
		\E L(\htheta_{n,G}, X) - \E L(\thetanull, X) & \leq 2 \rad_n(\bar L \circ \Theta) +  \sqrt{\frac{2\log 2/\delta}{n}} ,
		\textnormal{   \,\,\,\, (Implication for data augmentation)}
	\end{align*}
	where $\rad_n(L \circ \Theta)$ and $\rad_n(\bar L \circ \Theta)$ are the Rademacher complexity of the original loss class and the augmented loss class, repectively, defined as
	\begin{align*}
		&\rad_n(L\circ \Theta)  = \E \sup_{\theta \in \Theta} \bigg| \frac{1}{n} \sum_{i= 1}^n \ep_i L(\theta, X_i)  \bigg|,  \qquad \rad_n(\bar L\circ \Theta)  = \E \sup_{\theta \in \Theta} \bigg| \frac{1}{n} \sum_{i= 1}^n \ep_i \bar L(\theta, X_i)  \bigg|, \\
		& \ep_i  \overset{\textnormal{i.i.d.}}{\sim} \textnormal{Rademacher},  \ \{\ep_i\}_{i=1}^n \indep \{X_i\}_{i=1}^n,
	\end{align*}
	where the expectation is taken over both $\{\ep_i\}_1^n$ and $\{X_i\}_1^n$.
	Moreover, augmentation \emph{decreases} the Rademacher complexity of the loss class:
	$$
		\rad_n(\bar L \circ \Theta) \leq \rad_n(L\circ \Theta).  
	$$
\end{theorem}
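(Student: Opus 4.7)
The two generalization inequalities are, up to cosmetic adjustments, the textbook Rademacher-complexity bound for ERM; the substantive new content is the monotonicity statement $\rad_n(\bar L \circ \Theta) \leq \rad_n(L \circ \Theta)$, which is where the invariance assumption really enters. My plan is to dispose of the first two inequalities quickly by recalling the standard recipe and then to spend the bulk of the argument on the comparison of complexities.

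For the classical bound, I would apply McDiarmid's inequality to $\Phi(X_1,\ldots,X_n) := \sup_{\theta \in \Theta} \{\E L(\theta, X) - n^{-1} \sum_i L(\theta, X_i)\}$, whose bounded-difference constants are $1/n$ because $L \in [0,1]$; this gives $\Phi \leq \E \Phi + \sqrt{\log(2/\delta)/(2n)}$ with probability $1 - \delta/2$. The standard symmetrization lemma yields $\E \Phi \leq 2\, \rad_n(L \circ \Theta)$. Combining with a Hoeffding bound on the centered sample mean at the fixed point $\thetanull$ (using the other $\delta/2$ of probability mass) and the two-term decomposition of the generalization error already written in the excerpt produces the first inequality. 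Exact invariance gives $\E \bar L(\theta, X) = \E L(\theta, X)$ for every $\theta$, and $\bar L$ inherits the range $[0,1]$ from $L$ since it is an average of values in $[0,1]$, so the identical argument applied to $\bar L$ delivers the second inequality.

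The comparison of Rademacher complexities is where exact invariance is actually used. The plan is to first expand $\bar L(\theta, X_i) = \E_{g_i \sim \mbQ} L(\theta, g_i X_i)$ and pull the $n$ (jointly independent) expectations over $g_1,\ldots,g_n$ outside the $i$-sum by linearity. Applying Jensen's inequality to the convex map $|\cdot|$ pushes the expectation over $g$ outside the absolute value, and moving it outside the supremum only increases the bound:
$$\rad_n(\bar L \circ \Theta) \leq \E_{g_1,\ldots,g_n}\, \E_{\ep, X} \sup_{\theta \in \Theta} \bigg| \frac{1}{n} \sum_{i=1}^n \ep_i L(\theta, g_i X_i) \bigg|.$$
Exact invariance, together with independence of the $g_i$'s from the $X_i$'s and from each other, implies that $(g_1 X_1,\ldots, g_n X_n)$ has the same joint distribution as $(X_1,\ldots,X_n)$; hence the inner expectation equals $\rad_n(L \circ \Theta)$ for $\mbQ^{\otimes n}$-almost every $(g_1,\ldots,g_n)$, and the outer expectation reproduces it.

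The only technical care required is the order in which expansion, Jensen, and the $\sup$-exchange are performed: one must rewrite $\bar L$ as an expectation \emph{and} pull the $n$ expectations outside the $i$-sum \emph{before} invoking Jensen, since applying Jensen directly to $\bar L$ would hide precisely the $g_i$'s whose joint invariance we need. Measurability of the suprema (under standard separability of $\Theta$) and integrability of all quantities (inherited from $L \in [0,1]$ and compactness of $G$) are routine and pose no real obstacle.
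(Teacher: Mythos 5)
Your proof is correct and follows essentially the same route as the paper's: the paper derives this theorem as the exact-invariance special case of Theorem \ref{thm:rademacher_bound_under_approx_inv}, whose proof in Appendix \ref{subappend:proof_of_rademacher_bound_under_approx_inv} uses exactly the McDiarmid-plus-symmetrization recipe for both loss classes and establishes $\rad_n(\bar L\circ\Theta)\le\rad_n(L\circ\Theta)$ by rewriting $\bar L$ as an expectation over $g$, applying Jensen's inequality to pull $\E_g$ outside the supremum of the absolute value, and then invoking $gX=_d X$ (the extra Wasserstein bias terms in the general proof vanish under exact invariance). The only cosmetic difference is that the paper applies a single shared $g$ to all $n$ data points in the Jensen step, whereas you use independent $g_1,\ldots,g_n$; both yield the same conclusion since in either case $(g_1X_1,\ldots,g_nX_n)=_d(X_1,\ldots,X_n)$.
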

\begin{proof}
	This is a special case of Theorem \ref{thm:rademacher_bound_under_approx_inv}. We refer the readers to Appendix \ref{subappend:proof_of_rademacher_bound_under_approx_inv} for a proof.
\end{proof}
The first generalization bound is a standard result. It can be viewed as an intermediate between Theorems 26.3 and Theorem 26.5 of \cite{shalev2014understanding}, part 3, because it is a high-probability bound (like 26.5) for expected generalization error (like 26.3). The second bound simply applies the first one to data augmentation, and the point is that we obtain a sharper result.

We also note that the decrease of the Rademacher complexity under augmentation fits into the ``Rademacher calculus'' \cite{shalev2014understanding}, along with results such as bounds on the Rademacher complexity under convex combinations and Lipschitz transforms. The current claim holds by inspection.

	
	\subsubsection{Effect of gradient-averaging} \label{subsubsec:effect_of_gradient_avg}
	Just as the averaged loss function concentrates faster around its mean, the averaged gradient also concentrates faster. Specifically, an application of Lemma \ref{lemma:exact_invariance_lemma} gives
	$$
	\C \nabla L(\theta, X) = \C \nabla \bar L(\theta, X) + \E_{X\sim \P}\C_{g\sim\mbQ} \nabla L(\theta, gX). 
	$$
	If we assume the loss function is strongly convex w.r.t. the parameter, then this observation, along with a recent result from \cite{foster2019complexity}, gives a bound of the parameter estimation error for both $\htheta_n$ and $\htheta_{n, G}$:
\begin{theorem}[Data augmentation reduces variance in ERM] 
\label{thm:bound_by_stoch_opt_under_exact_inv}
	Assume the loss $L(\cdot, x)$ is $\lambda$-strongly convex and assume exact invariance holds. Then
	\begin{align*}
		\E\|\htheta_{n}-\thetanull\|_2^2 & \le \frac{4}{\lambda^2 n} \tr\bigg(\C \nabla L(\thetanull, X)\bigg) \qquad\qquad\qquad\qquad \ \ \ \ 
		\textnormal{(Classical ERM bound)}
		\\
		\E\|\htheta_{n, G} - \thetanull\|_2^2 & \le \frac{4}{\lambda^2 n}  \tr\bigg(\C \nabla L(\thetanull, X) - \E_X\C_g \nabla L(\theta, gX) \bigg) \\
		& \qquad\qquad\qquad\qquad\qquad\qquad\qquad\qquad \textnormal{(Implication for data augmentation)}.
	\end{align*}
\end{theorem}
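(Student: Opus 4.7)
The plan is to combine the variance decomposition from Lemma \ref{lemma:exact_invariance_lemma} with a black-box bound on the ERM under strong convexity. The first inequality will be a direct invocation of the \cite{foster2019complexity} ERM bound; the second inequality will follow by applying the same bound to the augmented loss $\bar L$ and simplifying the resulting gradient covariance via the law of total covariance.

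\textbf{Step 1 (Classical bound).} First I would state and invoke the Foster--Sekhari--Sridharan result for strongly convex ERM: if $L(\cdot, x)$ is $\lambda$-strongly convex and $\thetanull$ is a minimizer of $\E L(\theta, X)$, then
\begin{equation*}
    \E\|\htheta_{n} - \thetanull\|_2^2 \;\le\; \frac{4}{\lambda^2 n}\,\tr\bigl(\C\,\nabla L(\thetanull, X)\bigr).
\end{equation*}
This gives the first claim directly.

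\textbf{Step 2 (Set up the augmented problem).} Next I would verify that $\bar L$ satisfies the hypotheses needed to reuse the same bound. Since $\bar L(\theta, x) = \int_G L(\theta, gx)\,d\mbQ(g)$ is an average of $\lambda$-strongly convex functions of $\theta$, it is itself $\lambda$-strongly convex in $\theta$. Under exact invariance, $\E L(\theta, X) = \E \bar L(\theta, X)$ for all $\theta$, so $\thetanull$ also minimizes the population augmented risk. Applying the Foster--Sekhari--Sridharan bound with $L$ replaced by $\bar L$ yields
\begin{equation*}
    \E\|\htheta_{n, G} - \thetanull\|_2^2 \;\le\; \frac{4}{\lambda^2 n}\,\tr\bigl(\C\,\nabla \bar L(\thetanull, X)\bigr).
\end{equation*}

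\textbf{Step 3 (Gradient variance decomposition).} The last step is to rewrite $\C\,\nabla\bar L(\thetanull, X)$ in the claimed form. Under regularity allowing differentiation under the integral, $\nabla \bar L(\theta, x) = \E_{g\sim\mbQ} \nabla L(\theta, gx)$, i.e., the orbit-averaged gradient. Applying part 3 of Lemma \ref{lemma:exact_invariance_lemma} to the vector-valued function $f = \nabla L(\thetanull, \cdot)$ gives the law of total covariance
\begin{equation*}
    \C\,\nabla L(\thetanull, X) \;=\; \C\,\nabla \bar L(\thetanull, X) \;+\; \E_{X}\,\C_{g}\,\nabla L(\thetanull, gX),
\end{equation*}
from which $\C\,\nabla \bar L(\thetanull, X) = \C\,\nabla L(\thetanull, X) - \E_{X}\,\C_{g}\,\nabla L(\thetanull, gX)$. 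Substituting into the bound from Step 2 and taking traces completes the proof.

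\textbf{Main obstacle.} The substantive content of the theorem is entirely in the Foster--Sekhari--Sridharan black-box bound; once it is invoked, the argument is a one-line application of Lemma \ref{lemma:exact_invariance_lemma} to the gradient. The main care point is therefore purely technical: justifying the interchange $\nabla \E_g L = \E_g \nabla L$ (standard under dominated convergence/integrability of $\nabla L(\theta, gX)$) and confirming that the Foster et al.\ regularity conditions (e.g., measurability, finiteness of the gradient variance, existence of a unique minimizer on $\Theta$) transfer from $L$ to $\bar L$. Both are straightforward given strong convexity and the compactness of $G$ with Haar measure.
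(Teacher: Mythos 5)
Your proposal is correct and matches the paper's proof: both invoke the Foster et al.\ strongly convex ERM bound, observe that strong convexity is preserved under orbit averaging so the same bound applies to $\bar L$, and then apply part 3 of Lemma \ref{lemma:exact_invariance_lemma} to the gradient to obtain the covariance decomposition. Your write-up is somewhat more explicit about the regularity/interchange details, but the argument is the same.
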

\begin{proof}
	The proof is simply a matter of checking the conditions required by Theorem 7 of \cite{foster2019complexity}. It is not hard to see that the strong convexity is preserved under averaging, and so the strong convexity constant of $\theta \to \bar L(\theta,x)$ is at least as large as that of $\theta \to L(\theta,x)$. Then invoking part 3 of Lemma \ref{lemma:exact_invariance_lemma} gives the desired result.
\end{proof}
	
The above result is similar to Theorem \ref{thm:rademacher_bound_under_exact_inv} in that it gives a sharper upper bound on the estimation error.

\subsubsection{Asymptotic results} \label{subsubsec:asymp_res_under_exact_invariance}
The finite sample results in Theorem \ref{thm:rademacher_bound_under_exact_inv} and \ref{thm:bound_by_stoch_opt_under_exact_inv} do not precisely tell us how much we gain. This motivates us to consider an asymptotic regime where we can characterize precisely how much we gain by data augmentation.

We assume the sample space $\mathcal{X} \subseteq \RR^d$ and the parameter space $\Theta \subseteq \RR^p$. We will consider the classical under-parameterized regime, where both $d$ and $p$ are fixed and $n \to \infty$. The over-parameterized regime will be handled in Section \ref{sec:overparam_2l_nets}. Under weak regularity conditions, it is a well-known result that $\htheta_n$ is asymptotically normal with covariance given by the inverse Fisher information (see, e.g, \citealt{van1998asymptotic}). We will see that a similar asymptotic normality result holds for $\htheta_{n, G}$ and its asymptotic covariance is explicitly computable. This requires some standard assumptions, and we state them for completeness below.

\begin{assump}[Regularity of the population risk minimizer]
	\label{assump:regularity_of_thetanull}
	The minimizer $\thetanull$ of the population risk is well separated: for any $\ep > 0$, we have
	$$
		\sup_{\theta: \|\theta - \theta_0 \| \geq \ep} \E L(\theta,X) > \E L(\thetanull,X).
	$$
\end{assump}
\begin{assump}[Regularity of the loss function]
	\label{assump:regularity_of_loss}
	For the loss function $L(\theta, x)$, we assume that
	\benum
		\item uniform weak law of large number holds: 
		$$\sup_{\theta \in \Theta} |\frac{1}{n}\sum_{i=1}^n L(\theta,X_i) - \E L(\theta,X)|  \overset{p}{\to} 0;$$
		\item for each $\theta \in \Theta$, the map $x \mapsto L(\theta, x)$ is measurable;
		\item the map $\theta \mapsto L(\theta, x)$ is differentiable at $\thetanull$ for almost every $x$;
		\item there exists a $L^2(\P)$ function $\dot{L}$ s.t. for almost every $x$ and for every $\theta_1, \theta_2$ in a neighborhood of $\thetanull$, we have $$|L(\theta_1,x) - L(\theta_2,x) | \leq \dot L(x) \|\theta_1 - \theta_2 \|;$$
		\item the map $\theta \mapsto \E L(\theta, X)$ admits a second-order Taylor expansion at $\thetanull$ with non-singular second derivative matrix $V_{\thetanull}$. 
	\eenum
\end{assump}

Note that the two assumptions above can be defined in a similar fashion for the pair $(\theta_G, \bar L)$. The lemma below says that such a re-definition is unnecessary under exact invariance.

\begin{lemma}[Regularity of the augmented loss]
	\label{lemma:regularity_of_augmented_loss}
	For each $\theta \in \Theta$, assume the map $(X, g) \mapsto L(\theta, gX)$ is in $L^1(\P\times \mbQ)$. Assume exact invariance holds. If the pair $(\thetanull, L)$ satisfies Assumption \ref{assump:regularity_of_thetanull} and \ref{assump:regularity_of_loss}, then the two assumptions also hold for the pair $(\theta_G, \bar L)$.
	\end{lemma}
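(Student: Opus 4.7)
The plan is to reduce each claim about $(\theta_G, \bar L)$ to the corresponding claim about $(\thetanull, L)$ via Fubini and exact invariance. The central identity is
$$
\E \bar L(\theta, X) = \int_G \E_X L(\theta, gX) \, d\mbQ(g) = \int_G \E L(\theta, X) \, d\mbQ(g) = \E L(\theta, X),
$$
valid for all $\theta$ by the $L^1(\P \times \mbQ)$ hypothesis and the fact that $gX =_d X$ for $\mbQ$-a.e.\ $g$. This immediately transfers Assumption \ref{assump:regularity_of_thetanull} with the choice $\theta_G = \thetanull$, and likewise condition (v) of Assumption \ref{assump:regularity_of_loss}, since the second-order Taylor expansion of $\E L(\theta, X)$ at $\thetanull$ is verbatim the expansion of $\E \bar L(\theta, X)$, so the Hessian $V_{\thetanull}$ is the same and non-singular for both.

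For condition (ii), measurability of $x \mapsto \bar L(\theta, x)$ follows from Tonelli applied to $(x, g) \mapsto L(\theta, gx)$. For (iv), I would set $\bar{\dot{L}}(x) := \int_G \dot L(gx) \, d\mbQ(g)$; Jensen's inequality together with exact invariance gives $\E \bar{\dot{L}}(X)^2 \le \E_{g,X} \dot L(gX)^2 = \E \dot L(X)^2 < \infty$, so $\bar{\dot{L}} \in L^2(\P)$, and linearity of the integral yields $|\bar L(\theta_1, x) - \bar L(\theta_2, x)| \le \bar{\dot{L}}(x) \|\theta_1 - \theta_2\|$ in a neighborhood of $\thetanull$. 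For the differentiability condition (iii), the hypothesis provides a $\P$-null set $N$ outside which $\theta \mapsto L(\theta, y)$ is differentiable at $\thetanull$; to exchange $\nabla_\theta$ with $\int_G \cdot \, d\mbQ(g)$ I need $\mbQ(\{g : gx \in N\}) = 0$ for $\P$-a.e.\ $x$, which follows from Fubini and exact invariance:
$$
\E_X \mbQ(\{g : gX \in N\}) = \E_{g \sim \mbQ} \P(gX \in N) = \P(X \in N) = 0.
$$
Dominated convergence with the dominator $\dot L(gx)$ (which is $\mbQ$-integrable in $g$ for $\P$-a.e.\ $x$, again by Fubini applied to $\E \dot L(X) < \infty$) then produces $\nabla_\theta \bar L(\thetanull, x) = \int_G \nabla_\theta L(\thetanull, gx) \, d\mbQ(g)$ for $\P$-a.e.\ $x$. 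Condition (i), the uniform WLLN, then follows by a standard $\epsilon$-net argument on a compact neighborhood of $\thetanull$: cover by finitely many $\epsilon$-balls, use the Lipschitz bound from (iv) to control oscillations within each ball via $n^{-1}\sum_i \bar{\dot{L}}(X_i) \to \E \bar{\dot{L}}(X)$, and invoke the pointwise LLN at the ball centers; outside that neighborhood, the original uniform WLLN for $L$ combined with well-separatedness supplies the necessary control.

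The main obstacle I anticipate is the null-set bookkeeping in step (iii): this is precisely the place where exact invariance does non-trivial work, and the same Fubini-plus-invariance trick must be reapplied to justify the integrability of the dominator. All remaining verifications amount to routine applications of Fubini, Jensen, and linearity of the integral.
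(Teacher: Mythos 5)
Most of your proposal tracks the paper's own proof: the identity $\E \bar L(\theta,X)=\E L(\theta,X)$ via Fubini and exact invariance disposes of Assumption \ref{assump:regularity_of_thetanull} and condition (v); the envelope $\bar{\dot L}(x)=\E_g \dot L(gx)$ with Jensen plus invariance handles (iv); and your null-set bookkeeping for (iii) — showing $\E_X\,\mbQ(\{g:gX\in N\})=\P(X\in N)=0$ and then dominating the difference quotient by a $\mbQ$-integrable function — is exactly the mechanism the paper uses (and you are in fact slightly more explicit about why the exceptional set can be ignored). Those parts are fine.

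The gap is in your treatment of condition (i), the uniform weak law. Your $\epsilon$-net argument relies on the Lipschitz envelope from (iv), but that envelope is only guaranteed on a \emph{neighborhood} of $\thetanull$, so the covering argument only controls $\sup_\theta|n^{-1}\sum_i\bar L(\theta,X_i)-\E\bar L(\theta,X)|$ locally. Your claim that "outside that neighborhood, the original uniform WLLN for $L$ combined with well-separatedness supplies the necessary control" does not close this: the uniform WLLN for $L$ concerns the function class $\{x\mapsto L(\theta,x)\}$, not $\{x\mapsto\bar L(\theta,x)\}$, and well-separatedness is a statement about the population minimizer that says nothing about uniform convergence of empirical averages. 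The assumption to be verified is a supremum over \emph{all} of $\Theta$, so this step would fail as written. The paper avoids the issue entirely with a global argument: two applications of Jensen give
\begin{align*}
\sup_{\theta\in\Theta}\Bigl|\tfrac{1}{n}\sum_{i=1}^n\bar L(\theta,X_i)-\E\bar L(\theta,X)\Bigr|
\;\le\;
\E_g\,\sup_{\theta\in\Theta}\Bigl|\tfrac{1}{n}\sum_{i=1}^n L(\theta,gX_i)-\E L(\theta,gX)\Bigr|,
\end{align*}
and for each fixed $g$ the inner supremum has the same law as the un-augmented one by exact invariance, hence is $o_p(1)$; no compactness or Lipschitz structure is needed. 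Replacing your $\epsilon$-net step with this bound repairs the proof.
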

\begin{proof}
	See Appendix \ref{subappend:proof_of_regularity_of_augmented_loss}.
\end{proof}

By the above lemma, we conclude that $\htheta_{n, G}$ is also asymptotically normal, and a covariance calculation gives the following theorem. We present the classical results for M-estimators in parallel for clarity.

\begin{theorem}[Asymptotic normality for the augmented estimator]
	\label{thm:asymp_normality_augmented_estimator}
	Assume $\Theta$ is open and the conditions in Lemma \ref{lemma:regularity_of_augmented_loss} hold. Then $\htheta_n$ and $\htheta_{n, G}$ admit the following Bahadur representation:
	\begin{align*}
		\sqrt{n} (\htheta_{n} - \thetanull) 
		& = 
		\frac{1}{\sqrt{n}}
		V_\thetanull^{-1} \sum_{i=1}^n \nabla L(\thetanull,X_i) 
		+ o_p(1) 
		\textnormal{   \,\,\,\, (Classical representation for M-estimators)}
		\\
		\sqrt{n} (\htheta_{n,G} - \thetanull) 
		& = 
		\frac{1}{\sqrt{n}}
		V_\thetanull^{-1} \sum_{i=1}^n \nabla \bar L(\thetanull,X_i) 
		+ o_p(1).
		\textnormal{   \,\,\, (Representation for augmented M-estimators)}
	\end{align*}
	Therefore, both $\htheta_n$ and $\htheta_{n, G}$ are asymptotically normal:
	\begin{align*}
		\sqrt{n} (\htheta_{n} - \thetanull)  \Rightarrow \N (0, \Sigma_0), \qquad  
		\sqrt{n} (\htheta_{n, G} - \thetanull)  \Rightarrow \N (0, \Sigma_G ) , 
	\end{align*}
	where the covariance is given by
	\begin{align*}
		\Sigma_0 & = V_\thetanull^{-1} \E_X [\nabla L(\thetanull,X) \nabla L(\thetanull,X)^\top] V_{\thetanull}^{-1} 
		\textnormal{   \,\,\,\,\,\, (Classical covariance)}
		\\
		\Sigma_G & = \Sigma_0 -  V_\thetanull^{-1} \E_X [\C_g \nabla L(\thetanull,gX)] V_{\thetanull}^{-1}.
			\textnormal{   \,\,\,\,\,\, (Augmented covariance)}
	\end{align*}
	As a consequence, the asymptotic relative efficiency of $\htheta_{n, G}$ compared to $\htheta_n$ is $\textnormal{RE} = \frac{\tr(\Sigma_0)}{\tr(\Sigma_G)} \geq 1.$
\end{theorem}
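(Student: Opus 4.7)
My plan is to reduce everything to the classical asymptotic theory for M-estimators (in the Huber--Van der Vaart style, e.g.\ Theorem 5.23 of \citealt{van1998asymptotic}), applied once to $(\thetanull, L)$ and once to $(\theta_G, \bar L)$, and then to extract the explicit form of $\Sigma_G$ by invoking the law-of-total-covariance decomposition from Lemma \ref{lemma:exact_invariance_lemma}. Since Lemma \ref{lemma:regularity_of_augmented_loss} already transfers Assumptions \ref{assump:regularity_of_thetanull} and \ref{assump:regularity_of_loss} from $L$ to $\bar L$ under exact invariance, there is no extra regularity work to do on the augmented side.

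First I would note that under exact invariance $\E L(\theta,X) = \E \bar L(\theta,X)$ for every $\theta \in \Theta$, so $\theta_G = \thetanull$ and the Hessian of the augmented population risk at $\theta_G$ equals $V_\thetanull$. Applied to each pair, the classical theorem then delivers the two displayed Bahadur representations with the \emph{same} prefactor $V_\thetanull^{-1}$; the $o_p(1)$ term is standard and uses the $\sqrt{n}$-consistency of each estimator together with the Lipschitz envelope $\dot L$ from Assumption \ref{assump:regularity_of_loss}(4) and the second-order Taylor expansion in (5). Because $\Theta$ is open and $\thetanull$ is an interior minimizer, the first-order conditions give $\E\nabla L(\thetanull,X) = 0$ and $\E\nabla \bar L(\thetanull,X) = 0$, so the multivariate CLT applied to each i.i.d.\ sum yields $\sqrt{n}(\htheta_n - \thetanull) \Rightarrow \N(0,\Sigma_0)$ with $\Sigma_0 = V_\thetanull^{-1}\E[\nabla L\,\nabla L^\top]V_\thetanull^{-1}$, and analogously $\sqrt{n}(\htheta_{n,G} - \thetanull) \Rightarrow \N(0,\Sigma_G)$ with $\Sigma_G = V_\thetanull^{-1}\E[\nabla \bar L\,\nabla \bar L^\top]V_\thetanull^{-1}$.

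The identification of $\Sigma_G$ with the stated expression is the algebraic heart of the argument. I first interchange the gradient with the Haar integral to get $\nabla \bar L(\thetanull,x) = \E_{g\sim \mbQ} \nabla L(\thetanull,gx)$; this follows by dominated convergence applied to difference quotients, with the Lipschitz envelope $\dot L$ from Assumption \ref{assump:regularity_of_loss}(4) playing the role of the integrable dominator. Then I apply Lemma \ref{lemma:exact_invariance_lemma}(3) coordinate-wise (or, equivalently, to each linear functional $v^\top \nabla L(\thetanull, \cdot)$) to obtain
\[
\C_X \nabla L(\thetanull, X) \;=\; \C_X \nabla \bar L(\thetanull, X) + \E_X \C_g \nabla L(\thetanull, gX).
\]
Because both scores have mean zero, covariances equal second moments, and rearranging gives $\E[\nabla \bar L\,\nabla \bar L^\top] = \E[\nabla L\,\nabla L^\top] - \E_X \C_g \nabla L(\thetanull, gX)$. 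Sandwiching by $V_\thetanull^{-1}$ yields the announced formula $\Sigma_G = \Sigma_0 - V_\thetanull^{-1}\E_X[\C_g \nabla L(\thetanull, gX)]V_\thetanull^{-1}$. The relative-efficiency bound $\tr(\Sigma_0)/\tr(\Sigma_G)\geq 1$ is immediate: the correction term is a similarity of a positive semi-definite matrix (an expectation of conditional covariances), so its trace is non-negative.

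The only nontrivial technical step is justifying the interchange of gradient and integral, since Assumption \ref{assump:regularity_of_loss} is stated via a Lipschitz envelope rather than a pointwise gradient bound; the rest is a routine splicing of Lemma \ref{lemma:regularity_of_augmented_loss}, the classical M-estimator theorem, and Lemma \ref{lemma:exact_invariance_lemma}(3). I expect that step, plus a careful verification that the score actually has mean zero (which uses openness of $\Theta$ plus the Lipschitz domination), to be the only points requiring more than bookkeeping.
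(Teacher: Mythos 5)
Your proposal is correct and follows essentially the same route as the paper: apply the classical M-estimator theorem (Theorem 5.23 of van der Vaart) to both $(\thetanull, L)$ and $(\theta_G, \bar L)$, using Lemma \ref{lemma:regularity_of_augmented_loss} to transfer the regularity conditions, and then identify $\Sigma_G$ via the law-of-total-covariance decomposition in part 3 of Lemma \ref{lemma:exact_invariance_lemma}. The gradient--integral interchange you flag as the main technical point is already handled inside the proof of Lemma \ref{lemma:regularity_of_augmented_loss} (the dominated-convergence argument establishing differentiability of $\theta \mapsto \bar L(\theta,x)$ at $\thetanull$), so no additional work is needed there.
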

\begin{proof}
	See Appendix \ref{subappend:proof_of_asymp_normality_of_augmented_estimator}.
\end{proof}

{\bf Interpretation.} The reduction in covariance is governed by $\E_X \C_g \nabla L(\thetanull,gX)$. This is the average covariance of the gradient $\nabla L$ along the orbits $Gx$. If the gradient varies a lot along the orbits, then augmentation gains a lot of efficiency. This makes sense, because this procedure effectively denoises the gradient, making it more stable. See Figure \ref{fig:cov_G} for an illustration.
    
\begin{figure}[bt]
    \centering
    \includegraphics[scale=0.75]{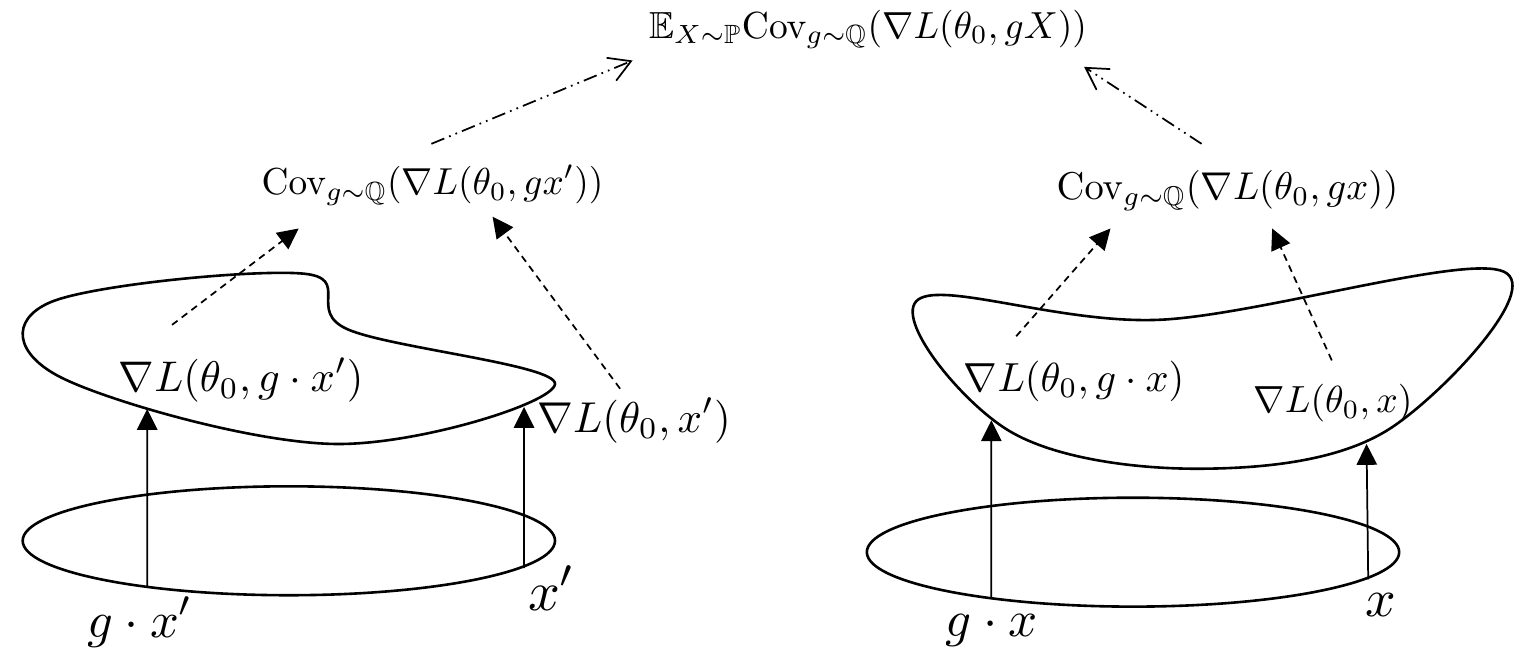}
    \caption{A pictorial illustration on computing the average covariance $\E_X \C_g \nabla L(\theta_0, gX)$ of the gradient of the loss over an orbit.}
    \label{fig:cov_G}
\end{figure}
    
Alternatively, the covariance of the augmented estimator can also be written as $V_\thetanull^{-1}$ $\C_X \E_g \nabla L(\thetanull,gX)$ $V_{\thetanull}^{-1}$.
The inner term is the covariance matrix of the orbit-average gradient, and similar to above, augmentation effectively denoises and reduces it. \\

{\bf Understanding the Bahadur representation.} It is worthwhile to understand the form of the Bahadur representations. For the ERM, we sum the terms $f(X_i):=\nabla L(\thetanull,X_i)$, while for the augmented ERM, we sum the terms $\bar f(X_i)=\nabla \bar L(\thetanull,gX_i)$. Thus, even in the Bahadur representation, we can see clearly that the effect of data augmentation is to \emph{average the gradients over the orbits}. \\
    
{\bf Statistical inference.} It follows automatically from our theory that statistical inference for $\theta$ can be performed in the usual way. Specifically, assuming that $\theta\mapsto L(\theta,x)$ is twice differentiable at $\theta_0$ on a set of full measure, we will have that $V_{\theta_0} = \E\nabla^2 L_{\theta_0}(X)$. We can then compute the plug-in estimator of $V_{\theta_0}$:
$$\hat V_{\theta_0} 
    = \frac{1}{n}\sum_{i=1}^n\nabla^2 L(\htheta_{n,G},X_i).
$$
Let us also define the plug-in estimator of the gradient covariance: 
$$
	\widehat{\bar I_{\thetanull}} 
    = \frac{1}{n}\sum_{i=1}^n
    \nabla L(\htheta_{n,G},X_i) 
    \nabla L(\htheta_{n,G},X_i)^\top.
$$
We can then define the plug-in covariance estimator
$$
	\hSigma_G
    = 
    \hat V_{\theta_0}^{-1} 
    \hat{\bar I_{\theta_0}} 
    \hat V_{\theta_0}^{-1}.
$$
This leads to the following corollary, whose proof is quite direct and we omit it for brevity.
\begin{corollary}[Statistical inference with the augmented estimator]
    \label{in_augmented_estimator}
    Assume the same conditions as Theorem \ref{thm:asymp_normality_augmented_estimator}. Assume in addition that $\theta\to L(\theta,x)$ is twice differentiable at $\theta_0$ on a set of full $x$-measure, and that each entry of the Hessian $\nabla^2 L(\theta_0,\cdot)$ is in $L^1(\P)$. Assume that for both of the functions $F_i$, $i=1,2$   $F_1(\theta,x) = \nabla^2 L(\theta,x)$ and $F_1(\theta,x) = \nabla L(\theta,x) \nabla L(\theta,x)^\top$ there exist $L^1(\P)$ functions $L_i$ such that for every $\theta_1, \theta_2$ in a neighborhood of $\thetanull$, we have   
    $$
        \|F_i(\theta_1,x) - F_i(\theta_2,x) \| 
        \leq 
        L_i(x) \|\theta_1 - \theta_2 \|.
    $$
    Then we have:
	$$
		\hSigma_G \overset{p}{\to} \Sigma_G.
	$$
	Therefore, we have the asymptotic normality
	$$
		\sqrt{n} \hSigma_G^{-1/2}(\htheta_{n, G} - \thetanull) \Rightarrow \N \bigg(0, I_p\bigg).
	$$
	Hence, statistical inference for $\theta_0$ can be performed in the usual way, constructing normal confidence intervals and tests based on the asymptotic pivot.
\end{corollary}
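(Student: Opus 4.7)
\textbf{Proof plan for Corollary \ref{in_augmented_estimator}.} The strategy is to reduce everything to (i) a uniform law of large numbers (ULLN) for the two plug-in ingredients on a neighborhood of $\theta_0$, (ii) the continuous mapping theorem applied to $(V, I)\mapsto V^{-1} I V^{-1}$, and (iii) Slutsky combined with Theorem \ref{thm:asymp_normality_augmented_estimator}. Write $F_1(\theta,x)=\nabla^2 L(\theta,x)$ and $F_2(\theta,x)=\nabla L(\theta,x)\nabla L(\theta,x)^\top$, so that $\hat V_{\theta_0} = n^{-1}\sum_i F_1(\htheta_{n,G}, X_i)$ and $\widehat{\bar I_{\theta_0}} = n^{-1}\sum_i F_2(\htheta_{n,G}, X_i)$.

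\textbf{Step 1 (ULLN on a neighborhood of $\theta_0$).} For each $j\in\{1,2\}$, the Lipschitz bound $\|F_j(\theta_1,x)-F_j(\theta_2,x)\|\le L_j(x)\|\theta_1-\theta_2\|$ with $L_j\in L^1(\P)$ gives that the class $\{F_j(\theta,\cdot):\theta\in U\}$ on a small compact neighborhood $U$ of $\theta_0$ is Glivenko--Cantelli (standard bracketing/covering argument: a finite $\varepsilon$-net in $\theta$ plus the Lipschitz envelope controls the remainder, and each net point is handled by the ordinary SLLN). Hence $\sup_{\theta\in U}\bigl\|n^{-1}\sum_i F_j(\theta,X_i)-\E F_j(\theta,X)\bigr\|\overset{p}{\to}0$. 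Pointwise integrability of $F_1(\theta_0,\cdot)$ is assumed; integrability of $F_2(\theta_0,\cdot)$ follows because, by Theorem \ref{thm:asymp_normality_augmented_estimator}, $\nabla L(\theta_0,\cdot)$ is in $L^2(\P)$.

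\textbf{Step 2 (consistency of the plug-in matrices).} From Theorem \ref{thm:asymp_normality_augmented_estimator} we already have $\htheta_{n,G}\overset{p}{\to}\theta_0$. Combining this with the ULLN and the continuity of $\theta\mapsto\E F_j(\theta,X)$ at $\theta_0$ (which is a consequence of dominated convergence via the Lipschitz envelope) yields $\hat V_{\theta_0}\overset{p}{\to} V_{\theta_0}$ and $\widehat{\bar I_{\theta_0}}\overset{p}{\to}\bar I_{\theta_0}$, where $\bar I_{\theta_0}$ denotes the appropriate population gradient-covariance matrix such that $\Sigma_G = V_{\theta_0}^{-1}\bar I_{\theta_0} V_{\theta_0}^{-1}$. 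Since $V_{\theta_0}$ is non-singular by Assumption \ref{assump:regularity_of_loss}, the map $(V,I)\mapsto V^{-1}IV^{-1}$ is continuous at $(V_{\theta_0},\bar I_{\theta_0})$, so the continuous mapping theorem gives $\hSigma_G\overset{p}{\to}\Sigma_G$.

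\textbf{Step 3 (studentized CLT).} Since $\Sigma_G$ is positive definite, $\hSigma_G^{-1/2}\overset{p}{\to}\Sigma_G^{-1/2}$ by continuity of the matrix square-root-and-inverse on positive definite matrices. Writing
\[
\sqrt{n}\,\hSigma_G^{-1/2}(\htheta_{n,G}-\theta_0) \;=\; \hSigma_G^{-1/2}\cdot \sqrt{n}(\htheta_{n,G}-\theta_0),
\]
and applying Slutsky together with $\sqrt{n}(\htheta_{n,G}-\theta_0)\Rightarrow\N(0,\Sigma_G)$ from Theorem \ref{thm:asymp_normality_augmented_estimator} yields weak convergence to $\N(0,I_p)$, as claimed.

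\textbf{Main obstacle.} The only nontrivial ingredient is Step 1: verifying that an $L^1$-Lipschitz envelope suffices for the ULLN and that this can be combined with plug-in consistency of $\htheta_{n,G}$. This is standard but is where the Lipschitz hypotheses on $F_1,F_2$ are genuinely used; everything else is a routine application of Slutsky and the continuous mapping theorem, which is presumably why the authors felt comfortable omitting the detailed write-up.
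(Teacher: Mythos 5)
The paper omits the proof of this corollary (it is described as ``quite direct''), so there is no written argument to compare against; your ULLN--continuous-mapping--Slutsky route is certainly the intended one, and Steps 1 and 3 are fine as far as they go. However, Step 2 contains a genuine gap in the identification of the limit. With $F_2(\theta,x)=\nabla L(\theta,x)\nabla L(\theta,x)^\top$ --- which is what both the corollary's hypotheses and the displayed formula for $\widehat{\bar I_{\thetanull}}$ literally use --- your uniform law of large numbers yields
$$
\widehat{\bar I_{\thetanull}} \overset{p}{\to} \E_X\big[\nabla L(\thetanull,X)\nabla L(\thetanull,X)^\top\big],
$$
which is the middle matrix of $\Sigma_0$, not of $\Sigma_G$. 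By Lemma \ref{lemma:exact_invariance_lemma}, part 3, the middle matrix of $\Sigma_G$ is $\C_X \E_g\nabla L(\thetanull,gX)$, and the two differ by $\E_X\C_g\nabla L(\thetanull,gX)$, which is nonzero precisely when augmentation gains anything. So you cannot simply declare the limit to be ``the appropriate population gradient-covariance matrix such that $\Sigma_G=V_{\thetanull}^{-1}\bar I_{\thetanull}V_{\thetanull}^{-1}$'': as written, the plug-in is consistent for $\Sigma_0$, and the studentized statistic in Step 3 would be miscalibrated by exactly the variance reduction the theorem quantifies.

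The repair (and, given the bar in the notation $\widehat{\bar I_{\thetanull}}$, presumably the authors' intent) is to form the middle matrix from the \emph{augmented} gradient, i.e., take $F_2(\theta,x)=\nabla\bar L(\theta,x)\nabla\bar L(\theta,x)^\top$ with $\nabla\bar L(\theta,x)=\E_{g\sim\mbQ}\nabla L(\theta,gx)$. Your Step 1 then still applies: the $L^1(\P)$ Lipschitz envelope for $\nabla\bar L\nabla\bar L^\top$ is inherited from the one assumed for $\nabla L\nabla L^\top$ via Jensen's inequality and exact invariance, exactly as in the proof of Lemma \ref{lemma:regularity_of_augmented_loss}, and the pointwise limit becomes $\C_X\E_g\nabla L(\thetanull,gX)$ as required. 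Two smaller points: Step 3 implicitly needs $\Sigma_G$ to be nonsingular for $\hSigma_G^{-1/2}$ to be well defined, which you should state as an assumption; and for $\hat V_{\thetanull}$ the distinction between $L$ and $\bar L$ is immaterial, since their expected Hessians coincide under exact invariance.
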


{\bf ERM in high dimensions.} We conclude this subsection by noting that asymptotic covariance formula can hold in some high-dimensional ERM problems, where the dimension of the parameter space can diverge as $n\to \infty$. Specifically, if we assume $\thetanull$ is sparse and we are able to perform variable selection in a consistent way, then the original (high-dimensional) problem can be reduced to a low-dimensional one, and the same asymptotic variance formulas hold (see, e.g., \citealt{wainwright2019high}). Then augmentation will lead to benefits as above.

\subsubsection{Implications for MLE}\label{subsubsec:implication_for_mle}

Now we consider the special case of maximum likelihood estimation in a model $\{\P_\theta: \theta \in \Theta \}$. Recall the setup from Section \ref{subsec:variants_of_MLE}:
\begin{align*}
	\hthetaMLE \in \underset{\theta\in\Theta}{\arg \max} \frac{1}{n}\sum_{i=1}^n \ell_\theta(X_i), \qquad \hthetaAMLE \in \underset{\theta\in\Theta}{\arg \max} \frac{1}{n}\sum_{i=1}^n \bar \ell_\theta(X_i),
\end{align*}
where $\bar \ell_\theta(x) = \E_{g\sim \mbQ} \ell_\theta(gx)$ is the averaged version of the log-likelihood. 

In the context of ERM, we have $L(\theta, x) = -\ell_\theta(x)$.
Hence, if the conditions in Theorem \ref{thm:asymp_normality_augmented_estimator} hold, we have
\begin{align*}
	\sqrt{n}({\hthetaMLE - \thetanull}) & \Rightarrow \N(0, I_\thetanull^{-1}) \\
	\sqrt{n}({\hthetaAMLE - \thetanull}) & \Rightarrow \N\bigg(0, I_{\thetanull}^{-1} \bigg(I_\thetanull - \E_\thetanull \C_g \nabla \ell_\thetanull(gX) \bigg) I_{\thetanull}^{-1}\bigg),
\end{align*}
where $I_\thetanull  = \E_\thetanull \nabla \ell_\thetanull \nabla \ell_\thetanull^\top$ is the Fisher information at $\thetanull$. \\

{\bf Invariance on a submanifold.} Note that the gain in efficiency is again determined by the term
$$
	\E_\thetanull \C_{g} \nabla \ell_\thetanull(gX).
$$
We will see that the magnitude of this quantity is closely related to the geometric structure of the parameter space. To be clear, let us recall the invariant subspace $\Theta_G$ defined in Equation \eqref{eq:inv_subspace}.
If this subspace is ``small'', we expect augmentation to be very efficient. Continuing to assume that the whole parameter set $\Theta$ is an open subset of $\R^p$, we additionally suppose that we can identify $\Theta_G \subseteq \Theta$ with a smooth submanifold of $\R^p$. 
If we assume $G$ \emph{acts linearly} on $\xx$, then every $g\in G$ can be represented as a matrix, and thus the Jacobian of $x\mapsto gx$ is simply the matrix representation of $g$. As a result, for every $\theta \in \Theta_G$, Equation \eqref{eq:inv_assump_in_mle} becomes
\begin{equation*}
	p_\theta(x) = p_\theta(g^{-1}x) / |\det(g)|.
\end{equation*}
As a result, we have
\begin{equation}
	\label{eq:relation_between_log_likelihood_of_x_and_gx}
	\ell_\theta(gx) +  \log |\det(g)| = \ell_\theta(x),\, \ \forall \theta\in\Theta_G, g\in G.
\end{equation}
It is temping to differentiate both sides at $\thetanull$ and obtain an identity. However, it is important to note that in general, for the score function, 
$$
	\nabla \ell_\thetanull(gx) \neq \nabla \ell_\thetanull(x).
$$
	To be clear, this is because the differentiation operation at $\thetanull$ may not be valid. Indeed, Equation \eqref{eq:relation_between_log_likelihood_of_x_and_gx} does not necessarily hold in an open neighborhood of $\thetanull$. If it holds in an open neighborhood, we can differentiate and get the identity for the score. However, it may only hold in a lower dimensional submanifold locally around $\thetanull$, in which case the desired equation for the score function only holds in the \emph{tangential direction} at $\thetanull$. Figure \ref{fig:tangent} provides an illustration of this intuition, and this intuition is made rigorous by the following proposition:

\begin{figure}[bt]
	\centering
	\includegraphics[scale=0.6]{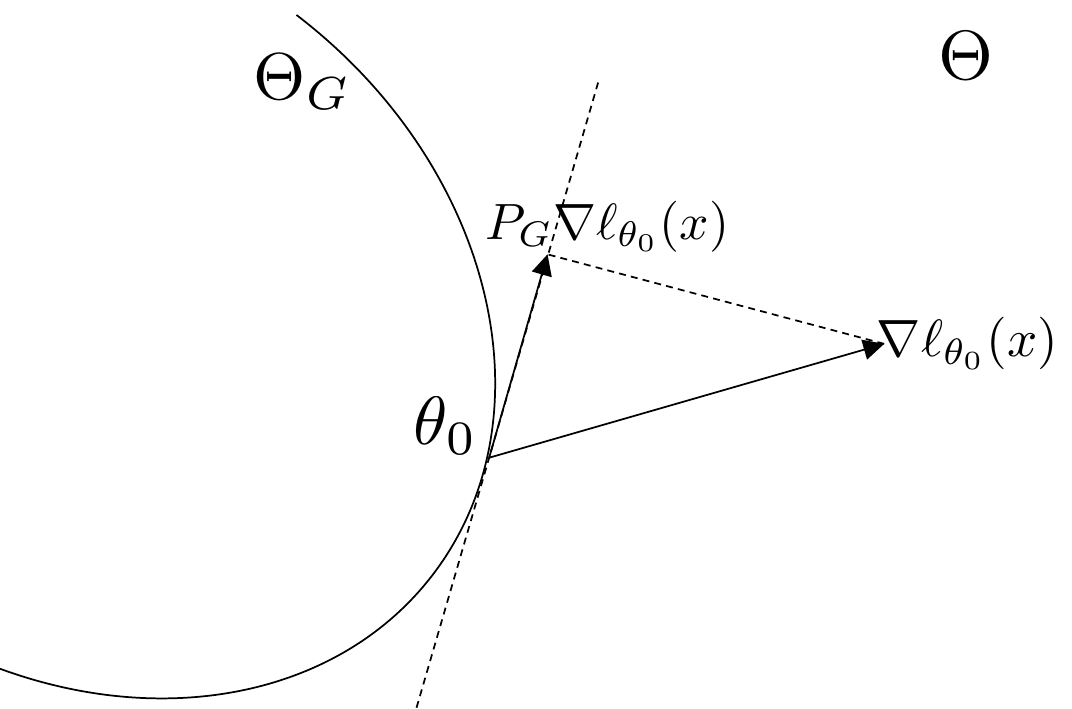}
	\caption{A pictorial illustration of projection onto local tangent space of $\Theta_G$ at the point $\thetanull$.}
	\label{fig:tangent}
\end{figure}

\begin{proposition}[MLE under invariance on a submanifold]
	\label{prop:tangential_decomp}
	Let the conditions of Theorem \ref{thm:asymp_normality_augmented_estimator} hold with $L(\theta, x) = - \ell_\theta(x)$. Assume $G$ acts linearly on the sample space $\xx$. Let $P_G$ be the orthogonal projection operator onto the tangent space of the manifold $\Theta_G$ at the point $\thetanull$, and let $P_G^\perp = Id - P_G$. Then we can decompose 
	$$
		\nabla \ell_\thetanull(gx) = P_G\nabla \ell_\thetanull(gx) + P_G^\perp \nabla\ell_\thetanull(gx),
	$$
	and the tangential part is invariant:
	$$
		P_G \nabla \ell_\thetanull(gx)  =  P_G \nabla \ell_\thetanull(x) ,\,\ \forall g\in G.
	$$
	Moreover, we have that the covariance of the gradient is the covariance of the projection ``out'' of the tangent space:
	$$
		\E_\thetanull \C_{g}(\nabla \ell_\thetanull(gX)) = \E_\thetanull \C_{g}(P_G^\perp \nabla \ell_\thetanull(gX)).
	$$
\end{proposition}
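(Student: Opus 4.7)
The first decomposition is just the orthogonal splitting $\mathrm{Id} = P_G + P_G^\perp$ applied pointwise to $\nabla \ell_{\thetanull}(gx)$, so nothing needs to be proved. The substance is in the tangential invariance, which I would deduce by differentiating the identity \eqref{eq:relation_between_log_likelihood_of_x_and_gx} \emph{along} the submanifold $\Theta_G$, and then the covariance identity follows by a short algebraic manipulation.

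\textbf{Step 1: Tangential invariance.} Fix $g \in G$ and $x$ on the full-measure set where $\theta \mapsto \ell_\theta(x)$ is differentiable at $\thetanull$. Pick any tangent vector $v \in T_{\thetanull}\Theta_G$ and a smooth curve $\gamma: (-\varepsilon, \varepsilon) \to \Theta_G$ with $\gamma(0) = \thetanull$ and $\gamma'(0) = v$. Since $\gamma(t) \in \Theta_G$ for all $t$, equation \eqref{eq:relation_between_log_likelihood_of_x_and_gx} gives
\begin{equation*}
    \ell_{\gamma(t)}(gx) + \log|\det(g)| = \ell_{\gamma(t)}(x)
\end{equation*}
for every $t$ in the interval. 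Differentiating at $t = 0$ (using the chain rule and the differentiability assumption on $\theta \mapsto \ell_\theta$) the constant $\log|\det(g)|$ drops out and I obtain
\begin{equation*}
    \langle v, \nabla \ell_{\thetanull}(gx) \rangle = \langle v, \nabla \ell_{\thetanull}(x) \rangle.
\end{equation*}
Since this holds for every $v \in T_{\thetanull}\Theta_G$, the difference $\nabla \ell_{\thetanull}(gx) - \nabla \ell_{\thetanull}(x)$ lies in the orthogonal complement of the tangent space, i.e.\ is annihilated by $P_G$. This yields $P_G \nabla \ell_{\thetanull}(gx) = P_G \nabla \ell_{\thetanull}(x)$.

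\textbf{Step 2: Covariance identity.} Using the orthogonal decomposition and bilinearity of covariance in $g \sim \mbQ$,
\begin{equation*}
    \C_g \nabla \ell_{\thetanull}(gX) = \C_g P_G \nabla \ell_{\thetanull}(gX) + \C_g P_G^\perp \nabla \ell_{\thetanull}(gX) + \text{cross terms}.
\end{equation*}
By Step 1, $P_G \nabla \ell_{\thetanull}(gX) = P_G \nabla \ell_{\thetanull}(X)$ is constant in $g$ (for fixed $X$), so both its covariance and all cross covariances with the $g$-varying quantity $P_G^\perp \nabla \ell_{\thetanull}(gX)$ vanish. Taking expectation over $X$ gives the claimed identity.

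\textbf{Main obstacle.} The only delicate point is justifying the directional differentiation in Step 1: the hypothesis from Theorem~\ref{thm:asymp_normality_augmented_estimator} only provides differentiability of $\theta \mapsto \ell_\theta(x)$ at $\thetanull$, not in an open neighborhood, and it is on a set of full measure that may depend on $x$ but not on $g$. I would handle this by noting that the composition with a smooth curve $\gamma$ inherits differentiability at $t=0$ from the Fr\'echet differentiability of $\ell_\theta(x)$ and $\ell_\theta(gx)$ at $\thetanull$, and that the full-measure set can be intersected over a countable dense set of $g$'s (or, since the map $x \mapsto gx$ is a measurable bijection, we can transfer the full-measure condition). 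The smooth submanifold assumption on $\Theta_G$ is precisely what guarantees the existence of such curves $\gamma$ realizing every tangent direction.
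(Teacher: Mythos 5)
Your proposal is correct and follows essentially the same route as the paper's proof: differentiate the identity \eqref{eq:relation_between_log_likelihood_of_x_and_gx} along curves in $\Theta_G$ to get invariance of the tangential component, then observe that the $g$-constant tangential part contributes nothing to $\C_g$. You are in fact more explicit than the paper about justifying the directional differentiation and about the cross terms vanishing, which the paper's appendix states only in passing.
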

\begin{proof}
	See Appendix \ref{subappend:proof_of_tangential_decomp}.
\end{proof}
The above proposition sends a clear message on the magnitude of $\E_\thetanull \C_{g}(\nabla \ell_\thetanull (gX))$:
\begin{quote}
	\it The larger the tangential part is, the less we gain from augmentation.
\end{quote}
At one extreme, if $\Theta_G$ contains an open neighborhood of $\thetanull$, then $P_G = Id$, so that $\E_\thetanull \C_G(\nabla \ell_\thetanull (gX))  = 0$. This means augmentation does not gain us anything. At another extreme, if $\Theta_G$ is a singleton, then $P_G^\perp = Id$, and so augmentation leads to a great variance reduction.\\

{\bf Orbit averaging and tangential projection.} We now have two operators that ``capture the invariance''. The first one is the orbit averaging operator, defined by $\E_G: \nabla \ell_\thetanull(x) \mapsto \E_{g\sim \mbQ} \nabla\ell_\thetanull(gx)$. The second one is the tangential projection operator $P_G$, which projects $\nabla \ell_\thetanull (x)$ to the tangent space of the manifold $\Theta_G$ at the point $\thetanull$. How does $\E_G$ relate to $P_G$? In the current notation, recall that the inner term of the asymptotic covariance $\Sigma_G$ can be written as $\C_{\thetanull} \E_G \nabla \ell_\thetanull$. Proposition \ref{prop:tangential_decomp} allows us to do the following decomposition:
\begin{align}
\label{eq:decomp_of_cov_along_manifold}
	\C_{\thetanull} \E_G \nabla \ell_\thetanull & = \C_\thetanull P_G \nabla \ell_\thetanull(X) + \C_{\thetanull} \E_G P_G^\perp \nabla \ell_\thetanull,
\end{align}
which relates the two operators. 
An interesting question now is whether the two operators are equivalent. A careful analysis shows that this is not true in general. 

Let us consider a special case, where $\theta = (\theta_1,\theta_2)$, and the invariant set $\Theta_G$ is characterized by $\theta_2=0$. In this special case, the tangent space is exactly $\{(\theta_1,\theta_2): \theta_2 = 0\}$, and so
\begin{equation}
\label{eq: proj_of_gradient_onto_tangent_space}
	P_G\nabla \ell_\thetanull = 
	\begin{bmatrix}
	\nabla_1 \ell_\thetanull \\
	0
	\end{bmatrix},
	\qquad 
	P_G^\perp\nabla \ell_\thetanull = 
	\begin{bmatrix}
	0 \\
	\nabla_2 \ell_\thetanull	
	\end{bmatrix},
\end{equation}	
where $\nabla_1$ and $\nabla_2$ denote the gradient w.r.t. the first and the second component of the parameter, respectively. Now, since $\E_G$ is a conditional expectation (see Lemma \ref{lemma:exact_invariance_lemma}), we know $\E_G\nabla \ell_\theta$ and $\nabla \ell_\theta -\E_G\nabla \ell_\theta$ are uncorrelated. Hence, if $\E_G=P_G$, then $\nabla_1 \ell_\thetanull$ and $\nabla_2 \ell_\thetanull$ are uncorrelated. If we write the Fisher information in a block matrix:
$$
	I_\thetanull = 
	\E_\thetanull \nabla \ell_\thetanull \nabla \ell_\thetanull^\top=
	\E_\thetanull 
	\begin{bmatrix}
	\nabla_{1} \ell_\thetanull \cdot \nabla_{1} \ell_\thetanull^\top &
	\nabla_{1} \ell_\thetanull \cdot \nabla_{2} \ell_\thetanull^\top \\
	\nabla_{2} \ell_\thetanull \cdot \nabla_{1} \ell_\thetanull^\top &
	\nabla_{2} \ell_\thetanull \cdot \nabla_{2} \ell_\thetanull^\top
	\end{bmatrix}=
	\begin{bmatrix}
	I_{11}(\thetanull) &
	I_{12}(\thetanull) \\
	I_{21}(\thetanull) &
	I_{22}(\thetanull)
	\end{bmatrix},
$$
then we would have $I_{12}(\thetanull)=0$. This shows that if $\E_G = P_G$, then the two parameter blocks are orthogonal, i.e., $I_{12}(\thetanull)=0$, which does not hold in general. \\

{\bf aMLE vs cMLE.} Recall that another method that exploits the invariance structure is the constrained MLE, defined by
$$
	\hthetaCMLE \in \underset{\theta \in \Theta_G}{\arg\max} \frac{1}{n} \sum_{i=1}^n \ell_{\theta}(X_i),
$$
where we seek the minimizer over the invariant subspace $\Theta_G$. How does it compare to the augmented MLE? If $\Theta_G$ is open, and if the true parameter belongs to the interior of the invariant subspace, $\thetanull \in \textnormal{int} \Theta_G $, then by an application of Theorem \ref{thm:asymp_normality_augmented_estimator} (with $\Theta$ replaced by $\Theta_G$) and Proposition \ref{prop:tangential_decomp}, it is clear that all three estimators -- $\hthetaMLE, \hthetaAMLE$, and $\hthetaCMLE$ -- share the same asymptotic behavior. 
At the other extreme, if $\Theta_G$ is a singleton, then the constrained MLE, provided it can be solved, gives the exact answer $\hthetaCMLE = \thetanull$. In comparison, the augmented MLE gains some efficiency but in general will not recover $\thetanull$ exactly. 
	
	What happens to constrained MLE when the manifold dimension of $\Theta_G$ is somewhere between $0$ and $p$? We provide an answer to this question in an simplified setup, where we again assume that $\theta = (\theta_1, \theta_2)$ and that $\Theta_G$ can be characterized by $\{(\theta_1, \theta_2): \theta_2 = 0\}$. We first recall the known behavior of cMLE in this case, which has asymptotic covariance matrix $(I_{11}(\thetanull))^{-1}$ \citep[see e.g.,][]{van1998asymptotic}. By an application of the Schur complement formula, we have
	$$
	(I_{11}(\thetanull))^{-1} \preceq [I_\thetanull^{-1}]_{11} =  [I_{11}(\thetanull)-I_{12}(\thetanull) (I_{22}(\thetanull))^{-1}I_{21}(\thetanull)]^{-1},
	$$
	which says that the cMLE is more efficient than the MLE--which has has asymptotic covariance matrix $I_\thetanull^{-1}$. A further computation gives the following result: 
	
\begin{proposition}[Relation between aMLE and cMLE in parametric models that decompose]
	\label{prop:relation_between_aMLE_and_cMLE}
	Suppose that the parameter has two blocks, $\theta = (\theta_1,\theta_2)$, and the constrained set  $\Theta_G$ is characterized by $\theta_2=0$.  Denote by $\bar I(\theta):=\C{\E_g \nabla \ell_\theta(gX)}$ the covariance of the average gradient. Define
	$$ 
		M_\theta = 
		\begin{bmatrix}
		(I_{11}(\theta))^{-1}& 
		(I_\theta^{-1})_{1\bigcdot}& \\
		(I_\theta^{-1})_{\bigcdot1}&
		(\bar{I}(\theta))^{-1}
		\end{bmatrix}
	$$
	where the notation $(I^{-1})_{1\bigcdot}$ refers to the submatrix of $I^{-1}$ corresponding to the rows indexed by the coordinates of $\theta_1$.
	Then the aMLE is asymptotically more efficient than the cMLE in estimating $\theta_1$ if and only if $M_\thetanull$ is p.s.d.
\end{proposition}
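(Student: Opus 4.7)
My plan is to reduce the comparison to a Schur complement computation. I would first compute the asymptotic covariance of $\hthetaAMLE$ for the full parameter $\theta$, then extract the block corresponding to $\theta_1$ and compare it to the known cMLE asymptotic covariance $(I_{11}(\thetanull))^{-1}$.

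First, I would specialize Theorem \ref{thm:asymp_normality_augmented_estimator} to $L(\theta,x) = -\ell_\theta(x)$. Standard regularity gives $V_\thetanull = I_\thetanull$ (the Fisher information). Applying the law of total covariance from part 3 of Lemma \ref{lemma:exact_invariance_lemma} to the score function $\nabla \ell_\thetanull$ yields
$$
I_\thetanull \;=\; \C_X \E_g \nabla \ell_\thetanull(gX) + \E_X \C_g \nabla \ell_\thetanull(gX) \;=\; \bar I(\thetanull) + \E_X \C_g \nabla \ell_\thetanull(gX),
$$
so Theorem \ref{thm:asymp_normality_augmented_estimator} gives the clean form $\Sigma_G = I_\thetanull^{-1} \bar I(\thetanull) I_\thetanull^{-1}$. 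The $(1,1)$ block, which governs asymptotic estimation of $\theta_1$ by the aMLE, is therefore
$$
[\Sigma_G]_{11} \;=\; (I_\thetanull^{-1})_{1\bigcdot}\,\bar I(\thetanull)\,(I_\thetanull^{-1})_{\bigcdot 1}.
$$

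Next, I would recall that the cMLE restricted to $\Theta_G = \{\theta_2 = 0\}$ has asymptotic covariance $(I_{11}(\thetanull))^{-1}$ for $\theta_1$ (this is standard, and follows from Theorem \ref{thm:asymp_normality_augmented_estimator} applied on the submanifold). The aMLE is thus asymptotically at least as efficient as the cMLE for $\theta_1$ if and only if
$$
(I_{11}(\thetanull))^{-1} - (I_\thetanull^{-1})_{1\bigcdot}\,\bar I(\thetanull)\,(I_\thetanull^{-1})_{\bigcdot 1} \;\succeq\; 0.
$$

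Finally, I would invoke the Schur complement characterization of positive semidefiniteness applied to $M_\thetanull$. Writing $M_\thetanull$ in the block form with $A = (I_{11}(\thetanull))^{-1}$, $B = (I_\thetanull^{-1})_{1\bigcdot}$, and $D = (\bar I(\thetanull))^{-1}$, and noting that $D \succ 0$ whenever $\bar I(\thetanull)$ is invertible, the Schur complement lemma says $M_\thetanull \succeq 0$ iff $A - B D^{-1} B^\top \succeq 0$, which is exactly the displayed inequality above. The main obstacle I anticipate is bookkeeping: making sure $(I_\thetanull^{-1})_{1\bigcdot}\, \bar I(\thetanull)\, (I_\thetanull^{-1})_{\bigcdot 1}$ really equals the $(1,1)$ block of $I_\thetanull^{-1}\bar I(\thetanull) I_\thetanull^{-1}$ (a straightforward block-matrix multiplication), and justifying invertibility of $\bar I(\thetanull)$; if one wants to drop the invertibility assumption, one can work throughout with the generalized Schur complement using the Moore--Penrose pseudoinverse, but I would state the result under the natural regularity assumption that $\bar I(\thetanull) \succ 0$.
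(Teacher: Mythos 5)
Your proposal is correct and follows essentially the same route as the paper: identify the aMLE's asymptotic covariance for $\theta_1$ as the $(1,1)$ block $(I_\thetanull^{-1})_{1\bigcdot}\,\bar I(\thetanull)\,(I_\thetanull^{-1})_{\bigcdot 1}$ of $I_\thetanull^{-1}\bar I(\thetanull) I_\thetanull^{-1}$, compare it to the cMLE covariance $(I_{11}(\thetanull))^{-1}$, and convert the resulting Loewner inequality into positive semidefiniteness of $M_\thetanull$ via the Schur complement with respect to the block $(\bar I(\thetanull))^{-1}$. If anything, your write-up is slightly more careful than the paper's (which transposes the aMLE/cMLE labels in its appendix and is terse about the Schur-complement direction), and your remark on requiring $\bar I(\thetanull)\succ 0$ is a reasonable regularity caveat the paper leaves implicit.
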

\begin{proof}
	See Appendix \ref{subappend:proof_of_relation_between_aMLE_and_cMLE}.
\end{proof}
In general, it seems that the two are not easy to compare, and the required condition would have to be checked on a case by case basis.

\subsection{Finite augmentations}\label{subsec:fin_aug}
The augmented estimator considered in above sections involves integrals over $G$, which are in general intractable if $G$ is infinite. As we have seen in Section \ref{sec:method}, in practice, one usually randomly samples some finite number of elements in $G$ at each iteration. Therefore, it is of interest to understand the behavior of the augmented estimator in the presence of such a ``finite approximation''. It turns out that we can repeat the above arguments with minimal changes. Specifically, given a function $f(x)$, we can define $\bar f_k(x) = k^{-1} \sum_{j=1}^k f(g_j x)$, where $g_j$ are arbitrary elements of $G$. Similarly to before, we find that the mean is preserved, while the variance is reduced, in the following way:
\benum
	\item $\E f(X) = \E \bar f_k(X)$;
	\item $\C f(X) = \C \bar f_k(X) + \E_{X\sim\P} \V_k[f(g_1X),\ldots,f(g_kX)]$, where $\V_k(a_1,\ldots,a_k)$ is the variance of the $k$ numbers $a_i$.
\eenum	

The above arguments suggest that for ERM problems, the efficiency gain is governed by the expected variance of $k$ vectors $\nabla L(\thetanull,g_i X)$, where $i=1, ..., k$. The general principle for practitioners is that the augmented estimator performs better if we choose $g_i$ to ``vary a lot" in the appropriate variance metric.


\section{Examples} \label{sec:ex}
    
In this section, we give several examples of models where exact invariance occurs. We characterize how much efficiency we can gain by doing data augmentation and compare it with various other estimators. Some examples are simple enough to give a finite-sample characterization, whereas others are calculated according to the asymptotic theory developed in the previous section.

\subsection{Exponential families} \label{subsec:exponential_family}
We start with exponential families, which are a fundamental class of models in statistics \citep[e.g.,][]{lehmann1998theory,lehmann2005testing}. Suppose $X\sim \P_\theta$ is distributed according to an exponential family, so that the log-likelihood can be written as
$$
    \ell_\theta(X)  = \theta^\top T(X) - A(\theta),
$$
where $T(X)$ is the sufficient statistic, $\theta$ is the natural parameter, $A(\theta)$ is the log-partition function. The densities of $\P_\theta$ are assumed to exist with respect to some common dominating $\sigma$-finite measure. Then the score function and the Fisher information is given by 
$$
    \nabla\ell_\theta(X)  = T(X) -\nabla A(\theta), \qquad I_\theta = \Cov{T(X)} = \nabla^2 A(\theta).
$$
    
Given invariance with respect to a group $G$, by Theorem \ref{thm:asymp_normality_augmented_estimator}, the asymptotic covariance matrix of the aMLE equals $I_\theta^{-1} J_\theta I_\theta^{-1}$, where $J_\theta$ is the covariance of the orbit-averaged sufficient statistic $J_\theta  = \C_X \E_g T(gX).$
    
Assuming a linear action by the group $G$, by Equation \eqref{eq:relation_between_log_likelihood_of_x_and_gx}, the invariant parameter space $\Theta_G$ consists of those parameters $\theta$ for which 
$$
\theta^\top [T(gx)- T(x)] + v(g) = 0,\,\,\,\, \forall g,x,
$$
where $v(g)  = \log |\det g|$ is the log-determinant. This is a set of linear equations in $\theta$. Moreover, the log-likelihood is concave, and hence the constrained MLE estimator can in principle be computed as the solution to the following convex optimization problem:
\begin{align*}
    \hthetaCMLE &\in\underset{\theta}{\arg\max}\,\,\,\, \theta^\top T(X) - A(\theta)\\
    & s.t.\,\,\,\, \theta^\top [T(gx)- T(x)] + v(g) = 0,\,\,\,\, \forall g\in G, x\in \xx.
\end{align*}
Assume that $\Theta = \R^p$, so that the exponential family is well defined for all natural parameters, and that $\nabla A$ is invertible on the range of $\E_g T(gX)$. The KKT conditions of the above convex program is given by
\begin{align*}
    \hthetaCMLE &\in [\nabla  A]^{-1}(T(X)+ 
    \textnormal{span}\{T(gz)-T(z):z\in\R^d,g\in G\})\\
    & s.t.\,\,\,\, \theta^\top [T(gx)- T(x)] + v(g) = 0,\,\,\,\, \forall g,x.
\end{align*}

Meanwhile, augmented MLE is the solution of the optimization problem where we replace the sufficient statistic $T(x)$ by $\bar T(x) = \E_g T(gx)$:
\begin{align*}
    \hthetaAMLE \in\arg&\max_\theta\,\,\,\, \theta^\top \E_g T(gX) - A(\theta).
\end{align*}
We then have $\hthetaAMLE =[\nabla  A]^{-1}\E_g T(gX)$. Therefore, for exponential families we were able to give more concrete expressions for the augmented and constrained MLEs. \\
    
{\bf Gaussian mean.} Consider now the important special case of Gaussian mean estimation. Suppose that $X$ is a standard Gaussian random variable, so that $A(\theta) = \|\theta\|^2/2$, and $T(x) = x$.
Assume for simplicity that $G$ acts orthogonally. Then we have $\Theta_G = \{ v: g^\top v = v, \forall g\in G\}$. Recall that maximizing the Gaussian likelihood is equivalent to minimizing the distance $\|\theta - X\|_2^2$. Hence, the constrained MLE, by definition of the projection, takes the following form:
$$
    \hthetaCMLE = P_G X,
$$
where we recall that $P_G$ is the orthogonal projection operator onto the tangent space of $\Theta_G$ at $\theta$. However, since $\Theta_G$ is a linear space in our case, $P_G$ is simply the orthogonal projection operator onto $\Theta_G$. On the other hand, we have
$$
    \hthetaAMLE = \E_{g\sim \mbQ} [g] X.
$$
In fact, under the current setup, the augmented MLE equals the constrained MLE:
\begin{proposition}
    \label{prop:risk_of_gaussian_mean_estimation}
    Assume $G$ acts linearly and orthogonally.
    If $X$ is $d$-dimensional standard Gaussian, then $P_G = \E_{g\sim\mbQ}[g]$, so that both the aMLE and cMLE are equal to the projection onto the invariant subspace $\Theta_G$. In particular, their risk equals $\dim \Theta_G$.
\end{proposition}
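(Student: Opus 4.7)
The proof decomposes naturally into three parts: (i) identifying the matrix $M := \E_{g\sim\mbQ}[g]$ as the orthogonal projection $P_G$ onto the invariant subspace $\Theta_G$, (ii) deducing that the aMLE and cMLE coincide, and (iii) computing the risk. I expect part (i) to carry the main content; the rest is straightforward.

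For part (i), my plan is first to verify that $M$ is an orthogonal projection. By left-invariance of Haar measure, for any $h \in G$ the substitution $g \mapsto hg$ preserves $\mbQ$, so $hM = \E_g[hg] = M$; similarly $Mh = M$ by right-invariance. Taking $h = M$ itself (more precisely, iterating the identity $\E_{g,h}[gh] = \E_g[M\ \text{via inner integral}] = M$) gives $M^2 = M$. For symmetry, each $g$ is orthogonal so $g^\top = g^{-1}$, and on a compact group the Haar measure is invariant under inversion $g \mapsto g^{-1}$, which yields $M^\top = \E_g[g^{-1}] = \E_g[g] = M$. Hence $M$ is an orthogonal projection. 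Next, I identify its range with $\Theta_G$. On one hand, for any $v$ the relation $hMv = Mv$ for every $h \in G$ shows $Mv \in \Theta_G$, so $\operatorname{range}(M) \subseteq \Theta_G$. On the other hand, if $v \in \Theta_G$ then $gv = v$ for all $g$, so $Mv = v$, giving $\Theta_G \subseteq \operatorname{range}(M)$. Combining, $M = P_G$.

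Part (ii) is then immediate from the formulas already derived above Proposition \ref{prop:risk_of_gaussian_mean_estimation}: $\hthetaAMLE = \E_{g\sim\mbQ}[g]\,X = P_G X = \hthetaCMLE$. For part (iii), since invariance requires $\thetanull \in \Theta_G$, I can write $X = \thetanull + Z$ with $Z \sim \mathcal{N}(0, I_d)$. Then $P_G \thetanull = \thetanull$, so $\hthetaAMLE - \thetanull = P_G Z$, and the risk equals $\E\|P_G Z\|^2 = \operatorname{tr}(P_G^\top P_G) = \operatorname{tr}(P_G) = \dim \Theta_G$ by the standard trace identity for orthogonal projections applied to an isotropic Gaussian.

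The main technical subtlety is verifying the two Haar invariances used in part (i): left-invariance (to conclude idempotence and that $hM = M$) and inversion-invariance (to conclude symmetry). Both hold on any compact topological group, which is the standing assumption on $G$, so no new machinery is needed. Everything else reduces to linear algebra and a routine Gaussian variance computation.
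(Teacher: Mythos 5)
Your proposal is correct and follows essentially the same route as the paper: both identify $\E_{g\sim\mbQ}[g]$ with the orthogonal projection onto $\Theta_G$ using orthogonality of the action together with Haar invariance. Your version is in fact slightly more complete than the paper's — you verify idempotence explicitly and pin down the range as $\Theta_G$ via the fixed-point characterization, whereas the paper's proof handles $\Theta_G^\perp$ with a terser (and as literally written, shakier) step, and you also spell out the risk computation $\E\|P_G Z\|_2^2 = \tr(P_G) = \dim\Theta_G$ that the paper leaves implicit.
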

\begin{proof}
    See Appendix \ref{subappend:proof_of_risk_of_gaussian_mean_estimation}.
\end{proof}

For instance, suppose $G= (\{\pm 1\}, \cdot)$ acting by flipping the signs. Then it is clear that $\Theta_G = \{0\}$, and so both the cMLE and aMLE are identically equal to zero. 
    
On the other hand, the marginal MLE \eqref{eq:def_of_marginal_mle} is a different object, even in the one-dimensional case. Suppose that $X\sim\N(\theta,1)$, and we consider the reflection group $G= \{1,-1\}$. The marginal distribution of the data is a Gaussian mixture model
    $$Z\sim \frac12[\N(\theta,1)+\N(-\theta,1)].$$
    So the mMLE fits a mixture model, solving
    \begin{align*}
    \hthetaMMLE    
    & \in \arg \max_\theta \sum_{i\in[n]} 
    \log\left(\int_G p_\theta(gX_i) d\mbQ(g)\right)\\
    & = \arg \max_\theta \sum_{i\in[n]} 
    \log\left(\frac12[p_\theta(X_i)+p_\theta(-X_i)]\right)\\
    & = \arg \max_\theta \sum_{i\in[n]} 
    \log\left(\exp[-(X_i-\theta)^2/2]+\exp[-(-X_i-\theta)^2/2]\right)
    \end{align*}
    The solution to this is not necessarily identically equal to zero, and in particular it does not agree with the cMLE and aMLE. \\
    
{\bf Numerical results.} We present some numerical results to support our theory. We consider $X\sim \N(\mu, I_d)$, and invariance to the reflection group \emph{that reverses the order of the vector}. This is a stylized model of invariance, which occurs for instance in objects like faces.
\begin{figure}[bt]
    \centering
    \begin{subfigure}
        \centering
        \includegraphics[scale=0.23]
        {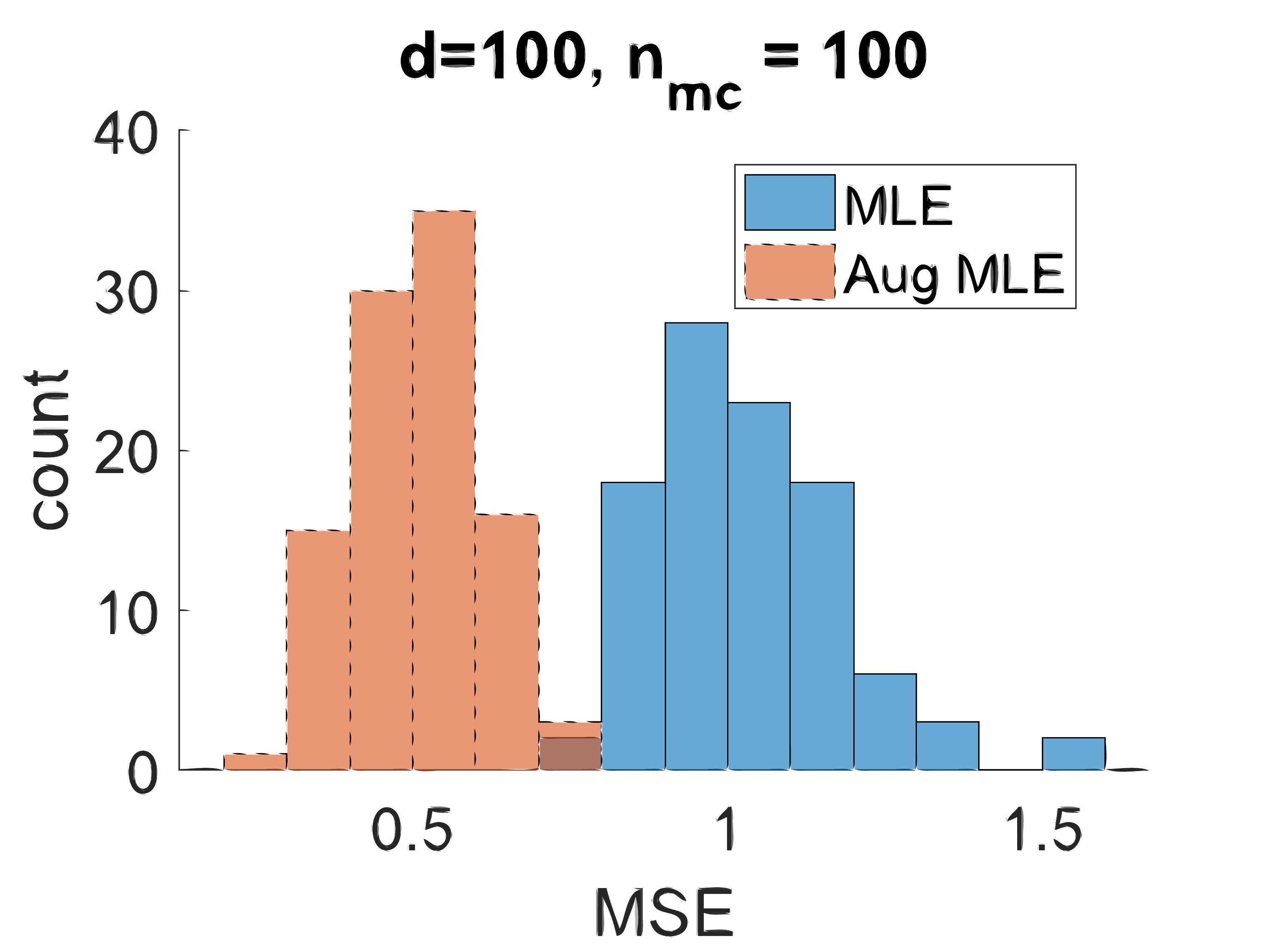}
    \end{subfigure}
    \begin{subfigure}
        \centering
        \includegraphics[scale=0.23]
        {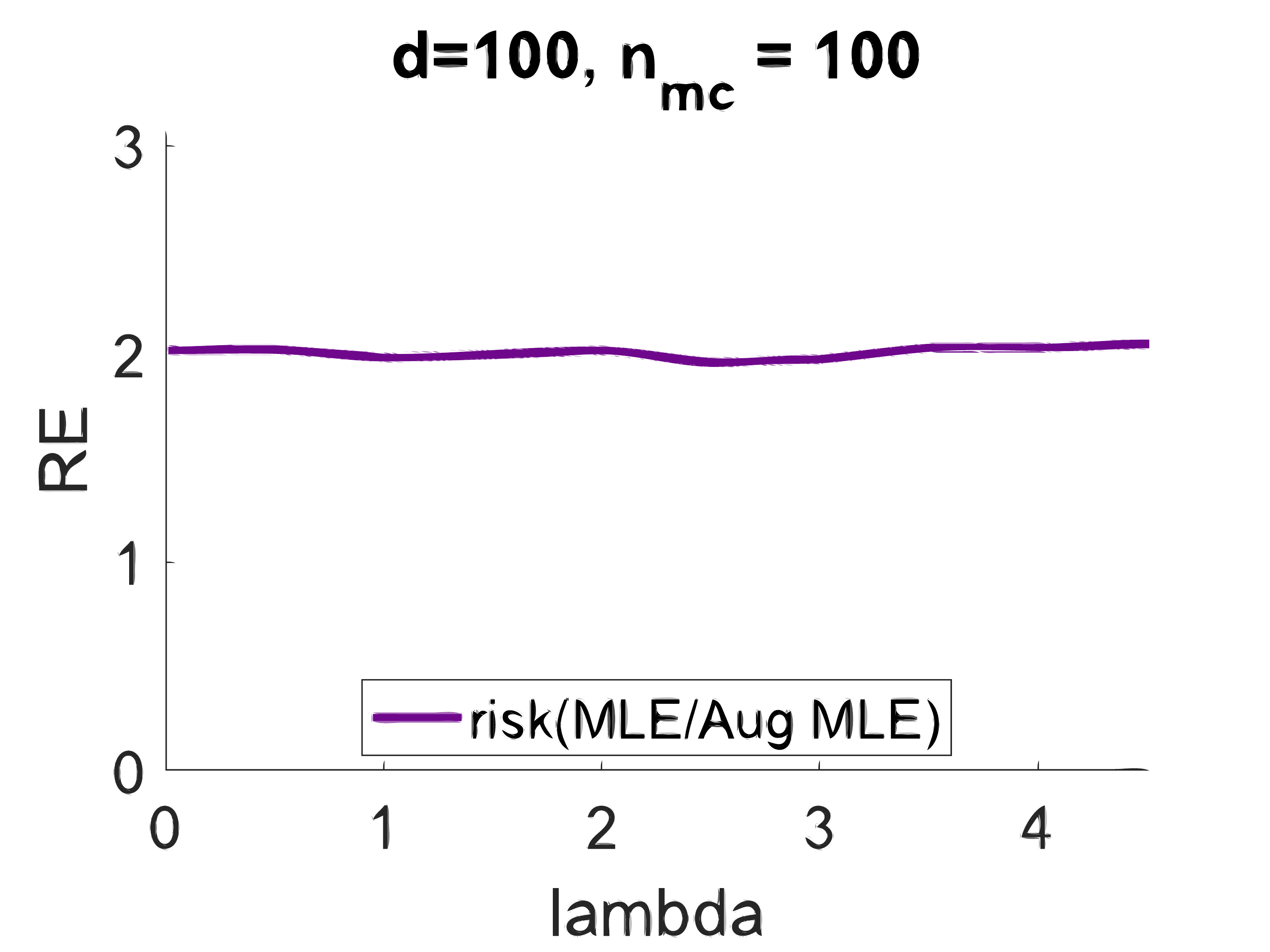}
    \end{subfigure}
    \caption{Plots of the increase in efficiency achieved by data augmentation in a \emph{flip symmetry} model.}
    \label{fig:flip}
\end{figure}
    
In Figure \ref{fig:flip}, we show the results of two experiments. On the left figure, we show the histograms of the mean squared errors (normalized by dimension) of the MLE and the augmented MLE on a $d=100$ dimensional Gaussian problem. We repeat the experiment $n_{MC}=100$ times. We see that the MLE has average MSE roughly equal to unity, while the augmented MLE has average MSE roughly equal to one half. Thus, data augmentation reduces the MSE two-fold. This confirms our theory. 
    
On the right figure, we change the model to each coordinate $X_i$ of $X$ being sampled independently as $X_i\sim Poisson(\lambda)$. We show that the relative efficiency (the relative decrease in MSE) of the MLE and the augmented MLE is roughly equal to two regardless of $\lambda$. This again confirms our theory.

\subsection{Parametric regression models}\label{subsec:param_regression}
We consider a regression problem where we observe an iid random sample $\{(X_1, Y_1), \ldots,$ $(X_n, Y_n)\}\subseteq \R^d \times \R$ from the law of a random vector $(X , Y)$. This follows the model:
$$
    Y = f(\thetanull,X) + \ep, \qquad \ep \indep X, \qquad \E \ep = 0,  
$$
where $\thetanull\in\R^p$.
We have a group $G$ acting on $\R^d \times \R$ \emph{only through} $X$: 
$$
    g(X, Y) = (gX, Y) ,
$$    
and the invariance is characterized by
$$
    (gX, Y) =_d (X, Y). 
$$
In regression or classification problems in deep learning, we typically apply the augmentations conditionally on the outcome or class label. This corresponds to the conditional invariance 
$$
	(gX|Y=y) =_d (X|Y=y).
$$
The conditional invariance can be deduced from $(gX, Y) =_d (X, Y)$ by conditioning on $Y=y$. Conversely, if it holds conditionally for each $y$, we deduce that it also holds jointly, i.e., $(gX,Y) =_d (X,Y)$. Thus, conditional and joint invariance are equivalent. Moreover, each of them clearly implies marginal invariance of the features, i.e., $gX=_dX$. 
By conditional invariance, $\E(Y|X=x) = \E(Y|X=gx)$, hence
\begin{equation}
	\label{eq:invariance_of_regression_function}
	f(\thetanull, g x)  = f(\thetanull, x)
\end{equation}
for every non-random $x$ and any $g\in G$.
This is what we would expect in the applications we have in mind: the label should be preserved provided there is no random error.

\subsubsection{Non-linear least squares regression}
We now focus on the least squares loss:
\begin{equation}
	\label{eq:least_square_loss_for_regression}
	L(\theta,X, Y) = (Y - f(\theta,X))^2.
\end{equation}
The population risk is
$$
	\E L_\theta (X, Y) = \E (Y - f(\thetanull,X)  + f(\thetanull,X) - f(\theta,X))^2 = \E (f(\thetanull,X) - f(\theta,X))^2 + \gamma^2,
$$
where $\gamma^2$ is the variance of the error $\ep$. Under standard assumptions, the minimizer $\htheta_{ERM}$ of $\theta \mapsto \sum_{i=1}^n L(\theta,X_i, Y_i)$ is consistent (see e.g., Example 5.27 of \citealt{van1998asymptotic}). Similarly, under standard smoothness conditions, we have 
$$
	\E L(\theta,X, Y) = \underbrace{\gamma^2}_{= \E L(\thetanull,X, Y)} + \frac{1}{2}\E\bigg[ (\theta - \thetanull)^\top  \bigg(2 \nabla f(\thetanull,X) \nabla f(\thetanull,X)^\top \bigg) (\theta-\thetanull)\bigg] + o(\|\theta - \thetanull\|^2),
$$
where $\nabla f(\theta,X)$ is the gradient w.r.t. $\theta$.
This suggests that we can apply Theorem \ref{thm:asymp_normality_augmented_estimator} with $V_\thetanull = 2 \E \nabla f(\thetanull,X) \nabla f(\thetanull,X)^\top $ and $\nabla L(\thetanull,X, Y) = -2(Y - f(\thetanull,X))\nabla f(\thetanull,X) = -2\ep \nabla f(\thetanull,X)$, which gives  (with the Fisher information $I_\theta = \E \nabla f(\theta,X)\nabla f(\theta,X)^\top $)
\begin{equation}
	\label{eq:asymp_normality_of_nonlinear_regression}
	\sqrt{n}(\hthetaERM - \thetanull) \Rightarrow \N\bigg(0, V_\thetanull^{-1}\E \bigg[4 \ep^2  \nabla f(\thetanull,X)\nabla f(\thetanull,X)^\top \bigg] V_\thetanull^{-1} \bigg) =_d \N\bigg(0 , \gamma^2 I_\thetanull^{-1}\bigg).
\end{equation}

On the other hand, the augmented ERM estimator is the minimizer $\hthetaAERM$ of $\theta \mapsto \sum_{i=1}^n$ $\E_g L(\theta,gX_i, Y_i)$. Now applying Theorem \ref{thm:asymp_normality_augmented_estimator} gives  
	\begin{equation}
	\label{eq:asymp_normality_of_augmented_nonlinear_regression}
	\sqrt{n}(\hthetaAERM - \thetanull) \Rightarrow \N(0, \Sigma_{\textnormal{aERM}}),
	\end{equation}
	with the asymptotic covariance being
	\begin{align*}
	\Sigma_{\textnormal{aERM}} & 
	= \gamma^2 I_\thetanull^{-1}
	- V_\thetanull^{-1}\E \bigg[\C_g \nabla L(\thetanull,gX, Y)\bigg] V_\thetanull^{-1} \\
	& 
	= \gamma^2 I_\thetanull^{-1}
	- I_\thetanull^{-1} \E \bigg[ \C_g (Y - f(\thetanull,gX)) \nabla f(\thetanull,gX) \bigg] I_\thetanull^{-1} \\
	& 
	= \gamma^2 I_\thetanull^{-1}
	-  I_\thetanull^{-1} \E \bigg[  \ep^2  \C_g \nabla f(\thetanull,gX) \bigg] I_\thetanull^{-1}\\
	& 
	= \gamma^2 \cdot \left( I_\thetanull^{-1}
	-  I_\thetanull^{-1} \E \bigg[  \C_g \nabla f(\thetanull,gX) \bigg] I_\thetanull^{-1}\right), 
	\end{align*}
	where we used $f(\thetanull,gx) = f(\thetanull,x)$ in the second to last line.
	Using Lemma \ref{lemma:exact_invariance_lemma}, we can also write
	\begin{align*}
	\Sigma_{\textnormal{aERM}} & = \gamma^2 I_\thetanull^{-1} \bigg(I_\thetanull - \E \bigg[  \C_G \nabla f(\thetanull,gX) \bigg]\bigg) I_\thetanull^{-1} \\
	& = \gamma^2 I_\thetanull^{-1}\bigg( \C_X \nabla f(\thetanull, X) - \E \bigg[  \C_G \nabla f(\thetanull,gX) \bigg]  \bigg)  I_\thetanull^{-1} \\
	& = \gamma^2 	I_\thetanull^{-1} \bar I_\thetanull  I_\thetanull^{-1},
	\end{align*}
	where $\bar{I}_\thetanull$ is the ``averaged Fisher information'', defined as
	$$
	\bar I_\thetanull =  \C_X[\E_g \nabla f(\thetanull, gX)] = \E_X \bigg[\bigg(\E_g \nabla f(\thetanull, gX)\bigg) \bigg(\E_g \nabla f(\thetanull, gX)\bigg)^\top\bigg]. 
	$$

\subsubsection{Two-layer neural network for regression tasks}
As an example, consider a two-layer neural network
$$
	f(\theta,x) = a^\top \sigma(Wx).
$$
Here $x$ is a $d$-dimensional input, $W$ is a $p\times d$ weight matrix, $\sigma$ is a nonlinearity applied elementwise to the preactivations $Wx$. 
The overall parameters are $\theta = (a, W)$. For simplicity, let us focus on the case where $a = 1_p$ is the all ones vector. This will simplify the expressions for the gradient. We can then write 
\begin{equation}
	\label{eq:simplified_expression_for_2lnn}
	f(W,x) = 1^\top \sigma(Wx).
\end{equation}

	Let $I_W = \E \nabla f(W,X)\nabla f(W,X)^\top $ be the fisher information, and denote its averaged version as 
	$$
	\bar I_W = \E_X \bigg[\bigg(\E_g \nabla f(W, gX)\bigg) \bigg(\E_g \nabla f(W, gX)\bigg)^\top\bigg].
	$$ Recall Equation \eqref{eq:asymp_normality_of_nonlinear_regression} and \eqref{eq:asymp_normality_of_augmented_nonlinear_regression}:
	\begin{align*}
	\sqrt{n}(\hat W_{\textnormal{ERM}} - W) &
	\Rightarrow \N\bigg(0 , \sigma^2 I_W^{-1}\bigg)\\
	\sqrt{n}(\hat W_{\textnormal{aERM}} - W) & 
	\Rightarrow \N\bigg(0 , \sigma^2  I_W^{-1} \bar I_W I_W^{-1}\bigg).
	\end{align*}
	Therefore, the efficiency of the ERM and aERM estimators is determined by the magnitude of $I_W$ and $\bar I_W$. In general, those are difficult to calculate. However, an exact calculation is possible under a natural example of \emph{translation invariance}. Let the group $G = \{g_0, g_1,..., g_{p-1}\}$, where $g_i$ acts by shifting a vector circularly by $i$ units:
	\begin{equation}
	\label{eq:circular_shift}
	(g_i x)_{j+i \textnormal{ mod } p} = x_{j}.
	\end{equation}
	We equip $G$ with a uniform distribution. Under such a setup, we have the following result:

\begin{theorem}[Circular shift data augmentation in two-layer networks]
		\label{thm:2lnn}
		Consider the two-layer neural network model \eqref{eq:simplified_expression_for_2lnn} trained using the least squared loss \eqref{eq:least_square_loss_for_regression}. Then:
		\begin{enumerate}
			\item The Fisher information matrix $I_W$ can be viewed as a tensor
			\begin{align*}
			I_W 
			& = \E (\sigma'(WX)\otimes \sigma'(WX)) \cdot (X \otimes X)^\top.
			\end{align*}
			If, furthermore, the activation function is quadratic, $\eta(x) = x^2/2$, then $I_W$ can be written as the product of a $p^2\times d^2$ tensor and a $d^2\times d^2$ tensor:
			\begin{align*}
			I_W
			& = (W \otimes W) \cdot  \E (XX^\top \otimes XX^\top).
			\end{align*}
			\item Assume the activation function is quadratic. 
			Let $C_v$ be the circulant matrix associated with the vector $v$, with entries $C_v(i,j) = v_{i-j+1}.$ Then $\bar I_W$ can be written as
			\begin{align*}
			\bar I_W  
			& = (W \otimes W) \cdot d^{-2} \E (C_X C_X^\top \otimes C_X C_X^\top).
			\end{align*}
			If furthermore the distribution of $X$ is normal, $X\sim \N(0,I_d)$, then we have
			\begin{align*}
			\bar I_W  
			& = (W \otimes W) \cdot F_2^*\cdot (F_2^2 \odot M) \cdot F_2^{*}.
			\end{align*}
			Here $F_2 = F\otimes F$, where $F$ is the $d\times d$ DFT matrix, $M$ is the $d^2\times d^2$ tensor with entries
			\begin{align*}
			M_{iji'j'}
			& = 
			F_i^\top F_j  \cdot F_{i'}^\top F_{j'}
			+
			F_i^\top  F_{j'} \cdot F_{i'}^\top F_j
			+
			F_i^\top  F_{i'} \cdot F_i^\top F_{j'},
			\end{align*}
			and $F_2^*$ is the complex conjugate of $F_2$.
		\end{enumerate}
	\end{theorem}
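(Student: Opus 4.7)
The plan is to start from the entrywise derivative $\partial f / \partial W_{ij} = \sigma'((Wx)_i)\,x_j$, which identifies the $p\times d$ gradient with the outer product $\sigma'(Wx)\,x^\top$. Vectorizing gives $\nabla_W f \cong \sigma'(Wx)\otimes x$ (up to reshape), so the outer product of gradients is $\bigl(\sigma'(WX)\sigma'(WX)^\top\bigr)\otimes\bigl(XX^\top\bigr)$, which is exactly the tensor form claimed for $I_W$. For the quadratic activation, $\sigma'(Wx)=Wx$, and the mixed-product property gives $\sigma'(WX)\otimes\sigma'(WX) = (W\otimes W)(X\otimes X)$; pulling the deterministic $W\otimes W$ out of the expectation yields $I_W = (W\otimes W)\cdot\E[XX^\top\otimes XX^\top]$.

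\textbf{Part 2 (Averaged Fisher via circulants).} For quadratic $\sigma$, $\nabla f(W,gx) = Wgx\,(gx)^\top = Wg\,xx^\top g^\top$, so $\E_g[\nabla f(W,gX)] = W\cdot \E_g[g\,XX^\top g^\top]$. The combinatorial core is the identity
\[
\sum_{g\in G} g\, XX^\top\, g^\top \;=\; C_X C_X^\top,
\]
which follows since the $(a,b)$-entry on the left is $\sum_i X_{a-i}X_{b-i}$, matching the $(a,b)$-entry of $C_XC_X^\top$ by the definition of the circulant. Normalizing by the size of the group, forming the Kronecker-outer-product, and pulling out $W\otimes W$ yields the claimed $\bar I_W = d^{-2}(W\otimes W)\,\E[C_XC_X^\top\otimes C_XC_X^\top]$.

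\textbf{Part 3 (Closed form for Gaussian input).} For the last formula, I would diagonalize circulants via the DFT: $C_X = F^*\,\mathrm{diag}(FX)\,F$, so $C_XC_X^\top = F^*\,\mathrm{diag}(|FX|^2)\,F$ and, writing $F_2 = F\otimes F$,
\[
C_XC_X^\top\otimes C_XC_X^\top \;=\; F_2^*\,\mathrm{diag}\!\bigl((|FX|^2)\otimes(|FX|^2)\bigr)\,F_2.
\]
Taking expectation over $X\sim\N(0,I_d)$ reduces the problem to the fourth moment $\E[(FX)_i(\overline{FX})_j(FX)_{i'}(\overline{FX})_{j'}]$, which by Isserlis' theorem expands into the three Wick pairings whose contractions are precisely the inner products $F_i^\top F_j$ etc.\ appearing in the definition of $M$. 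Substituting this expansion and absorbing the outer DFT factors into the diagonal in the form of a Hadamard product with $F_2^2$ delivers the stated expression. The main obstacle is this last step: tracking conjugations carefully and matching the three Wick pairings to the three summands of $M$ is the only nontrivial bookkeeping; everything else reduces to Kronecker algebra and the elementary circulant identity used in Part 2.
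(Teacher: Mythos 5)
Your Parts 1 and 2 match the paper's argument essentially step for step: the gradient is the outer product $\sigma'(Wx)x^\top$, the quadratic activation lets you pull out $W\otimes W$ by the mixed-product property, and the identity $\sum_{g} (gX)(gX)^\top = C_XC_X^\top$ (which you verify entrywise, the paper by reading off the columns of $C_X$) gives the circulant form of $\bar I_W$. These parts are fine.

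Part 3 has a concrete gap. You factor $C_XC_X^\top$ using the \emph{conjugate} transpose, $C_XC_X^\top = C_XC_X^* = dF^*\diag(|FX|^2)F$, which is a true identity but collapses the middle factor to a diagonal (since $FF^*=I$). Tensoring and taking expectations then yields $d^{-2}\E[C_XC_X^\top\otimes C_XC_X^\top] = F_2^*\,\diag\bigl(\E[|FX|_i^2\,|FX|_j^2]\bigr)\,F_2$ — a correct closed form, but structurally different from the stated one: it ends in $F_2$ rather than $F_2^*$, its middle factor is diagonal rather than $F_2^2\odot M$, and the Wick pairings it produces are the \emph{conjugated} contractions $\E[Z_i\bar Z_iZ_j\bar Z_j]$ with only two free indices, namely $1+|F_i^\top F_j|^2+|F_i^\top \bar F_j|^2$, not the unconjugated four-index tensor $M_{iji'j'}=F_i^\top F_j\cdot F_{i'}^\top F_{j'}+\cdots$ of the theorem. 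The paper instead uses the ordinary transpose, $(F_2^*D_2F_2)^\top = F_2 D_2 F_2^*$, so that $F_2F_2^\top=F_2^2$ survives in the middle; the identity $\E[D_2F_2^2D_2]=F_2^2\odot\E[(FX\otimes FX)(FX\otimes FX)^\top]$ then produces the Hadamard structure and the unconjugated fourth-moment tensor, which Wick's formula evaluates to exactly $M$. Your final sentence, claiming the pairings are "precisely the inner products $F_i^\top F_j$ appearing in the definition of $M$," is therefore not correct along your route. To prove the theorem as stated you should either redo the step with the ordinary transpose (after which everything goes through as in the paper), or separately prove that your diagonal closed form equals $F_2^*(F_2^2\odot M)F_2^*$ — a nontrivial identity that your sketch does not supply.
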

	\begin{proof}
		See Appedix \ref{subappend:proof_of_2lnn}.
	\end{proof}
	
	This theorem shows in a precise quantitative sense how much we gain from data augmentation in a two-layer neural network. To get a sense of the magnitude of improvement, we will attempt to understand how much ``smaller" $\bar I_W$ is compared to $I_W$ by calculating their MSEs. For simplicity, we will impose a prior $W \sim \N(0, I_p\otimes I_d)$, and the MSE is calculated by taking expectation w.r.t. this prior. First we have $\E\tr I_W = \E \|WXX^\top\|^2_{Fr}$. Let $S = XX^\top$. Now  $W \sim \N(0,I_p \otimes I_d)$. So conditional on $X$, we have $WS\sim  \N(0,I_p \otimes S^2)$. Hence, $\E\|WS\|^2_{Fr} = p \E \tr S^2 = p \E \tr (XX^\top)^2$. Similarly, we find $\E\tr \bar I_W = p \E \tr (C_XC_X^\top)^2/d^2$. \\
	
	\begin{figure}[bt]
		\centering
		\includegraphics[scale=0.25]
		{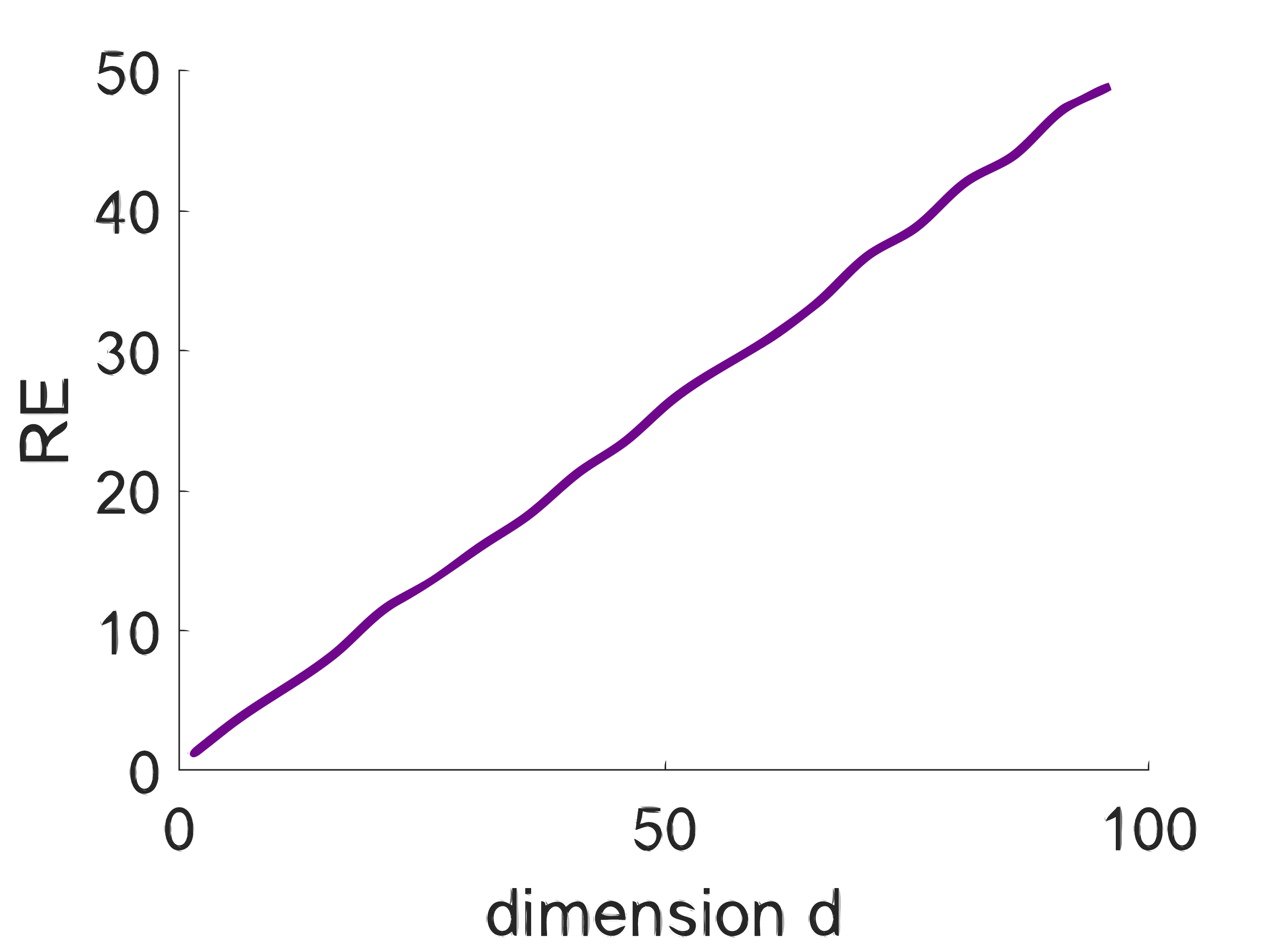}
		\caption{Plot of the increase in efficiency achieved by data augmentation in a \emph{circular symmetry} model.}
		\label{circ}
	\end{figure}
	
	{\bf Numerical results.}
	In Figure \ref{circ}, we show the results of an experiment where we randomly generate the input as $X \sim \N(0,I_d)$. For a fixed $X$, we compute the values of $\E\tr I_W = p\tr (XX^\top)^2$ and $\E\tr \bar I_W = p \tr (C_XC_X^\top)^2/d^2$, and record their ratio. We repeat the experiment $n_{MC}=100$ times. We then show the relative efficiency of aMLE with respect to MLE as a function of the input dimension $d$. We find that the relative efficiency scales as $RE(d) \sim d/2$. Thus, for the efficiency gain increases as a function of the input dimension. However, the efficiency gain does not depend on the output dimension $p$. This makes sense, as circular invariance affects and reduces only the input dimension. 
	

	\subsection{Parametric classification models} \label{subsec:param_classification}
	Similar calculations as in the previous subsection carry over to the classification problems. We now have a random sample $\{(X_1, Y_1),$ $..., (X_n, Y_n)\}$ $\subseteq \R^d \times \{0, 1\}$ from the law of a random vector $(X, Y)$, which follows the model: 
	$$
	\P(Y = 1 \ | \ X) = \eta(f(\thetanull,X)),
	$$
	where $\thetanull\in \R^p$, $\eta: \R \to [0, 1]$ is an increasing activation function, and $f(\thetanull,\cdot)$ is a real-valued function. For example, the sigmoid $\eta(x) = 1/(1+e^{-x})$ gives the logistic regression model, using features extracted by $f(\thetanull,\cdot)$. As in the regression case, we have a group $G$ acting on $\R^d \times \{0, 1\}$ via
	$$
	g(X, Y) = (gX, Y),
	$$
	and the invariance is  
	$$
	(gX, Y) =_d (X, Y).
	$$
	
	The interpretation of the invariance relation is again two-fold. On the one hand, we have $gX =_d X$. On the other hand, for almost every (w.r.t. the law of $X$) $x$, we have
	$$
	\P (Y=1 \ | \ gX = x) = \P (Y = 1 \ | \ X = x).
	$$
	The LHS is   $\eta(f(\thetanull,g^{-1}x))$, whereas the RHS is $\eta(f(\thetanull,x))$. This shows that for any (non-random) $g\in G$ and $x$, we have
	$$
	\eta(f(\thetanull,gx)) = \eta(f(\thetanull,x)).
	$$
	For image classification, the invariance relation says that the class probabilities stay the same if we transform the image by the group action. Moreover, since we assume $\eta$  is monotonically strictly increasing, applying its inverse gives $$f(\thetanull,gx) = f(\thetanull,x).$$

	\subsubsection{Non-linear least squares classification}
	We consider using the least square loss to train the classifier:
	\begin{equation}
	\label{eq:least_square_loss_for_classification}	
	L(\theta,X, Y) = (Y - \eta(f(\theta,X)))^2.
	\end{equation}
	Though this is not the most popular loss, in some cases it can be empirically superior to the default choices, e.g., logistic loss and hinge loss \citep{wu2007robust,nguyen2013algorithms}. The loss function has a bias-variance decomposition:
	\begin{align*}
	\E L(\theta,X, Y) & = \E [Y - \eta(f(\thetanull,X)) + \eta(f(\thetanull,X)) - \eta(f(\theta,X))]^2 \\	
	& = \underbrace{\E [Y - \eta(f(\thetanull,X))]^2}_{\E L(\thetanull,X, Y)} + \E [\eta(f(\thetanull,X)) - \eta(f(\theta,X)) ]^2,
	\end{align*}
	where the cross-term vanishes because $\eta(f(\thetanull,X)) = \E [ Y  |   X ]$. Note that 
	\begin{align*}
	& \E[Y - \eta(f(\thetanull,X))]^2 = \E [(Y - \E[Y|X])^2] 
	 = \E\bigg[\E[ (Y - \E[Y|X])^2\ | \ X]\bigg] \\
	& = \E \V(Y   |   X) 
	 = \E \V [\text{Bernoulli}(\eta(f(\thetanull,X)))] 
	 = \E \eta(f(\thetanull,X))(1 - \eta(f(\thetanull,X))).
	\end{align*}
	Meanwhile, since $\nabla \eta(f(\theta,X)) = \eta'(f(\theta,X)) \nabla f(\theta,X)$, for sufficiently smooth $\eta$, we have a second-order expansion of the population risk:
	$$
	\E L(\theta,X , Y) = \E L(\thetanull,X, Y) + \frac{1}{2} (\theta - \thetanull)^\top \E [2 \eta'(f(\thetanull,X))^2 \nabla f(\thetanull,X) \nabla f(\thetanull,X)^\top] (\theta - \thetanull) + o(\|\theta- \thetanull \|^2).
	$$
	This suggests that we can apply Theorem \ref{thm:asymp_normality_augmented_estimator} with $V_\thetanull = \E [2 \eta'(f(\thetanull,X))^2 \nabla f(\thetanull,X) \nabla f(\thetanull,X)^\top]$ and $\nabla L(\theta,X, Y) = -2(Y - \eta(f(\theta,X))) \eta'(f(\theta,X)) \nabla f(\theta,X) $, which gives
	\begin{equation}
	\label{eq:asymp_normality_of_nonlinear_classification}
	\sqrt{n}(\hthetaERM  - \thetanull) \Rightarrow \N(0, \SigmaERM),
	\end{equation}
	where the asymptotic covariance is
	\begin{align}
	\label{eq:asymp_cov_of_nonlinear_classification}
	\Sigma_{\textnormal{ERM}} & = \E [U_\thetanull(X)]^{-1}\E[v_\thetanull(X) U_\thetanull(X)] \E [U_\thetanull(X)]^{-1}  \\
	v_\thetanull(X) & = \eta(f(\thetanull,X))
	\cdot \left(1 - \eta(f(\thetanull,X))\right) \nonumber\\
	U_\thetanull(X) & = \eta'(f(\thetanull,X))^2 \nabla f(\thetanull,X) \nabla f(\thetanull,X)^\top. \nonumber
	\end{align}
	Here $v_\thetanull(X)$ can be viewed as the noise level, which corresponds $\E\ep^2$ in the regression case. Also, $U_\thetanull(X)$ is the information, which corresponds to $\E \nabla f(\thetanull,X)\nabla f(\thetanull,X)^\top$ in the regression case. The classification problem is a bit more involved, because the noise and the information do not decouple (they both depend on $X$). In a sense, the asymptotics of classification correspond to a regression problem with heteroskedastic noise, whose variance depends on the mean signal level.
	
	In contrast, applying Theorem \ref{thm:asymp_normality_augmented_estimator} for the augmented loss gives
	\begin{equation}
	\label{eq:asymp_normality_of_augmented_nonlinear_classification}
	\sqrt{n}(\hthetaAERM - \thetanull) \Rightarrow \N(0, \SigmaAERM),
	\end{equation}
	where
	\begin{equation*}
	\SigmaERM - \SigmaERM = V_\thetanull^{-1} \E \C_g \nabla L(\thetanull,gX) V_\thetanull^{-1}.
	\end{equation*}
	We now compute the gain in efficiency:
	\begin{align*}
	\E\C_g\nabla L(\thetanull,gX) & = \E \C_g \bigg(2(Y - \eta(f(\thetanull,gX))) \eta'(f(\thetanull,gX)) \nabla f(\thetanull,gX)\bigg) \\
	& = 4 \E \bigg[(Y - \eta(f(\thetanull,X)))^2 \C_g \bigg(\eta'(f(\thetanull,gX)) \nabla f(\thetanull,gX)\bigg) \bigg] \\
	& = 4 \E \bigg[ v_\thetanull(X) \C_g \bigg(\eta'(f(\thetanull,gX)) \nabla f(\thetanull,gX)\bigg)\bigg]. 
	\end{align*}
	In summary, the covariance of ERM is larger than the covariance of augmented ERM by
	\begin{equation}
	\label{eq:asymp_cov_of_augmented_nonlinear_classification}
	\SigmaERM - \SigmaAERM = \E[U_\thetanull(X)]^{-1} \E \bigg[ v_\thetanull(X) \C_g \bigg(\eta'(f(\thetanull,gX)) \nabla f(\thetanull,gX)\bigg)\bigg] \E[U_\thetanull(X)]^{-1}.
	\end{equation}

	\subsubsection{Two-layer neural network for classification tasks}
	We consider the two-layer neural network \eqref{eq:simplified_expression_for_2lnn} here. Most of the computations for the regression case carry over to the classification case. Recall that in the current setup, the model \eqref{eq:simplified_expression_for_2lnn} becomes   
	\begin{equation}
	\label{eq:simplified_expression_for_2lnn_for_classification}
	\P(Y = 1 \ |  \ X)  = \eta(f(W,X)):= \eta(1^\top\sigma(WX)),
	\end{equation}
	where $W \in \R^{p\times d}$ and $\eta$ is a nonlinearity applied elementwise. We then have an analog of Theorem \ref{thm:2lnn}: 

	\begin{corollary}
	\label{cor:2lnn_for_classification}
	Consider the two-layer neural network model \eqref{eq:simplified_expression_for_2lnn} trained using the least squares loss \eqref{eq:least_square_loss_for_classification}. Assume the activation function is quadratic: $\sigma(x) = x^2/2$. Let $U_W, v_W$ be defined as in \eqref{eq:asymp_cov_of_nonlinear_classification}. Then
	$$
	\SigmaERM = \E[U_W(X)]^{-1} \E[v_W(X) U_X(X)] \E[U_W(X)]^{-1}, 
	$$
	where
	\begin{align*}
	\E U_W(X) & = (W\otimes W) \cdot \E [\eta'(f(W,X))^2  \cdot  XX^\top \otimes XX^\top] \\
	\E [v_W(X) U_W(X)] & = (W\otimes W) \E[v_W(X) \eta'(f(W,X))^2 \cdot  XX^\top \otimes XX^\top].
	\end{align*}
	If we further assume that the group $G$ acts by cirlular shift \eqref{eq:circular_shift} and $\mbQ$ is the uniform distribution on $G$, then
	$$
	\E U_W(X) \SigmaAERM \E U_W(X) = (W\otimes W) \cdot d^{-2} \E[v_W(X) \eta'(f(W, X))^2 \cdot C_X C_X^\top \otimes C_X C_X^\top].
	$$
	Thus, the gain by augmentation is characterized by
	\begin{align*}
	& \E[U_W(X)](\SigmaERM - \SigmaAERM) \E[U_W(X)] \\
	& = (W\otimes W) \E \bigg[ v_W(X)\eta'(f(W,X))^2  \bigg( XX^\top\otimes XX^\top - C_X C_X^\top\otimes C_X C_X^\top \bigg) \bigg].
	\end{align*}
	\end{corollary}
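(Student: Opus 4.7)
The plan is to reduce everything to the tensor/circular-shift computations already carried out in Theorem \ref{thm:2lnn} and then use invariance at the true parameter to collapse the extra scalar factors that distinguish classification from regression. For the quadratic activation $\sigma(t) = t^2/2$ one has $\sigma'(Wx) = Wx$ and hence $\nabla_W f(W,x) = (Wx) x^\top$, so the tensor identity
\[
	\nabla f(W,x) \nabla f(W,x)^\top = (W\otimes W) \cdot (xx^\top \otimes xx^\top)
\]
used in the proof of Theorem \ref{thm:2lnn} carries over verbatim. Because $\eta'(f(W,X))^2$ and $v_W(X)$ are scalar functions of $X$ alone, they pass through the Kronecker factorization, so taking expectation immediately produces the claimed expressions for $\E U_W(X)$ and $\E[v_W(X) U_W(X)]$.

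For the augmented covariance, the key input is that the classification invariance $(gX,Y) =_d (X,Y)$ forces $f(W, gX) = f(W, X)$ at the true parameter (the analog of \eqref{eq:invariance_of_regression_function}). Hence both $\eta'(f(W, gX))$ and $v_W(gX)$ are $g$-invariant for each fixed $X$, and the gain expression in \eqref{eq:asymp_cov_of_augmented_nonlinear_classification} simplifies to
\[
	\E\!\left[v_W(X)\,\C_g\!\left(\eta'(f(W,gX))\, \nabla f(W,gX)\right)\right] = \E\!\left[v_W(X)\, \eta'(f(W,X))^2\, \C_g \nabla f(W,gX)\right].
\]
Next I would expand $\C_g \nabla f(W,gX) = \E_g[\nabla f \nabla f^\top] - \E_g[\nabla f]\E_g[\nabla f]^\top$ and apply the following symmetry step: since $X =_d gX$ and the scalar weight $v_W \eta'(f(W,\cdot))^2$ is $g$-invariant,
\[
	\E_X[v_W(X)\, \eta'(f(W,X))^2\, \nabla f(W,X)\nabla f(W,X)^\top] = \E_X[v_W(X)\, \eta'(f(W,X))^2\, \nabla f(W,gX)\nabla f(W,gX)^\top]
\]
for every $g$. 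Averaging over $g\sim\mbQ$, the $\E_g[\nabla f\nabla f^\top]$ piece cancels against $\E[v_W U_W]$, leaving
\[
	\E U_W(X)\,\SigmaAERM\,\E U_W(X) = \E_X\!\left[v_W(X)\, \eta'(f(W,X))^2\, \E_g\nabla f(W,gX)\,(\E_g \nabla f(W,gX))^\top\right].
\]

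To finish, I would compute $\E_g \nabla f(W, gX)$ for circular shifts using the same identity as in part 2 of Theorem \ref{thm:2lnn}: $d\cdot \E_g[(gX)(gX)^\top] = C_X C_X^\top$, combined with the Kronecker factorization, gives $(\E_g\nabla f(W,gX))(\E_g\nabla f(W,gX))^\top = d^{-2}(W\otimes W)(C_X C_X^\top \otimes C_X C_X^\top)$ as a tensor, which is the stated formula. The gain formula follows at once from $\E U_W(\SigmaERM - \SigmaAERM)\E U_W = \E[v_W U_W] - \E U_W \SigmaAERM \E U_W$. The main technical point is the symmetry step: swapping $\nabla f(W,X)\nabla f(W,X)^\top$ for $\nabla f(W,gX)\nabla f(W,gX)^\top$ inside a scalar-weighted expectation uses invariance \emph{at the true parameter}, so the specialization to the true $W$ must happen before the argument is invoked. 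Once in place, the rest is a direct reuse of Theorem \ref{thm:2lnn} together with the observation that $v_W$ and $\eta'(f(W,\cdot))^2$ commute with everything because they depend only on $X$.
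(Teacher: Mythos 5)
Your proof is correct and follows essentially the same route as the paper's: both hinge on the invariance $f(W,gx)=f(W,x)$ at the true parameter to pull the scalar factors $v_W(X)\,\eta'(f(W,X))^2$ out of the $g$-average, and on the quadratic-activation Kronecker/circulant identities from Theorem \ref{thm:2lnn} (in particular $d\,\E_g[(gX)(gX)^\top]=C_XC_X^\top$). The only cosmetic difference is that you enter via the gain formula \eqref{eq:asymp_cov_of_augmented_nonlinear_classification} and cancel the second-moment term by a change-of-variables/law-of-total-covariance step, whereas the paper writes $\E[U_W(X)]\,\SigmaAERM\,\E[U_W(X)]$ directly as the second moment of the orbit-averaged gradient; the two are equivalent.
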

	\begin{proof}
		See Appendix \ref{subappend:proof_of_2lnn_for_classification}.
	\end{proof}

	\subsection{Improving linear regression by augmentation distribution} \label{subsec:lin_reg_aug_distr}
	
	In this section, we provide an example on how to use the ``augmentation distribution'', described in Section \ref{sec:bey_erm}, to improve the performance of ordinary least squares estimator in linear regression. We will see that, perhaps unexpectedly, the idea of augmentation distribution gives an estimator nearly as efficient as the constrained ERM in many cases.

	We consider the classical linear regression model
	$$
	Y = X^\top \beta + \ep, \qquad \beta \in \RR^p.
	$$
	We again let $\gamma^2 = \E \ep^2$.
	We will assume that the action is linear, so that $g$ can be represented as a $p\times p$ matrix. If we augment by a single fixed $g$, we get
	\begin{align*}
	 \arg\min  \|y - Xg^\top \beta \|_2^2  =  ((Xg^\top)^\top Xg^\top)^{-1} (Xg^\top)^\top y.
	\end{align*}
	Following the ideas on augmentation distribution, we can then average the above estimator over $g \sim \mbQ$ to obtain
	$$
		\hbeta_{\textnormal{aDIST}} = \E_{g\sim \mbQ} \bigg[ ((Xg^\top)^\top Xg^\top)^{-1} (Xg^\top)^\top y \bigg].
	$$

	On the other hand, let us consider the estimator arising from constrained ERM. 
	By Equation \eqref{eq:invariance_of_regression_function}, we have
	$$
	x^\top \beta = (gx)^\top \beta
	$$
	for $\mbP_X$-a.e. $x$ and $\mbQ$-a.e. $g$. This is a set of \emph{linear constraints} on the regression coefficient $\beta$. Formally, supposing that $x$ can take any value (i.e., $\P_X$ has mass on the entire $\RR^p$), we conclude that $\beta$ is constrained to be in the invariant parameter subspace $\Theta_G$, which is a linear subspace defined by 
	$$\Theta_G = \{ v: g^\top v = v, \forall g\in G\}.$$
	If $x$ can only take values in a smaller subset of $\R^p$, then we get fewer constraints. So the constrained ERM, defined as
	$$
		\hbetaCERM  =  \underset{\beta}{\arg \min} \| y - X\beta \|_2^2 \qquad s.t. \ (g^\top - I_p) \beta = 0 \ \forall g \in G,
	$$
	can in principle, be solved via convex optimization. 

	Intuitively, we expect both $\hbeta_{\textnormal{aDIST}}$ and $\hbetaCERM$ to be better than the vanilla ERM
	$$
		\hbetaERM =  \underset{\beta}{\arg \min} \| y - X\beta \|_2^2,
	$$
	Let $\rERM, r_{\textnormal{aDIST}}, \rCERM$ be the mean squared errors of the three estimators. We summarize the relationship between the three estimators in the following proposition:
	\begin{proposition}[Comparison between ERM, aDIST and cERM in linear regression]
	\label{prop:risk_comparison_in_linear_regression}
	Let the action of $G$ be linear. Then:
	\benum
		\item Denote $v_j\in\RR^p$ as the $j$-th eigenvector of $X^T X$ and $d_j^2$ as the corresponding eigenvalue. We have
		\begin{align*}
		\rERM  = \gamma^2 \tr [X^\top X]^{-1} = \gamma^2 \sum_{j=1}^p  d_j^{-2}, \qquad r_{\textnormal{aDIST}}  = \gamma^2 \sum_{j=1} ^p d_j^{-2}  \| \mcG^\top v_j \|_2^2,	
		\end{align*}
		where $\mcG = \E_{g\sim \mbQ} [g]$.
		\item If $G$ acts orthogonally, then $r_{\textnormal{aDIST}}\leq \rERM$.
		\item If $G$ is the permutation group over $\{1, ..., p\}$, then
		$$
		r_{\textnormal{aDIST}} = \gamma^2 p^{-1} 1_p^\top (X^\top X)^{-1} 1_p, \qquad \rCERM = \gamma^2 p (1_p ^\top X^\top X 1_p)^{-1}.
		$$
		Furthermore, if $X$ is an orthogonal design so that $X^\top X = I_p$, we have
		$$
		\rERM  = p \gamma^2 , \qquad r_{\textnormal{aDIST}} = \rCERM = \gamma^2.
		$$
	\eenum
	\end{proposition}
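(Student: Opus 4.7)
The plan for part 1 is to obtain an explicit closed form for $\hat{\beta}_{\textnormal{aDIST}}$ and then read off the MSE. Writing $\tilde X := Xg^\top$ for the augmented design, the augmented OLS is
$$
\hat{\beta}_g = (\tilde X^\top \tilde X)^{-1}\tilde X^\top y = (gX^\top X g^\top)^{-1} g X^\top y;
$$
since every $g\in G$ is invertible as a linear map, this collapses to $g^{-\top}(X^\top X)^{-1} X^\top y = g^{-\top}\hat{\beta}_{\textnormal{ERM}}$, and averaging gives $\hat{\beta}_{\textnormal{aDIST}} = \E_g[g^{-\top}]\hat{\beta}_{\textnormal{ERM}}$. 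The key identity is that the Haar measure on a compact group is invariant under inversion, so $\E_g[g^{-1}] = \E_g[g] = \mathcal{G}$ and hence $\E_g[g^{-\top}] = \mathcal{G}^\top$. Unbiasedness then follows from the exact-invariance identity $g^\top\beta = \beta$, which yields $\mathcal{G}^\top\beta = \beta$. The conditional MSE given $X$ is $\gamma^2 \tr[\mathcal{G}^\top (X^\top X)^{-1}\mathcal{G}]$, and expanding $(X^\top X)^{-1} = \sum_j d_j^{-2} v_j v_j^\top$ yields the stated formula $\gamma^2\sum_j d_j^{-2}\|\mathcal{G}^\top v_j\|_2^2$; the ERM risk is the same computation with $\mathcal{G} = I_p$.

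Part 2 then reduces to a one-line operator-norm bound. Orthogonality gives $\|gv\| = \|v\|$ for each $g$, so by Jensen's inequality
$$
\|\mathcal{G}^\top v\| = \bigl\|\E_g[g^\top v]\bigr\| \leq \E_g\|g^\top v\| = \|v\|.
$$
Applying this to each eigenvector $v_j$ gives $\|\mathcal{G}^\top v_j\|_2^2 \leq 1$ term by term, so $r_{\textnormal{aDIST}} \leq r_{\textnormal{ERM}}$.

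For part 3 I specialize to the symmetric group. A uniformly random permutation matrix $P$ has $\E[P_{ij}] = 1/p$ for every $(i,j)$, so $\mathcal{G} = p^{-1}\,1_p 1_p^\top$. Hence $\|\mathcal{G}^\top v\|_2^2 = (1_p^\top v)^2/p$, and substituting into the part 1 formula gives $r_{\textnormal{aDIST}} = (\gamma^2/p)\sum_j d_j^{-2}(1_p^\top v_j)^2 = (\gamma^2/p)\,1_p^\top (X^\top X)^{-1} 1_p$. For the constrained estimator, the invariant subspace of permutation-symmetric vectors is the one-dimensional line $\Theta_G = \{c\,1_p : c\in\R\}$; parametrizing $\beta = c\,1_p$ reduces the problem to a scalar least squares with $\hat c = (1_p^\top X^\top X 1_p)^{-1}\,1_p^\top X^\top y$, so $\hat{\beta}_{\textnormal{cERM}} = \hat c\cdot 1_p$ has MSE $p\gamma^2 (1_p^\top X^\top X 1_p)^{-1}$. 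Plugging $X^\top X = I_p$ into all three formulas then yields $r_{\textnormal{ERM}} = p\gamma^2$ and $r_{\textnormal{aDIST}} = r_{\textnormal{cERM}} = \gamma^2$.

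The only step that needs any care is the Haar identity $\E_g[g^{-1}] = \E_g[g]$ invoked in part 1; I would cite this as a standard consequence of the unimodularity of compact groups rather than prove it from scratch. Every other step is straightforward linear algebra, so the main obstacle is essentially bookkeeping in the $g^{-\top}$ computation, which the Haar identity lets us rewrite cleanly in terms of $\mathcal{G}$.
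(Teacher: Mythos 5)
Your proof is correct and follows essentially the same route as the paper's: closed form for $\hat\beta_{\textnormal{aDIST}}$ via the matrix identity $(gX^\top Xg^\top)^{-1}gX^\top = g^{-\top}(X^\top X)^{-1}X^\top$, an eigendecomposition of $(X^\top X)^{-1}$ for the risk, Jensen plus orthogonality for part 2, and $\mathcal{G}=p^{-1}1_p1_p^\top$ with a scalar least-squares reduction for part 3. Your explicit appeal to inversion-invariance of the Haar measure to write $\E_g[g^{-\top}]=\mathcal{G}^\top$ is actually a cleaner justification of a step the paper glosses over.
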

	\begin{proof}
	See Appendix \ref{subappend:proof_of_risk_comparison_in_linear_regression}.
	\end{proof}

	In general, constrained ERM can be even more efficient than the estimator obtained by the augmention distribution. However, by the third point in the above proposition, in the special case where $G$ is the permutation group, we have $r_{\textnormal{aDIST}}=\rCERM \ll \rERM$ when the dimension $p$ is large. A direct extension of the above proposition shows that such a phenomenon occurs when $G$ is the permutation group on a subset of $\{1, \ldots, p\}$. There are several other subgroups of interest of the permutation group, including the group of cyclic permutations and the group that contains the identity and the operation that ``flips" or reverses each vector.
	
	We note briefly that the above results apply \emph{mutatis mutandis} to logistic regression. There, the outcome $Y \in \{-1,1\}$ is binary, and $P(Y=1|X=x) = \sigma(x^\top \beta)$, where $\sigma(z) = 1/(1+\exp(-x))$ is the sigmoid function. The invariance condition reduces to the same as for linear regression. We omit the details.


\section{Extension to approximate invariance} \label{sec:approx_inv}
    In this section, we develop extensions of the results in Section \ref{sec:exact_inv} without assuming exact invariance. We only require \emph{approximate invariance} $gX \approx_d X$ in an appropriate sense. We start by recalling the notion of distance between probability distributions based on transporting the mass from one distribution to another (see, e.g., \citealt{villani2003topics} and references therein):
    \begin{definition}[Wasserstein metric]
        Let $\xx$ be a Polish space. Let $d$ be a lower semi-continuous metric on $\xx$. For two probability distributions $\mu, \nu$ on $\xx$, we define
        $$
        \ww_d(\mu, \nu) = \inf_{\pi \in \Pi(\mu, \nu)} \int_{\xx \times \xx} d(x, y) d\pi(x, y),
        $$
        where $\Pi(\mu, \nu)$ are all couplings whose marginals agree with $\mu$ and $\nu$. When $\xx$ is a Euclidean space and $d$ is the Euclidean distance, we denote $\ww_{\ell_2} \equiv \ww_1$ and refer to it as the Wasserstein-$1$ distance.
    \end{definition}
    
    \subsection{General estimators}

    Since we no longer have $gX=_d X$, the equation $\E f = \E \bar f$ cannot hold in general. However, we still expect some variance reduction by averaging a function over the orbit. Hence, we should see a bias-variance tradeoff, which is made clear in the following lemma. For symmetric matrices $A,B,C$, we will use the notation $A \in [B,C]$ to mean that $B \preceq A \preceq C$ in the Loewner order.

    \begin{lemma}[Approximate invariance lemma]
        \label{lemma:approx_inv_lem}
        Assume the conditions in Lemma \ref{lemma:exact_invariance_lemma} hold, but $gX \neq_d X$. Let $\|f\|_\infty = \sup \|f(x)\|_2$ (which can be $\infty$). Then:
        \benum
        \item The expectations satisfy $\| \E_X \bar f(X) - \E_X f(X) \|_2 \leq \E_g \ww_1(f(gX), f(X))$;
        \item The covariances satisfy $\C_X \bar f(X) = \C_{(X, g)} f(gX) - \E_X \C_g f(gX)$, and according to the Loewner order, we have 
        \begin{align*}
        \C_{X} \bar f(X)  - \C_X f(X) \in \bigg[- \E_X \C_g f(gX)  \pm 4\|f\|_\infty \E_g \ww_1(f(gX), f(X)) \cdot I \bigg],
        \end{align*}
        where $I$ is the identity matrix;
        \item Let $\varphi$ be any real-valued convex function, and let $\overline{\varphi \circ f} (x) = \E_g \varphi\circ f(gx) $. Then
        $$
        \E_X \varphi(\bar f(X)) - \E_X \varphi(f(X)) \in \bigg[\bigg(\E_X \varphi(\bar f(X))  - \E_X \overline{\varphi \circ f} (X)\bigg) \pm \|\varphi\|_{\textnormal{Lip}} \E_g \ww_1(f(gX), f(X))\bigg],
        $$
        where $\|\varphi\|_{\textnormal{Lip}}$ is the (possibly infinite) Lipschitz constant of $\varphi$. We recall from our prior results that $\E_X \varphi(\bar f(X))  - \E_X \overline{\varphi \circ f} (X) \leq 0$.
        \eenum
    \end{lemma}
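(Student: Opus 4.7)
The statement extends Lemma \ref{lemma:exact_invariance_lemma}, where exact invariance yielded the identities $\E f = \E \bar f$ and $\C_X \bar f = \C_X f - \E_X \C_g f$; here the discrepancies are controlled by Wasserstein distances. The unifying device is the coupling characterization of $\ww_1$: for any joint distribution $\pi$ with marginals equal to the laws of $f(gX)$ and $f(X)$, Jensen's inequality yields $\|\E_\pi A - \E_\pi B\|_2 \leq \E_\pi\|A-B\|_2$, and for any $L$-Lipschitz scalar $\varphi$ we have $|\E\varphi(A) - \E\varphi(B)| \leq L\E_\pi\|A-B\|_2$; taking the infimum over couplings gives bounds by $\ww_1(f(gX), f(X))$.

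Part 1 is immediate: $\E\bar f(X) - \E f(X) = \E_g[\E f(gX) - \E f(X)]$, apply the coupling bound inside, and pull the norm through $\E_g$ by the triangle inequality.

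For part 2, the identity $\C_X \bar f(X) = \C_{(X,g)} f(gX) - \E_X \C_g f(gX)$ is precisely the law of total covariance applied to $f(gX)$ conditioning on $X$, since $\E[f(gX)\mid X] = \bar f(X)$ and $\C[f(gX)\mid X] = \C_g f(gX)$. Rearranging, the Loewner bound reduces to showing $\|\C_{(X,g)} f(gX) - \C_X f(X)\|_{\textnormal{op}} \leq 4\|f\|_\infty \E_g\ww_1$. Expanding the covariance difference as the sum of two pieces, the second-moment difference and the outer-product-of-means difference, I would handle the first piece by writing $f(gX)f(gX)^\top - f(X)f(X)^\top = f(gX)(f(gX)-f(X))^\top + (f(gX)-f(X))f(X)^\top$ under an optimal coupling taken separately for each $g$, and bounding each cross term on unit test vectors $u$ by $\|f\|_\infty\E_g\E\|f(gX)-f(X)\|_2 \leq \|f\|_\infty \E_g\ww_1$, for a total of $2\|f\|_\infty \E_g\ww_1$. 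For the outer-product-of-means piece $\mu_{gX}\mu_{gX}^\top - \mu\mu^\top$, the identity $(u^\top\mu_{gX})^2 - (u^\top\mu)^2 = (u^\top(\mu_{gX}-\mu))(u^\top(\mu_{gX}+\mu))$ combined with $\|\mu_{gX} - \mu\|_2 \leq \E_g\ww_1$ from part 1 and $\|\mu_{gX}\|_2 + \|\mu\|_2 \leq 2\|f\|_\infty$ yields another $2\|f\|_\infty \E_g\ww_1$, and the two pieces add to the claimed constant.

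For part 3, Jensen's inequality applied conditionally on $X$ gives $\varphi(\bar f(x)) = \varphi(\E_g f(gx)) \leq \E_g \varphi(f(gx)) = \overline{\varphi\circ f}(x)$, so $\E_X\varphi(\bar f) - \E_X\overline{\varphi\circ f} \leq 0$. The remaining piece $\E_X\overline{\varphi\circ f}(X) - \E_X\varphi(f(X)) = \E_g[\E\varphi(f(gX)) - \E\varphi(f(X))]$ is bounded in absolute value by $\|\varphi\|_{\textnormal{Lip}}\E_g\ww_1$ via Kantorovich--Rubinstein; adding the two contributions yields the stated two-sided interval. The main technical obstacle will be the bookkeeping in part 2: one must argue the cross-term bounds \emph{uniformly over unit test vectors} to upgrade scalar estimates to Loewner-order estimates, and convert $L^2$-type expressions to $\ww_1$ via $\|f\|_\infty$ without losing a factor in the constant.
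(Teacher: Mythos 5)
Your proof is correct and follows essentially the same route as the paper's: the law of total covariance for the identity, the same decomposition of $\C_{(X,g)}f(gX)-\C_X f(X)$ into a second-moment difference and an outer-product-of-means difference (each contributing $2\|f\|_\infty\E_g\ww_1$), and Jensen plus a Lipschitz/$\ww_1$ bound for part 3. The only cosmetic difference is that you invoke the primal (coupling) characterization of $\ww_1$ where the paper uses the Kantorovich--Rubinstein dual; these are interchangeable and yield the same constants.
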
    
    \begin{proof}
        See Appendix \ref{subappend:proof_of_approx_inv_lemma}.
    \end{proof}
    The above lemma reduces to Lemma \ref{lemma:exact_invariance_lemma} under exact invariance. With this lemma at hand, we prove an analog of Proposition \ref{prop:cov_of_general_estimators} below, which characterizes the mean squared error if we think of $f$ as a general estimator:

	\begin{proposition}[MSE of general estimators under approximate invariance]
	\label{prop:mse_of_general_estimators_under_approx_inv}
	Under the setup of Lemma \ref{lemma:approx_inv_lem}, consider the estimator $\htheta(X)$ of $\thetanull$, and its augmented version $\htheta_G(X) = \E_{g\sim\mbQ} \htheta(gX)$. Then we have
	$$
	\textnormal{MSE}(\htheta_G) - \textnormal{MSE}(\htheta) \in \bigg[ -\E_X \tr(\C_g \htheta (gX)) \pm \Delta \bigg],
	$$
	where
	$$
	\Delta = \E_g\ww_1(\htheta(gX), \htheta(X))
	\cdot  \bigg[ \E_g\ww_1(\htheta(gX), \htheta(X)) + 2 \|\textnormal{Bias}(\htheta(X))\|_2 + 4\|\htheta\|_\infty \bigg].
	$$
	\end{proposition}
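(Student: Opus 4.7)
The plan is to decompose the mean squared error into the usual bias-variance form and handle each piece separately, invoking the two relevant parts of Lemma \ref{lemma:approx_inv_lem}. Writing $b(\htheta) := \E_X \htheta(X) - \thetanull$ and analogously for $\htheta_G$, we have
$$
\textnormal{MSE}(\htheta) = \|b(\htheta)\|_2^2 + \tr(\C_X \htheta(X)), \qquad \textnormal{MSE}(\htheta_G) = \|b(\htheta_G)\|_2^2 + \tr(\C_X \htheta_G(X)),
$$
so the task is to control $\|b(\htheta_G)\|^2 - \|b(\htheta)\|^2$ and $\tr(\C_X \htheta_G) - \tr(\C_X \htheta)$ separately and then combine. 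Let $B := \E_g \ww_1(\htheta(gX), \htheta(X))$ so that $\Delta = B(B + 2\|b(\htheta)\| + 4\|\htheta\|_\infty)$.

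For the bias term, part 1 of Lemma \ref{lemma:approx_inv_lem} gives $\|b(\htheta_G) - b(\htheta)\|_2 \le B$ (since $\E_X \htheta_G(X) = \E_{(X,g)} \htheta(gX)$ by Fubini). Expanding and applying Cauchy--Schwarz,
$$
\bigl|\|b(\htheta_G)\|_2^2 - \|b(\htheta)\|_2^2\bigr|
= \bigl|\|b(\htheta_G)-b(\htheta)\|_2^2 + 2\langle b(\htheta_G)-b(\htheta),\, b(\htheta)\rangle\bigr|
\le B^2 + 2B\|b(\htheta)\|_2.
$$

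For the covariance term, we use the decomposition from part 2 of Lemma \ref{lemma:approx_inv_lem}, namely $\C_X \htheta_G(X) = \C_{(X,g)} \htheta(gX) - \E_X \C_g \htheta(gX)$, and take traces to get
$$
\tr(\C_X \htheta_G) - \tr(\C_X \htheta) = -\E_X \tr(\C_g \htheta(gX)) + \bigl[\tr(\C_{(X,g)} \htheta(gX)) - \tr(\C_X \htheta(X))\bigr].
$$
The bracketed remainder equals $\E\|\htheta(gX)\|^2 - \|\E\htheta(gX)\|^2 - \E\|\htheta(X)\|^2 + \|\E\htheta(X)\|^2$. Since both $\htheta(gX)$ and $\htheta(X)$ take values in the ball of radius $\|\htheta\|_\infty$, the map $y\mapsto \|y\|^2$ is $2\|\htheta\|_\infty$-Lipschitz on the relevant support, so by Kantorovich--Rubinstein duality combined with the convexity bound $\ww_1(\mathrm{law}(\htheta(gX)), \mathrm{law}(\htheta(X))) \le \E_g \ww_1(\htheta(gX), \htheta(X)) = B$, we obtain $|\E\|\htheta(gX)\|^2 - \E\|\htheta(X)\|^2| \le 2\|\htheta\|_\infty B$. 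For the square-of-expectation piece, the identity $\|u\|^2 - \|v\|^2 = \langle u-v, u+v\rangle$ and part 1 give $|\|\E\htheta(gX)\|^2 - \|\E\htheta(X)\|^2| \le 2\|\htheta\|_\infty B$. Hence the remainder is bounded by $4\|\htheta\|_\infty B$ in absolute value.

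Adding the two bounds yields $\textnormal{MSE}(\htheta_G) - \textnormal{MSE}(\htheta) \in [-\E_X \tr(\C_g \htheta(gX)) \pm \Delta]$ as claimed. The main subtlety is avoiding a spurious dimension factor: applying the Loewner bound from part 2 of Lemma \ref{lemma:approx_inv_lem} directly and then taking trace would introduce a factor of $p$ from $\tr(I_p)$. The remedy is to take the trace first and bound the resulting scalar differences via Wasserstein duality against the single quadratic test function $y\mapsto\|y\|^2$ on the bounded range, which gives a dimension-free constant $4\|\htheta\|_\infty$. The remaining steps are routine and just require tracking signs carefully.
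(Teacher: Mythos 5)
Your proof is correct and follows essentially the same route as the paper's: a bias--variance decomposition of the MSE, with the bias difference bounded via part 1 of Lemma \ref{lemma:approx_inv_lem} (giving the $B^2 + 2B\|\textnormal{Bias}(\htheta)\|_2$ term) and the variance difference handled by taking traces of the two remainder matrices in the law-of-total-variance decomposition and bounding each by $2\|\htheta\|_\infty B$ via Kantorovich--Rubinstein duality applied to $y\mapsto\|y\|_2^2$ on the bounded range. Your remark about taking the trace first to avoid a spurious dimension factor is exactly what the paper's argument does as well.
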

	\begin{proof}
		See Appendix \ref{subappend:proof_of_mse_of_general_estimators_under_approx_inv}.
	\end{proof}
	Compared with exact invariance , apart from the variance reduction term $\E_X\tr(\C_g \htheta (gX))$, we have an additional ``bias'' term  $\Delta$ as a result of approximate invariance. This bias has three components in the present form. We need the following to be small: (1) the Wasserstein-$1$ distance $\E_g\ww_1(\htheta(gX), \htheta(X))$ (which is small if the invariance is close to being exact), (2) the bias of the original estimator $\textnormal{Bias}(\htheta(X))$ is small (which is a minor requirement, because otherwise there is no point in using this estimator), and (3) the sup-norm $\|\htheta\|_\infty$ is small (which is reasonable, because we can always standardize the estimating target $\thetanull$). 

	\subsection{ERM / M-estimators}
	We now extend the results on the behavior of ERM. Recall that $\thetanull$ and $\theta_G$ are minimizers of the population risks $\E L(\theta, X)$ and $\E \bar L(\theta, X)$, respectively. Meanwhile, $\htheta_n$ and $\htheta_{n, G}$ are minimizers of the empirical risks $n^{-1} \sum_{i=1}^n L(\theta, X_i)$ and $n^{-1} \sum_{i=1}^n \bar L(\theta, X_i)$, respectively. Under regularity conditions, it is clear that $\htheta_{n, G}$ is an asymptotically unbiased estimator for $\theta_G$. However, under approximate invariance, some quantities, for example, $\thetanull$ and $\theta_G$, may not coincide exactly, introducing an additional bias. As a result, there is a bias-variance tradeoff, similar to that observed in Proposition \ref{prop:mse_of_general_estimators_under_approx_inv}. 

	We first illustrate this tradeoff for the generalization error, in an extension of Theorem \ref{thm:rademacher_bound_under_exact_inv}:

	\begin{theorem}[Rademacher bounds under approximate invariance]
	\label{thm:rademacher_bound_under_approx_inv}
	Let $L(\theta, \cdot)$ be Lipschitz uniformly over $\theta\in \Theta$, with a (potentially infinite) Lipschitz constant $\|L\|_{\textnormal{Lip}}$. Assume  $L(\cdot, \cdot) \in [0, 1]$. Then with probability at least $1-\delta$ over the draw of $X_1,\ldots,X_n$, we have
		\begin{align*}
		\E L(\htheta_n, X) - \E L(\thetanull, X) & \leq 2 \rad_n(L \circ \Theta) +  \sqrt{\frac{2\log 2/\delta}{n}}
		\textnormal{   \,\,\,\,\,\, (Classical Rademacher bound)}
		\\
		\E L(\htheta_{n, G}, X) - \E L(\thetanull, X) & \leq 2 \rad_n(\bar L \circ \Theta) +  \sqrt{\frac{2\log 2/\delta}{n}} + 2 \| L \|_{\textnormal{Lip}}\cdot \E_{g\sim \mbQ} \ww_1(X, gX).\\
		&\qquad \qquad \textnormal{   \,\,\,\,\,\, (Bound for augmentation under approx invariance)}
		\end{align*}
		Moreover, the Rademacher complexity of the augmented loss class can further be bounded as 
        \begin{equation*}
        \rad_n(\bar L \circ \Theta) -  \rad_n(L\circ \Theta) \leq \Delta + \| L \|_{\textnormal{Lip}}\cdot\E_{g\sim\mbQ} \ww_1(X, gX),
        \end{equation*}
        where
        \begin{align*}
        \Delta &= \E \sup_{\theta } | \frac{1}{n}\sum_{i=1}^n \ep_i \E_g L(\theta, gX_i) | - \E \E_g \sup_{\theta\in\Theta} | \frac{1}{n} \ep_i L(\theta, gX_i) | \leq 0
        \end{align*}
        is the ``variance reduction'' term.
	\end{theorem}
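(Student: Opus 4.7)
The proof naturally decomposes into three pieces, mirroring the three bounds in the statement. The plan is to reduce each to a standard Rademacher/McDiarmid calculation applied either to the original loss class $L\circ\Theta$ or to the averaged loss class $\bar L\circ\Theta$, absorbing the mismatch between the laws of $X$ and $gX$ through the Lipschitz hypothesis on $L$ via Kantorovich--Rubinstein duality. The classical bound on $\htheta_n$ is a textbook combination of McDiarmid's inequality on $\sup_\theta |\E L(\theta,X)-R_n(\theta)|$ (with bounded-differences constant $1/n$), the symmetrization inequality $\E\sup_\theta|\E L(\theta,X)-R_n(\theta)|\le 2\rad_n(L\circ\Theta)$, and Hoeffding applied to $R_n(\thetanull)-\E L(\thetanull,X)$, combined with a union bound.

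For the second bound, I would decompose
\[
\E L(\htheta_{n,G},X)-\E L(\thetanull,X) = A + B + C
\]
with $A=\E L(\htheta_{n,G},X)-\E\bar L(\htheta_{n,G},X)$, $B=\E\bar L(\htheta_{n,G},X)-\E\bar L(\thetanull,X)$, and $C=\E\bar L(\thetanull,X)-\E L(\thetanull,X)$. By Jensen and Kantorovich--Rubinstein duality, for any $\theta$, $|\E L(\theta,X)-\E\bar L(\theta,X)|\le \E_g|\E L(\theta,X)-\E L(\theta,gX)|\le \|L\|_{\textnormal{Lip}}\E_g\ww_1(X,gX)$, which handles both $A$ and $C$. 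For $B$, since $\htheta_{n,G}$ minimizes $\bar R_n(\theta)=n^{-1}\sum_i\bar L(\theta,X_i)$, we have $B\le \sup_\theta|\bar R_n(\theta)-\E\bar L(\theta,X)|+|\bar R_n(\thetanull)-\E\bar L(\thetanull,X)|$; applying the first-part machinery to the class $\bar L\circ\Theta$ (noting that $\bar L\in[0,1]$ since it is an average of a $[0,1]$-valued loss) gives $B\le 2\rad_n(\bar L\circ\Theta)+\sqrt{2\log(2/\delta)/n}$ with probability at least $1-\delta$.

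For the Rademacher comparison, the nonpositivity of $\Delta$ follows from two applications of Jensen: $|\E_g h(g)|\le\E_g|h(g)|$ pulls $\E_g$ outside the absolute value, and $\sup_\theta\E_g\le\E_g\sup_\theta$ pulls it outside the supremum. It then remains to control $\E\E_g\sup_\theta|n^{-1}\sum_i\ep_i L(\theta,gX_i)|-\rad_n(L\circ\Theta)$. For each fixed $g$, let $\pi_g$ be a Wasserstein-$1$-optimal coupling of the laws of $X$ and $gX$; drawing i.i.d.\ pairs $(X_i,Y_i)\sim\pi_g$ so that $X_i\sim\mbP$ and $Y_i$ has the law of $gX$, the inequality $\sup_\theta a(\theta)-\sup_\theta b(\theta)\le\sup_\theta(a(\theta)-b(\theta))$, the Lipschitzness of $L(\theta,\cdot)$, and $|\ep_i|=1$ give
\[
\sup_\theta\Big|\tfrac1n\sum_i\ep_i L(\theta,Y_i)\Big|-\sup_\theta\Big|\tfrac1n\sum_i\ep_i L(\theta,X_i)\Big|\le \tfrac{\|L\|_{\textnormal{Lip}}}{n}\sum_i\|Y_i-X_i\|.
\]
Taking expectation under $\pi_g^{\otimes n}$ produces $\|L\|_{\textnormal{Lip}}\ww_1(X,gX)$; since $L(\theta,Y_i)$ and $L(\theta,gX_i)$ are equidistributed, the left-hand expectation equals $\E\sup_\theta|n^{-1}\sum_i\ep_i L(\theta,gX_i)|-\rad_n(L\circ\Theta)$. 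Integrating over $g\sim\mbQ$ and adding $\Delta$ yields the stated bound.

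The principal obstacle is the measurable-selection step in the third part: existence of a Wasserstein-optimal $\pi_g$ requires $\xx$ to be Polish with $\ww_1(X,gX)<\infty$, and joint measurability of $g\mapsto\pi_g$ is needed so that Fubini permits exchanging $\E_g$ with the expectation over the coupled pairs; this is standard under Polish-space assumptions but worth highlighting. In the degenerate cases $\|L\|_{\textnormal{Lip}}=\infty$ or $\E_g\ww_1(X,gX)=\infty$, both bounds become vacuous but remain formally correct.
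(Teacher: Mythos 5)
Your proof is correct and follows essentially the same route as the paper's: the same classical McDiarmid--symmetrization argument for $\htheta_n$, the same decomposition of the augmented generalization error into two Wasserstein "bias" terms plus a standard ERM bound on the class $\bar L\circ\Theta$, and the same Jensen-plus-Lipschitz argument for the Rademacher comparison. The only cosmetic difference is that you bound the shift from $L(\theta,X_i)$ to $L(\theta,gX_i)$ via explicit coordinate-wise optimal couplings (the primal formulation), whereas the paper packages the identical estimate as a tensorization lemma combined with Kantorovich--Rubinstein duality applied to the Lipschitz function $(x_1,\ldots,x_n)\mapsto\sup_\theta|n^{-1}\sum_i\ep_i L(\theta,x_i)|$.
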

	\begin{proof}
		See Appendix \ref{subappend:proof_of_rademacher_bound_under_approx_inv}.
	\end{proof}
	Compared to Theorem \ref{thm:rademacher_bound_under_exact_inv}, we see that there is an additional bias term $2 \| L \|_{\textnormal{Lip}}\cdot \E_g \ww_d(X, gX)$ due to approximate invariance. The bound $\Delta\leq 0$ can be loose, and this variance reduction term can be much smaller than zero. Consequently, as long as $\E_g\ww_1(X, gX)$ is small, the above theorem can potentially yield a tighter bound for the augmented estimator $\htheta_{n, G}$. 

	We now illustrate the bias-variance tradeoff in the asymptotic regime. The next theorem is analogous to Theorem \ref{thm:asymp_normality_augmented_estimator}:
	\begin{theorem}[Asymptotic normality under approximate invariance]
		\label{thm:asymp_normality_augmented_estimator_under_approx_inv}
		Assume $\Theta$ is open. Let Assumption \ref{assump:regularity_of_thetanull} and \ref{assump:regularity_of_loss} hold for both pairs $(\thetanull, L)$ and $(\theta_G, \bar L)$. Let $V_0, V_{G}$ be the Hessian of $\theta\mapsto \E L(\theta, X)$ and $\theta\mapsto \E \bar L(\theta, X)$, respectively. Let  $M_0(X) = \nabla L(\thetanull, X)\nabla L(\thetanull, X)^\top$ and $M_G(X) = \nabla L(\theta_G, X)\nabla L(\theta_G, X)^\top$.
		Then we have
		\begin{align*}
        & n(\textnormal{MSE}(\htheta_{n, G}) - \textnormal{MSE}(\htheta_n) ) \to  n \|\theta_G - \thetanull\|_2^2 + \E_g \E_X \bigg\la M_G(gX) - M_G(X), V_G^{-2} \bigg\ra \\
        & \qquad\qquad \qquad \qquad \qquad \qquad  + \E_X \bigg\la M_G(X) - M_0(X), V_G^{-2}  \bigg\ra +\la \C_X\nabla L(\thetanull, X), V_G^{-2} - V_0^{-2} \ra \\
        & \qquad\qquad \qquad \qquad \qquad \qquad  - \la \E_X \C_g(\nabla L(\theta_G, gX)), V_{G}^{-2} \ra.
        \end{align*}
	\end{theorem}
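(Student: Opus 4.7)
My approach combines a bias--variance split of each MSE with the asymptotic normality result of Theorem~\ref{thm:asymp_normality_augmented_estimator} applied, respectively, to the pairs $(\thetanull, L)$ and $(\theta_G, \bar L)$; the assumed regularity of both pairs is exactly what makes the theorem applicable to the un-augmented and augmented estimators in parallel. Writing $\textnormal{MSE}(\htheta) = \|\E\htheta - \thetanull\|_2^2 + \tr\Cov(\htheta)$, and using that $\htheta_n$ is consistent for $\thetanull$ while $\htheta_{n,G}$ is consistent for $\theta_G$ (with the usual $O(1/n)$ bias of a smooth M-estimator absorbing the cross term $\la\E\htheta_{n,G}-\theta_G,\theta_G-\thetanull\ra$), the squared-bias contribution collapses to $n\|\theta_G-\thetanull\|_2^2 + o(1)$ after multiplication by $n$. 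This reduces the claim to identifying the limit of $n\tr\Cov(\htheta_{n,G}) - n\tr\Cov(\htheta_n)$.

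The next step invokes Theorem~\ref{thm:asymp_normality_augmented_estimator} together with a standard uniform-integrability argument for M-estimator second moments. This gives $n\tr\Cov(\htheta_n)\to\tr\Sigma_0$ with $\Sigma_0 = V_0^{-1}\E M_0(X) V_0^{-1}$, and $n\tr\Cov(\htheta_{n,G})\to\tr\Sigma_G$ with $\Sigma_G = V_G^{-1}\E[\nabla\bar L(\theta_G,X)\nabla\bar L(\theta_G,X)^\top]V_G^{-1}$. Since $\theta_G$ minimizes $\E\bar L(\theta,X)$, first-order optimality yields $\E\nabla\bar L(\theta_G,X)=0$, so the inner expectation in $\Sigma_G$ equals the covariance of the orbit-averaged gradient. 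Applying the law of total covariance across the joint law of $(X,g)$ (as in part~3 of Lemma~\ref{lemma:exact_invariance_lemma}) gives
\[
\E[\nabla\bar L(\theta_G,X)\nabla\bar L(\theta_G,X)^\top] = \E_g\E_X M_G(gX) - \E_X\C_g\nabla L(\theta_G,gX),
\]
so $\tr\Sigma_G = \la\E_g\E_X M_G(gX),V_G^{-2}\ra - \la\E_X\C_g\nabla L(\theta_G,gX),V_G^{-2}\ra$, while $\tr\Sigma_0 = \la\E M_0(X),V_0^{-2}\ra$.

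It then remains to rearrange $\tr\Sigma_G-\tr\Sigma_0$ into the four displayed terms. Use the score identity $\E\nabla L(\thetanull,X)=0$, whence $\C_X\nabla L(\thetanull,X)=\E M_0(X)$, to rewrite the expression $\la\E M_0(X),V_G^{-2}\ra-\la\E M_0(X),V_0^{-2}\ra$ as $\la\C_X\nabla L(\thetanull,X),V_G^{-2}-V_0^{-2}\ra$; this produces the third term on the RHS. Then add and subtract $\la\E M_G(X),V_G^{-2}\ra$ and $\la\E M_0(X),V_G^{-2}\ra$ in telescoping fashion to split $\la\E_g\E_X M_G(gX)-\E M_0(X),V_G^{-2}\ra$ into $\E_g\E_X\la M_G(gX)-M_G(X),V_G^{-2}\ra + \E_X\la M_G(X)-M_0(X),V_G^{-2}\ra$, producing the first two displayed terms; the final $-\la\E_X\C_g\nabla L(\theta_G,gX),V_G^{-2}\ra$ term from $\tr\Sigma_G$ is already in the required form. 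The main obstacle is not this algebra but the two pieces of regularity implicit in steps 1 and 2: a quantitative control on the bias of $\htheta_{n,G}$ for $\theta_G$ beyond mere consistency (so that $n\cdot|\la\E\htheta_{n,G}-\theta_G,\theta_G-\thetanull\ra|$ genuinely vanishes), and uniform integrability of $\|\sqrt{n}(\htheta-\cdot)\|_2^2$ to upgrade the distributional limit to a second-moment limit. Both typically require either a uniform moment bound on $\nabla L$ or a compactness assumption on $\Theta$, neither of which is an explicit part of Assumption~\ref{assump:regularity_of_loss}; standard M-estimator Bahadur expansions with controlled remainders are the usual route.
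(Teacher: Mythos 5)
Your proposal is correct and follows essentially the same route as the paper's proof: a bias--variance split of each MSE, the Bahadur representation/asymptotic covariance from Theorem \ref{thm:asymp_normality_augmented_estimator} applied to both pairs, the law-of-total-covariance identity $\C_X\nabla\bar L(\theta_G,X)=\E_g\E_X M_G(gX)-\E_X\C_g\nabla L(\theta_G,gX)$, and the same telescoping rearrangement into the four displayed terms. Your closing remarks on uniform integrability of $\|\sqrt{n}(\htheta-\cdot)\|_2^2$ and on controlling the cross term in the bias are legitimate points that the paper's proof leaves implicit rather than addressing explicitly.
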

	\begin{proof}
		See Appendix \ref{subappend:proof_of_asymp_normality_under_approx_inv}.
	\end{proof} 
	From the above theorem, we see that the the variance reduction is determined by the term
	$$
	\E_X \C_g(\nabla L(\theta_G, gX)),
	$$ 
	whereas we have three extra bias terms on the RHS due to approximate invariance. Indeed, if $gX$ and $X$ are close in the $\ww_1$ metric, one can check that the three additional bias terms are small. For example, by the dual representation of Wasserstein metric, we have
	\begin{align*}
		\E_g \E_X \bigg\la M_G(gX) - M_G(X), V_G^{-2} \bigg\ra & \leq \mathbb{E}_g \mathcal{W}_1 \bigg(\la M_G(gX), V_G^{-2}\ra, \la M_G(X) , V_G^{-2} \ra\bigg) \\
		\E_X \bigg\la M_G(X) - M_0(X), V_G^{-2}  \bigg\ra 
    & \leq \mathcal{W}_1 \bigg( \la M_G(X), V_G^{-2}\ra , \la  M_0(X), V_G^{-2}\ra \bigg),
	\end{align*}
	and the upper bound goes to zero as we approach exact invariance. Hence we can see an efficiency gain by data augmentation. 



\section{A Case Study on Over-Parameterized Two-Layer Nets}
\label{sec:overparam_2l_nets}
The result on two-layer nets in Section \ref{sec:ex} has three major limitations. First, we only considered quadratic activations, whereas in practice rectified linear units (ReLU) are usually used. Also, the entire result concerns the under-parameterized regime, whereas in practice, neural nets are often over-parameterized. Finally, we assumed that we can optimize the weights of the neural network to consistently estimate the true weights, which can be a hard nonconvex optimization problem with only partially known solutions. 

In this section, we present an example that addresses the above three limitations: we consider a ReLU network; we allow the input dimension and the number of neurons to scale with $n$; we use gradient descent for training.

Consider a binary classification problem, where the data points $\{X_i, Y_i\}_1^n \subseteq \mathbb{S}^{n-1}\times \{\pm 1\}$ are sampled i.i.d. from some data distribution. For technical convenience, we assume $|G|< \infty$, and $gX\in \mathbb{S}^{d-1}$ for all $g\in G$ and almost every $X$ from the feature distribution. We again consider a two-layer net $f(x; W, a) = \frac{1}{\sqrt{m}} a^\top \sigma(Wx)$, where $W\in \R^{m\times d}, a\in \R^m$ are the weights and $\sigma(x) = \max(x, 0)$ is the ReLU activation. We consider the following initialization: $W_{ij}\sim \N(0, 1), a_s \sim \textnormal{unif}(\{\pm 1\})$. Such a setup is common in recent literature on Neural Tangent Kernel.

In the training process, we \emph{fix $a$ and only train $W$}. We use the logistic loss $\ell(z) = \log(1+e^{-z})$. In our previous notation, we have $L(\theta, X, Y) = \ell(Y f(X; W, a))$. The weight is then trained by gradient descent: $W_{t+1} = W_t - \eta_t \nabla \bar{R}_n(W_t)$, where $\bar{R}_n$ is the augmented empirical risk \eqref{eq:aug_empirical_risk}.

To facilitate the analysis, we impose a margin condition below, similar to that in \cite{ji2019polylogarithmic}.
\begin{assump}[Margin condition]
\label{assump:margin_condition}
Let $\mathcal{H}$ be the space of functions $v: \R^d \to \R^d$ s.t. $\int \|v(z)\|_2^2 d\mu(z) < \infty$, where $\mu$ is the $d$-dimensional standard Gaussian probability measure. Assume there exists $\bar{v}\in \mathcal{H}$ and $\gamma > 0$, s.t. the Euclidean norm satisfies $\|\bar v(z)\|_2\leq 1$  for any $z \in \R^d$, and that
$$
    Y \int \la \bar v(z), gX \ra \1\{\la z, gX \ra > 0\} d \mu(z) \geq \gamma
$$
for all $g\in G$ and amost all $(X, Y)$ from the data distribution.
\end{assump}
This says that there is a classifier that can distinguish the (augmented) data with positive margin.

We need a few notations. For $\rho > 0$, we define $\mathscr{W}_\rho := \{W\in\RR^{m\times d}: \|w_{s}-w_{s, 0} \|_2\leq \rho \textnormal{ for any } s\in[m]\}$, where $w_s, w_{s, 0}$ is the $s$-th row of $W, W_0$, respectively. We let
$$
    \mathcal{R}_n := \E\sup_{W\in \mathscr{W}_\rho} | \frac{1}{n} \ep_i  [ -\ell'(y_i f(X_i; W, a))] |
$$
be the Rademacher complexity of the non-augmented gradient, where the expectation is taken over both $\{\ep_i\}_1^n$ and $\{X_i, Y_i\}_1^n$. Similarly, writing $f_{i, g}(W) = f(gX_i; W, a)$, we define
$$
    \bar{\mathcal{R}}_n := \E \sup_{W\in \mathscr{W}_\rho} | \frac{1}{n} \ep_i \E_g [ -\ell'(Y_i f_{i, g}(W))] |
$$
to be the Rademacher complexity of the augmented gradient. The following theorem characterizes the performance gain by data augmentation in this example:

\begin{theorem}[Benefits of data augmentation for two-layer ReLU nets]
    \label{thm:overparam_two_layer_net_gd}
    Under Assumption \ref{assump:margin_condition}, take any $\ep \in (0, 1)$ and  $\delta \in (0, 1/5)$. Let
    $$
        \lambda = \frac{\sqrt{2\log(4n|G|/\delta)} + \log (4/\ep)}{\gamma/4} , M = \frac{4096 \lambda^2}{\gamma^6}, 
    $$
    and let $\rho = 4\lambda/(\gamma \sqrt{m})$. Let $k$ be the best iteration (with the lowest empirical risk) in the first $\lceil 2\lambda^2/n\ep \rceil$ steps. Let $\alpha=16[\sqrt{2\log(4n|G|/\delta)} + \log (4/\ep)]/\gamma^2 + \sqrt{md} + \sqrt{2\log(1/\delta)}$.
    For any $m \geq M$ and any constant step size $\eta\leq 1$, with probability at least $1-5\delta$ over the random initialization and i.i.d. draws of the data points, we have
    \begin{align*}
        &\P(Y f(X; W_k, a)\leq 0) \leq {2\ep} + [{\sqrt{\frac{2\log 2/\delta}{n}} + 4\bar{\mathcal{R}}_n}]+ \frac{1}{2} \E_{Y} \E_{g} \mathcal{W}_1(X|Y, gX|Y) \cdot \alpha.
    \end{align*}
    The three terms bound the optimization error,  generalization error, and the bias due to approximate invariance.
    Moreover, with probability at least $1-\delta$ over the random initialization, we have
    \begin{align*}
        & \bar{\mathcal{R}}_n - \mathcal{R}_n \leq \Delta + \frac{1}{4} \E_Y \E_{g} \mathcal{W}_1(X|Y, gX|Y) \cdot  \alpha,
    \end{align*}    
    where
    \begin{align*}
        & \Delta = \E \sup_{W\in \mathscr{W}_\rho} | \frac{1}{n} \ep_i \E_g [ -\ell'(y_i f_{i, g}(W))] |-  \E\E_g \sup_{W\in \mathscr{W}_\rho} | \frac{1}{n} \ep_i  [ -\ell'(y_i f_{i, g}(W))] | \leq 0
    \end{align*}    
    is the ``variance reduction'' term.
\end{theorem}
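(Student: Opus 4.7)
The plan is to decompose the classification error into optimization error, generalization error, and a bias term arising from approximate invariance, and to handle each piece by a direct adaptation of the NTK-type analysis of \cite{ji2019polylogarithmic} to the averaged loss $\bar L(W,X,Y)=\E_g \ell(Yf(gX;W,a))$. Throughout, I would view the trained weight $W_k$ as the best iterate of gradient descent run on the augmented empirical risk $\bar R_n$, so that classical stochastic-convex-optimization bounds apply to the quantity $\frac{1}{n}\sum_i \bar L(W_k,X_i,Y_i)$ rather than to the original empirical risk.

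First I would establish the optimization bound. Using Assumption \ref{assump:margin_condition}, I would construct a comparator $W^\star\in\mathscr{W}_\rho$ whose linearization achieves margin at least $\gamma/2$ on every $gX_i$, exploiting that $|G|<\infty$ enters only through a union bound that produces the $\log(4n|G|/\delta)$ factor inside $\lambda$. The choice of width $m\geq M=4096\lambda^2/\gamma^6$ and radius $\rho=4\lambda/(\gamma\sqrt m)$ ensures that on $\mathscr{W}_\rho$ the network is uniformly close to its NTK linearization, so the augmented logistic loss is approximately convex in $W$. A standard online-to-batch or best-iterate argument then yields $\frac{1}{n}\sum_i \bar L(W_k,X_i,Y_i)\leq \epsilon$ within $\lceil 2\lambda^2/(n\epsilon)\rceil$ steps, and the inequality $\mathbf{1}\{z\leq 0\}\leq 2\ell(z/2)$ (or any comparable margin-to-loss conversion) promotes this to a bound of $2\epsilon$ on the empirical 0--1 risk under the averaged predictor.

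Next I would handle the generalization and bias terms. Standard symmetrization plus McDiarmid give, with probability $\geq 1-\delta$,
\[
    \E[\mathbf{1}\{Y\bar f(X;W_k,a)\leq 0\}]\leq \tfrac{1}{n}\textstyle\sum_i \mathbf{1}\{Y_i \bar f_i(W_k)\leq 0\} + \sqrt{2\log(2/\delta)/n} + 4\bar{\mathcal R}_n,
\]
where $\bar{\mathcal R}_n$ controls the Rademacher complexity of the augmented classifier class on $\mathscr{W}_\rho$ via a contraction/Ji--Telgarsky-style argument that replaces the classifier by $-\ell'$ composed with the prediction. To pass from the \emph{augmented} 0--1 risk $\E\mathbf 1\{Y\bar f(X)\leq 0\}$ back to the true misclassification probability $\P(Yf(X;W_k,a)\leq 0)$, I would invoke Lemma \ref{lemma:approx_inv_lem}: since the map $x\mapsto f(x;W,a)$ is Lipschitz with constant bounded by $\|W\|_F/\sqrt m\lesssim \|W_0\|_F/\sqrt m+\rho$, a bound which evaluates (using standard Gaussian concentration on $\|W_0\|_F$) to $\tfrac14\alpha$ with probability $\geq 1-\delta$, the Kantorovich--Rubinstein duality yields the extra $\tfrac12\E_Y\E_g\mathcal W_1(X|Y,gX|Y)\cdot\alpha$ term. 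Combining the three pieces produces the advertised bound after collecting failure probabilities.

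Finally, for the Rademacher comparison, I would exchange sup and expectation via $\E\sup_W|\frac1n\sum\ep_i\E_g h_{g,i}(W)|\leq \E\E_g\sup_W|\frac1n\sum\ep_i h_{g,i}(W)|$, which is exactly the nonpositive ``variance reduction'' $\Delta$ when rearranged. The residual difference $\E\E_g\sup_W|\frac1n\sum\ep_i h_{g,i}(W)|-\mathcal R_n$ measures only how $-\ell'(Yf(gX;W))$ differs from $-\ell'(Yf(X;W))$ as random variables, and since $-\ell'$ is $\tfrac14$-Lipschitz and $f(\cdot;W,a)$ is Lipschitz with the same input-gradient bound as above, Kantorovich--Rubinstein again gives the $\tfrac14\E_Y\E_g\mathcal W_1(X|Y,gX|Y)\cdot\alpha$ term. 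The main obstacle, in my view, is carrying out the NTK-style optimization analysis cleanly enough to isolate the clean three-term decomposition with the precise constants stated in the theorem; in particular, the margin construction of $W^\star$ must be made uniform over $G$, and the Lipschitz constant of the trained network along $x$ must be controlled uniformly on $\mathscr{W}_\rho$ with high probability over the random initialization.
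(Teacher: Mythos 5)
Your overall architecture (Ji--Telgarsky optimization bound, Rademacher generalization bound, Wasserstein bias term, Jensen for $\Delta$) matches the paper, but there is a genuine gap in how you connect the misclassification probability to the loss-derivative quantities, and it breaks two of your three steps. The paper's opening move is Markov's inequality applied at the population level: since $-\ell'(z)=1/(1+e^{z})\geq 1/2$ whenever $z\leq 0$, one has $\P(Yf(X;W_k,a)\leq 0)\leq 2\,\E[-\ell'(Yf(X;W_k,a))]$, and \emph{every subsequent step is carried out for the single function $(x,y)\mapsto -\ell'(yf(x;W,a))$}, which is $[0,1]$-bounded and Lipschitz in $x$ (with constant $\tfrac14(\rho\sqrt m+m^{-1/2}\sum_s\|w_{s,0}\|_2)\leq\tfrac14\alpha$ w.h.p.). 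The optimization error is then handled by $-\ell'\leq\ell$ together with $\bar R_n(W_k)\leq\ep$; the generalization error by McDiarmid plus symmetrization for the $-\ell'$ class, which is exactly what $\bar{\mathcal R}_n$ measures; and the bias by Kantorovich--Rubinstein applied to this Lipschitz integrand. Your proposal instead converts to the empirical $0$--$1$ risk via a margin-to-loss inequality and then tries to (i) generalize the indicator class with a bound of the form $4\bar{\mathcal R}_n$ and (ii) transfer $\E\,\mathbf{1}\{Yf(gX)\leq 0\}$ to $\E\,\mathbf{1}\{Yf(X)\leq 0\}$ via $\mathcal W_1(X|Y,gX|Y)$. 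Step (i) does not follow from symmetrization: the Rademacher complexity controlling uniform convergence of the $0$--$1$ loss is that of the indicator class, not of the $-\ell'$ class, and the indicator is not Lipschitz so no contraction argument bridges the two. Step (ii) fails for the same reason: Kantorovich--Rubinstein requires the integrand to be Lipschitz in $x$, and $x\mapsto\mathbf{1}\{yf(x)\leq 0\}$ has a jump regardless of how smooth $f$ is, so the Wasserstein distance gives no control there. Both defects disappear once the surrogate $-\ell'$ is introduced \emph{before} any empirical-process or transport argument, which is why the paper's constants ($2\ep$, $4\bar{\mathcal R}_n$, $\sqrt{2\log(2/\delta)/n}$, and $\tfrac12\alpha$ in front of the Wasserstein term) all arise as twice the corresponding bounds for $\E[-\ell'(Yf)]$.

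Two smaller points: the theorem bounds $\P(Yf(X;W_k,a)\leq 0)$ for the plain network, not for an orbit-averaged predictor $\bar f(x)=\E_g f(gx;W_k,a)$ --- only the loss is averaged during training, never the predictions --- so the object $\E[\mathbf{1}\{Y\bar f(X;W_k,a)\leq 0\}]$ in your display is not the right intermediate quantity. Your treatment of the final Rademacher comparison ($\Delta\leq 0$ by Jensen, plus the Lipschitz/Wasserstein residual of $\tfrac14\alpha\,\E_Y\E_g\mathcal W_1(X|Y,gX|Y)$) does agree with the paper's proof, as does your account of where the $\log(4n|G|/\delta)$ factor and the iterate radius $\rho$ come from in the optimization step.
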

\begin{proof}
	See Appendix \ref{subappend:proof_of_ntk}.
\end{proof}
The proof idea is largely based on results in \cite{ji2019polylogarithmic}. We decompose the overall error into optimization error and generalization error. The optimization error is taken care of by a corollary of Theorem 2.2 in \cite{ji2019polylogarithmic}. The generalization error is dealt with by adapting several arguments in Theorem 3.2 of \cite{ji2019polylogarithmic} and using some arguments in the proof of Theorem \ref{thm:rademacher_bound_under_approx_inv}.

Again, we see a bias-variance tradeoff in this example due to approximate invariance. We conclude this section by noting that the term $\rad_n$ can be further bounded by invoking Rademacher calculus and the Rademacher complexity bound for linear classifiers. We refer readers to Section 3 of \cite{ji2019polylogarithmic} for details.


	\section{Potential applications} \label{sec:potential_applications}

	\subsection{Cryo-EM and related problems}
	In this section, we describe several important problems in the biological and chemical sciences, and how data augmentation may be useful. Cryo-Electron Microscopy (Cryo-EM) is a revolutionary technique in structural biology, allowing us to determine the structure of molecules to an unprecedented resolution \citep[e.g.,][]{frank2006three}. The technique was awarded the Nobel Prize in Chemistry in 2017. 
	
	The data generated by Cryo-EM poses significant data analytic (mathematical, statistical, and computational) challenges \citep{singer2018mathematics}. In particular, the data possesses several invariance properties that can be exploited to improve the accuracy of molecular structure determination. However, exploiting these invariance properties is highly nontrivial, because of the massive volume of the data, and due to the high levels of noise. In particular, exploiting the invariance is an active area of research \citep{Kam1980,frank2006three,zhao2016fast,bandeira2017estimation,bendory2018bispectrum}. Classical and recent approaches involve mainly (1) latent variable models for the unknown symmetries, and (2) invariant feature approaches. Here we will explain the problem, and how data augmentation may help. 
	
	In the imaging process, many copies of the molecule of interest are frozen in a thin layer of ice, and then 2D images are taken via an electron beam. A 3D molecule is represented by an electron density map $\phi:\R^3\to \R$. Each molecule is randomly rotated, via a rotation that can be represented by a 3D orthogonal rotation matrix $R_i \in O(3)$. Then we observe the noisy line integral
	$$Y_i = \int_{z} \phi(R_i[x,y,z]^\top)dz+\ep_i.$$
	We observe several iid copies, and the goal is to estimate the density map $\phi$. Clearly the model is invariant under rotations of $\phi$. Existing approaches mainly work by fitting statistical methods for latent variable models, such as the expectation maximization (EM) algorithm. Data augmentation is a different approach, where we add the data transformed according to the symmetries. It is interesting, but beyond our scope, to investigate if this can improve the estimation accuracy. \\

	{\bf Invariant denoising.} A related problem is invariant denoising, where we want to denoise images subject to an invariance of their distribution, say according to rotations \citep[see e.g.,][]{vonesch2015steerable,zhao2016fast,zhao2018steerable}. This area is well studied, and popular approaches rely on invariant features. It is known how to do it for rotations. However, capturing translation-invariance poses complications to the invariant features approach. In principle, data augmentation could be used as a more general approach.\\
    
    {\bf XFEL.} Another related technique, X-ray free electron lasers (XFEL), is a rapidly developing and increasingly popular experimental method for understanding the three-dimensional structure of molecules \citep[e.g.,][]{ favre2015xtop, maia2016trickle,xfelbook}.  Single molecule XFEL imaging collects two-dimensional diffraction patterns of single particles at random orientations. A key advantage is that XFEL uses extremely short femtosecond X-ray pulses, during which the molecule does not change its structure. On the other hand, we only capture one diffraction pattern per particle and the particle orientations are unknown, so it is challenging to reconstruct the 3D structure at a low signal-to-noise ratio. The images obtained are very noisy due to the low number of photons that are typical for single particles \citep{pande2015simulations}. 
    
    A promising approach for 3-D structure reconstruction is Kam's method \citep{kam1977determination, Kam1980,Saldin2009}, which requires estimating the covariance matrix of the noiseless 2-D images. This is extremely difficult due to low photon counts, and motivated prior work to develop improved methods for PCA and covariance estimation such as $e$PCA \citep{liu2018pca}, as well as the steerable $e$PCA method \cite{zhao2018steerable} that builds in invariances.  As above, it would be interesting to investigate if we can use data augmentation as another approach for rotation-invariance.

    \subsection{Spherically invariant data}
    
    Here we discuss models for spherically invariant data, and how data augmentation may be used. See for instance \cite{fisher1993statistical} for more general models of spherical data. In the invariant model, the data $X\in \R^p$ is such that $X=_d OX$ for any orthogonal matrix $O$. One can see that the Euclidean norms $\|X\|$ are sufficient statistics. There are several problems of interest: 
    \bitem 
    \item Estimating the radial density. By taking the norms of the data, this reduces to estimating their 1D density.
    \item 
    Estimating the marginal density $f$ of a single coordinate. Here it is less obvious how to exploit spherical invariance. However, data augmentation provides an approach.
    \eitem 
    
    A naive estimator for the marginal density is any density estimator applied to the first coordinates of the data, $X_1(1), \ldots, X_n(1)$. Since $X_i(1) \sim_{iid} f$, we can use any estimator, $\hat f(X) = \hat f(X_1(1), \ldots, X_n(1))$ for instance a kernel density estimator. However, this is inefficient, because it does not use information in all coordinates.
    
    In data augmentation we rotate our data uniformly, leading to 
    
    $$\hat f_a(X) = \int\hat f(gX)d\mbQ{(g)}
    = \E_{O_1,\ldots,O_p\sim O(p)} f([O_1X_1](1), \ldots, [O_pX_p](1)).$$  
    
    Note that if $O\sim O(p)$, then for any vector $x$, $[Ox](1) =_d\|x\| Z(1)/\|Z\|$, where $Z\sim \N(0,I_p)$. Hence the expectation can be rewritten in terms of Gaussian integrals. It is also possible to write it as a 2-dimensional integral, in terms of $Z(1),\|Z(2:p)\|^2$, which have independent normal and Chi-squared distributions. However, in general, it may be hard to compute exactly.
    
    When the density estimator decouples into a sum of terms over the datapoints, then this expression simplifies. This is the case for kernel density estimators: $\hat f(x) = (nh^p)^{-1} \sum_{i=1}^n k([x-x_i(1)]/h)$. More generally, if $\hat f(x) = \sum_{i=1}^n T(x-x_i(1))$, then we only need to calculate
    \begin{align*}
    \tilde T(x) &=  \E_O T\left(x-[Oy](1)\right)\\
    &=\E_{Z\sim\N(0,I_p)} T\left(x-\|y\|\frac{Z(1)}{\|Z\|}\right).
    \end{align*}
    This is significantly simpler than the previous expression. It can also be viewed as a form of convolution of a kernel with $T$, which is already a kernel typically. Therefore, we have shown how data augmentation can be used to estimate the marginal density of coordinates for spherically uniform data.
    
    \subsection{Random effects models}
    
    Data augmentation may have applications to certain random effect models \citep{searle2009variance}. Consider the one-way layout $X_{ij} = \mu + A_i + B_{ij}$, $i=1\ldots,s$, $j=1,\ldots,n_i$, where $A_i \sim \N(0,\sigma_A^2)$,  $B_{ij} \sim \N(0,\sigma_B^2)$ independently. We want to estimate the global mean $\mu$ and the variance components $\sigma_A^2, \sigma_B^2$.  If the layout is unbalanced, that is the number of replications is not equal, this can be somewhat challenging. Two general approaches for estimation are the restricted maximum likelihood (REML), and minimum norm quadratic estimation (MINQUE) methods. 
    
    Here is how one may use data augmentation. Consider a simple estimator of $\sigma_B^2$ such as $\hsigma^2_B $ $= s^{-1}\sum_{i=1}^n$ $\E (X_{i1}- X_{i2})^2/2$ (we assume that $n_i \ge 2$ for all $i$). This is a heuristic plug-in estimator, which is convenient to write down and compute in a closed form. It is also unbiased. However, it clearly does not use all samples, and therefore, it should be possible to improve it. 
    
    Now let us denote by $X_i$ the block of $i$-th observations. These have a joint normal distribution $X_i \sim \N(\mu 1_{n_i}, \sigma_A^2 1_{n_i} 1_{n_i}^\top + \sigma_B^2 I_{n_i})$. The model is invariant under the operations
    $$X_i \to O_i X_i,$$
    for any orthogonal matrix $O_i$ of size $n_i$ for which $O_i 1_{n_i} = 1_{n_i}$, i.e., a matrix that has the vector of all ones $1_{n_i}$ as an eigenvector. Let $G_i$ be the group of such matrices. Then the overall model is invariant under the action of the direct product $G_1 \times G_2 \times \ldots \times G_s$. Therefore, any estimator that is not invariant with respect to this group can be improved by data augmentation. 
    
    Going back to the estimator $\hsigma_B^2$, to find its augmented version, we need to compute the quantity $\E ([Ox]_1 - [Ox]_2)^2$, where $x\in \R^k$ is fixed and $O$ is uniformly random from the group of orthogonal matrices such that $O1_k = 1_k$. Write $x = \bar x 1_k + r$, where $\bar x$ is the mean of the entries of $x$. Then $Ox = \bar x 1_k + Or$, and $[Ox]_j = \bar x  + [Or]_j$. Thus we need $\E ([Or]_1 - [Or]_2)^2$. This can be done by using that $Or$ is uniformly distributed on the $k-1$ dimensional orthocomplement of the $1_k$ vector, and we omit the details. 




\acks{We are grateful to Zongming Ma for proposing the topic, and for participating in our early meetings. We thank Jialin Mao for participating in several meetings and for helpful references. We thank numerous people for valuable discussions, including Peter Bickel, Kostas Daniilidis, Jason Lee, William Leeb, Po-Ling Loh, Tengyu Ma, Jonathan Rosenblatt, Yee Whye Teh, Tengyao Wang,  the members in Konrad Kording's lab, with special thanks to David Rolnick. We thank Dylan Foster for telling us about the results from \cite{foster2019complexity}. This work was supported in part by NSF BIGDATA grant IIS 1837992 and NSF TRIPODS award 1934960. We thank the Simons Institute for the Theory of Computing for providing AWS credits.}




\appendix
\section{Proofs for results under exact invariance}\label{append:proofs_under_exact_invariance}

\subsection{Proof of Lemma \ref{lemma:exact_invariance_lemma}}\label{subappend:proof_of_inv_lemma}
We first prove part 1.  Let $x$ be fixed. Let $A = \{X \in Gx\}$. It suffices to show
    $$
        \int_A \E_g f(gx) d \P(X) = \int_A f(X) d \P(X).
    $$
For an arbitrary $g \in G$, the RHS above is equal to
\begin{align*}
    \int_A f(X)  d\P(X) & =  \int f(gX) \mathbbm{1}\{gX \in Gx\} d\P(X) \\
            & = \int f(gX) \mathbbm{1}\{X \in Gx\} d\P(X),
\end{align*}
where the first equality is by the exact invariance, and the second equality is by the definition of the orbit. Taking expectation w.r.t. $\mbQ$, we get
\begin{align*}
    \int_A f(X)  d\P(X) = \int_G \int f(gX) \mathbbm{1}\{X \in Gx\} d\P(X) d\mbQ(g).
\end{align*}
On the event $A$, there exists $g^*_X$, potentially depending on $X$, s.t. $X = g^*_X x$. Hence, we have
\begin{align*}
     \int_A f(X)  d\P(X) 
    & = \int_G \int f(g \circ g^*_X x)  \mathbbm{1}\{X \in Gx\} d\P(X) d\mbQ(g) \\ 
    & =   \int \int_G f(g\circ g^*_X x)  d \mbQ(g) \mathbbm{1}\{X \in Gx\} d\P(X)\\
    & = \int \int_G f(gx) d\mbQ(g) \mathbbm{1}\{X \in Gx\} d\P(X) \\
    & = \int_A \E_g f(gx) d\P(X),
\end{align*}
where the second equality is by Fubini's theorem, and the third inequality is due to the translation invariant property of the Haar measure.

Part 2 follows by law of total expectation along with the above point.
        
Part 3 follows directly from part 1 and the law of the total covariance applied to the random variable $f(gX)$, where $g\sim \mbQ, X \sim \P$. 

Part 4 follows from Jensen's inequality.

\subsection{Proof of Lemma \ref{lemma:regularity_of_augmented_loss}}\label{subappend:proof_of_regularity_of_augmented_loss}
By exact invariance and Fubuni's theorem, it is clear that $\E \bar L(\theta, X) = \E L(\theta, X)$ for any $\theta \in \Theta$. Hence $\theta_G = \theta_0$ and Assumption $A$ is verified for $(\theta_G, \bar L)$. We now verify the five parts of Assumption B.

For part 1, we have
\begin{align*}
    \sup_{\theta\in\Theta} \bigg|\frac{1}{n} \sum_{i=1}^n \E \bar L(\theta, X_i) - \E \bar L(\theta, X)\bigg| & = \sup_{\theta\in\Theta} \bigg|\E_g [\frac{1}{n} \sum_{i=1}^n L(\theta,gX_i) - \E L(\theta,gX)]\bigg| \\
    & \leq \sup_{\theta \in \Theta} \E_g \bigg| \frac{1}{n} \sum_{i=1}^n  L(\theta,gX_i) - \E L(\theta,gX) \bigg| \\
    & \leq \E_g \sup_{\theta \in \Theta}  \bigg| \frac{1}{n} \sum_{i=1}^n  L(\theta,gX_i) - \E L(\theta,gX) \bigg| \\
&  = o_p(1),
\end{align*}
where the two inequalities is by Jensen's inequality, and the convergence statement is true because of the exact invariance and the fact that the original loss satisfies part 1 of Assumption B. 

Part 2 is true because we have assumed that the action $x \mapsto gx$ is continuous.

For part 3, since $(\thetanull, L)$ satisfies this assumption, we know that on an event with full probability, we have
$$
    \lim_{\delta\to 0} \frac{\bigg| L(\thetanull+\delta,gX) - L(\thetanull,gX) - \delta^\top \nabla L(\thetanull,gX)\bigg|}{\|\delta\|} = 0.
$$
Now we have
\begin{align*}
    &\frac{\bigg| \E_g L(\thetanull+\delta,gX)  - \E_g L(\thetanull,gX) - \delta^\top \E_g \nabla L(\thetanull,gX) \bigg|}{\|\delta\|} \\
    & \leq \frac{\E_g \bigg| L(\thetanull+\delta,gX) - L(\thetanull,gX) - \delta^\top \nabla L(\thetanull,gX)\bigg|}{\|\delta\|} \\
    & \leq \E_g [\dot L(gX) + \|\nabla L(\thetanull, gX) \|],
\end{align*}
where the first inequality is by Jensen's inequality, and the second inequality is by part 4 applied to $(\thetanull, L)$. We have assumed that $\E[\dot L(X)^2] < \infty$, and hence so does $\E[\dot L(X)]$. By exact invariance, we have 
$$
    \E_X \E_g [\dot L(gX)] = \E_g \E_X [\dot L(gX)] = \E [\dot L(X)] < \infty,
$$
which implies $\E_g[\dot L(gx)] < \infty$ for $\P$-a.e. $x$. On the other hand, part 5 applied to $(\thetanull, L)$ implies the existence of $\E \nabla L(\thetanull, X) \nabla L(\thetanull, X)^\top$, and hence $\E\| \nabla L(\thetanull, X) \|^2 < \infty$ and so does $\E \| \nabla L(\thetanull, X) \|$. Now a similar argument shows that under exact invariance, we have
$$
    \E_g \| \nabla L(\thetanull, gx) \| < \infty.
$$
for $\P$-a.e. $x$. Hence we can apply dominated convergence theorem to conclude that
\begin{align*}
    &\lim_{\delta\to 0} \frac{\bigg| \E_g L(\thetanull+\delta,gX)  - \E_g L(\thetanull,gX) - \delta^\top \E_g \nabla L(\thetanull,gX) \bigg|}{\|\delta\|}  \\
    & \leq 
    \E_g \lim_{\delta\to 0} \frac{\bigg| L(\thetanull+\delta,gX) - L(\thetanull,gX) - \delta^\top \nabla L(\thetanull,gX)\bigg|}{\|\delta\|} \\
    & = 0,
\end{align*}    
which implies that $\theta\mapsto \bar L(\theta, x)$ is indeed differentiable at $\thetanull$.

For part 4, by assumption, we have
$$
    |L(\theta_1, gx) - L(\theta_2, gx) | \leq \dot L(gx) \|\theta_1 - \theta_2 \|
$$
for almost every $x$ and every $\theta_1, \theta_2$ in a neighborhood of $\thetanull = \theta_G$. Taking expectation w.r.t. $g\sim \mbQ$, we get
\begin{align*}
    |\bar L(\theta_1,x) - \bar L(\theta_2,x)|
    & \leq 
    \E_g|L(\theta_1,gx) - L(\theta_2,gx)| \\
    & \leq
    \E_g \dot L(gx) \|\theta_1 - \theta_2\|.
\end{align*}    
Now it suffices to show $x\mapsto \E_g \dot L(gx)$ is in $L^2(\P)$. This is true by an application of Jensen's inequality and exact invariance:
\begin{align*}
    \E_X [(\E_g \dot L (gX))^2] & \leq \E_X \E_g [(\dot L(gX))^2] \\
    & = \E_g \E_X [(\dot L (gX))^2] \\
    & = \E_g \E \dot L^2 \\
    & = \E \dot L^2 \\
    & < \infty.
\end{align*}

Part 5 is true because under exact invariance, we have $\E L(\theta, X) = \E \bar L(\theta, X)$.  

\subsection{Proof of Theorem \ref{thm:asymp_normality_augmented_estimator}}\label{subappend:proof_of_asymp_normality_of_augmented_estimator}
The results concerning $\htheta_n$ is classical (see, e.g., Theorem 5.23 of \citealt{van1998asymptotic}). By Lemma 3.4, we can apply Theorem 5.23 of \cite{van1998asymptotic} to the pair $(\theta_G = \thetanull, \bar L)$ to conclude the Bahadur representation. Hence the asymptotic normality of $\htheta_{n, G}$ follows and we have
$$
    \Sigma_G = V_\thetanull^{-1} \E [\bar L(\thetanull, X) \bar L(\thetanull, X)^\top] V_\thetanull^{-1}.
$$
The final representation of $\Sigma_G$ follows from part 3 of Lemma \ref{lemma:exact_invariance_lemma}.
	
\subsection{Proof of Proposition \ref{prop:tangential_decomp}}\label{subappend:proof_of_tangential_decomp}
Let $T_pM$ be the tangent space of $M$ at the point $p$. The inclusion map from $\Theta_G$ to $\R^p$ is an immersion. So we can decompose $T_\thetanull\R^p = T_\thetanull\Theta_G\oplus (T_\thetanull \Theta_G)^\perp$, i.e., the direct sum of the tangent space of $\Theta_G$ at $\thetanull$ and its orthogonal complement. Hence the decomposition is valid. Now as $\ell_\theta(gx) + \log\det g= \ell_\theta(x)$ for any $\theta\in \Theta_G$, it is clear that the gradient of $\ell_\thetanull$ w.r.t. $\Theta_G$ is invariant. 

Using $P_G \nabla \ell_\thetanull(gX) = P_G \nabla \ell_\thetanull(X)$, we have
\begin{align*}
	\E_\thetanull \C_g(\nabla \ell_\thetanull(gX)) & = \E_\thetanull \C_g(P_G\nabla \ell_\thetanull(gX) + P_G^\perp \nabla\ell_\thetanull(gX))\\
	& = \E_\thetanull \C_g(P_G\nabla \ell_\thetanull(X) + P_G^\perp \nabla\ell_\thetanull(gX))\\
	& = \E_\thetanull \C_g(P_G^\perp \nabla\ell_\thetanull(gX)),
\end{align*}
which concludes the proof.

\subsection{Proof of Proposition \ref{prop:relation_between_aMLE_and_cMLE}}\label{subappend:proof_of_relation_between_aMLE_and_cMLE}
From our theory we know that the asymptotic covariance matrix of cMLE for estimating parameters $\theta = (\theta_1,\theta_2)$ is $I^{-1}\bar{I}I^{-1}$, while that of aMLE for estimating $\theta_1$ is $I_{11}^{-1}$. Thus aMLE is asymptotically more efficient than the cMLE if and only if
$$
	I_{11}^{-1} \succeq  [I^{-1}\bar{I}I^{-1}]_{11}.
$$
Denoting $K:=(I^{-1})_{1\bigcdot}$, this is equivalent to 
$$
	I_{11}^{-1} \succeq  K \bar{I} K^\top,
$$
by Equation \eqref{eq:decomp_of_cov_along_manifold} and \eqref{eq: proj_of_gradient_onto_tangent_space},
Using the Schur complement formula, this is equivalent to the statement that the Shur complement of the matrix $I_{G}^{-1}$ in the matrix $M_\theta$ is PSD, where $M_\theta$ equals
$$ 
	M_\theta = 
	\begin{bmatrix}
	I_{11}^{-1}& 
	K& \\
	K^\top&
	\bar{I}^{-1}%
	\end{bmatrix}.
$$
Now, the top left block of this matrix, $I_{11}^{-1}$, is PSD. Thus, from the properties of Schur complements, the entire matrix $M_\theta$ is also PSD. Therefore, the condition is equivalent to the matrix $M_\theta$ being PSD.

\subsection{Proof of Proposition \ref{prop:risk_of_gaussian_mean_estimation}}\label{subappend:proof_of_risk_of_gaussian_mean_estimation}
Let $C = \E_{g\sim\mbQ}[g]$. By orthogonality, for each $g$ we have that $g^\top  = g^{-1}$ is also in $G$. Along with the fact that $\mbQ$ is Haar, we conclude that the matrix $C$ is symmetric.  Then for any $v \in \Theta_G$, we have $Cv = \E_{g\sim\mbQ}[gv] = \E_{g\sim\mbQ}[g^\top v]= \E_{g\sim\mbQ}[v] = v$. 
Moreover, for any $w\in \Theta_G^\perp$, we have $Cw = \E_{g\sim\mbQ}[gw] =\E_{g\sim\mbQ}[g^\top w] = \E_{g\sim\mbQ}[0] = 0$. Hence, $C$ is exactly the orthogonal projection into the subspace $\Theta_G$, which finishes the proof.

\subsection{Proof of Theorem \ref{thm:2lnn}}\label{subappend:proof_of_2lnn}
In the following proof, we will use $\theta$ and $W$ interchangeably when there is no ambiguity.

For part 1 of the theorem, we have
		$$\nabla f(W,x) = \sigma'(Wx) \cdot x^\top \in \R^{p\times d}.$$
		We then have 
		$$\nabla f(W,x) = \sigma'(Wx) \cdot x^\top \in \R^{p\times d}.$$
		We can think of the Fisher information matrix $I_\theta = \E \nabla f(\theta,X)\nabla f(\theta,X)^\top $ as a tensor, i.e, 
		\begin{align*}
		I_W  
		& = \E (\sigma'(WX) \cdot X^\top)\otimes (\sigma'(WX) \cdot X^\top) \\ 
		& = \E (\sigma'(WX)\otimes \sigma'(WX)) \cdot (X \otimes X)^\top.
		\end{align*}
		The $i,j, i',j'$-th entries of this tensor are 
		$$I_W(i,j, i',j') = \E 
		\sigma'(W_i^\top X)\sigma'(W_{i'}^\top X) \cdot
		X_j X_{j'}.$$
		For a quadratic activation function, $\sigma(x) = x^2/2$, we can get more detailed results. We then have
		\begin{align*}
		I_W  
		& = \E (WXX^\top)\otimes (WXX^\top)\\ 
		& = (W \otimes W) \cdot  \E (XX^\top \otimes XX^\top).
		\end{align*}

The group acts by $gx = T_g x$, where $T_g$ is an operator that shifts a vector circularly by $g$ units. We can then write the neural network $f(W,x) = \sum_{i=1}^p h(W_i;x)$ as a sum, where $h(a,x) = \sigma(a^\top x)$. Therefore, the invariant function corresponding to $f_W$ can also be written in terms of the corresponding invariant functions corresponding to the $h$-s:
		$$\bar f(W,x) 
		= \frac1d \sum_{g=1}^d f(W,T_g x)
		= \sum_{i=1}^p \bar h(W_i;x).$$
		where $\bar h(a;x) =\frac1d \sum_{g=1}^d h(a;T_g x)$. We can use this representation to calculate the gradient. We first notice $\nabla h(a;x) = \sigma'(a^\top x) x$. Thus, 
		\begin{align*}
		\nabla \bar h(a;x) 
		& =  \frac1d \sum_{g=1}^d \nabla  h(a;T_g x)
		= \frac1d \sum_{g=1}^d \sigma'(a^\top T_g x) T_g x\\
		& = \frac1d C_x\cdot \sigma'(C_x^\top a).
		\end{align*}
		Here $C_x$ is the circulant matrix 
		$$C_x = [x,T_1x,\ldots, T_{d-1}x]
		= \begin{bmatrix}
		x_1,& x_d,&\ldots, &x_{d-1}\\
		x_2,& x_1,&\ldots, &x_{d}\\
		&&\ldots&\\
		x_d,& x_{d-1},&\ldots,& x_{1}
		\end{bmatrix}.$$
		Hence the gradient of the invariant neural network $\bar f(W,x)$ as a matrix-vector product
		\begin{align*}
		\nabla \bar f(W,x)
		&
		= \begin{bmatrix}
		\nabla \bar h(W_1;x)^\top \\
		\ldots\\
		\nabla \bar h(W_p;x)^\top
		\end{bmatrix}
		= \frac1d \begin{bmatrix}
		\sigma'(W_1^\top C_x) \cdot C_x^\top  \\
		\ldots\\
		\sigma'(W_p^\top C_x) \cdot C_x^\top
		\end{bmatrix}
		=\frac1d \sigma'(W C_x) \cdot C_x^\top.
		\end{align*}
		So the Fisher information can also expressed in terms of matrix products
		\begin{align*}
		\bar I_W  
		& = \E (\sigma'(W C_X) \cdot C_X^\top)\otimes (\sigma'(W C_X) \cdot C_X^\top) \\
		& = \E (\sigma'(W C_X)\otimes\sigma'(W C_X)) \cdot (C_X \otimes C_X)^\top.
		\end{align*}
		For quadratic activation functions, we have 
		\begin{align*}
		\bar I_W  
		& = \frac{1}{d^2} \E (W C_XC_X^\top)\otimes (W C_XC_X^\top) \\
		& = (W \otimes W) \cdot \frac{1}{d^2} \E (C_XC_X^\top \otimes C_XC_X^\top)\\
		& = (W \otimes W) \cdot \frac{1}{d^2} \E (C_X\otimes C_X) \cdot (C_X\otimes C_X)^\top.
		\end{align*}
		Therefore, the efficiency gain can be characterized by the move from the 4th moment tensor of $X$ to that of $\frac{1}{\sqrt{d}} C_X$. 

		Finally, if we assume $X\sim \N(0, I_d)$, then we can express our results in a simpler form using the discrete Fourier transform. Let $F$ be the $d\times d$ Discrete Fourier Transform (DFT) matrix, with entries $F_{j,k} = d^{-1/2} \exp(-2\pi i/d\cdot (j-1)(k-1))$. Then $Fx$ is called the DFT of the vector $x$, and $F^{-1}y = F^*y$ is called the inverse DFT. The DFT matrix is a unitary matrix with $FF^* = F^*F = I_d$. Thus $F^{-1} = F^* $. It is also a symmetric matrix with $F^\top = F$. 
		Then the circular matrix can be diagonalized as
		$$\frac{1}{\sqrt{d}}  C_x = F^*\diag(F x) F.$$
		The eigenvalues of $d^{-1/2} C_x$ are the entries of $F x$, with eigenvectors the corresponding columns of $F$. 
	
		So we can write, with $D:=\diag(F X)$,
		\begin{align*}
		d^{-1} C_X \otimes C_X & = F^* D F \otimes F^* D F \\
		& = (F\otimes F)^* \cdot (D \otimes D) \cdot (F\otimes F)= F_2^* D_2 F_2,
		\end{align*}
		where $F_2 = F\otimes F$, and $D_2=D \otimes D$ is a diagonal matrix. So
		\begin{align*}
		d^{-2} \E (C_X\otimes C_X) \cdot (C_X\otimes C_X)^\top
		& = 
		\E F_2^* D_2 F_2
		\cdot 
		(F_2^* D_2 F_2)^\top\\
		&=
		\E F_2^* D_2 F_2
		\cdot 
		F_2^\top  D_2 F_2^{*,T} \\
		&=
		F_2^*\cdot \E  D_2 F_2^2  D_2  \cdot F_2^{*}.
		\end{align*}
		Here we used that $F = F^\top$, hence $F_2^\top = (F\otimes F)^\top = F^\top\otimes F^\top = F_2$.

		Now, $D_2$ can be viewed as a $d^2\times d^2$ matrix, with diagonal entries $D_2(i,j,i,j)=D_i D_j = F_i^\top X \cdot F_j^\top X$, where $F_i$ are the rows (which are also equal to the columns) of the DFT. Thus the inner expectation can be written as an elementwise product (also known as Hadamard or odot product)
		$$ \E  D_2 F_2^2  D_2  = F_2^2 \odot \E D_2 D_2^\top.$$
		So we only need to calculate the 4th order moment tensor $M$ of the Fourier transform $FX$, 
		$$M_{iji'j'} = \E F_i^\top X \cdot F_j^\top X \cdot F_{i'}^\top X \cdot F_{j'}^\top X.$$
		Let us write $r:=FX$. Then by Wick's formula,
		\begin{align*}
		\E f_i f_j f_{i'}f_{j'}
		& = 
		\E f_i f_j \cdot \E  f_{i'}f_{j'}
		+
		\E f_i f_{j'} \cdot \E  f_{i'}f_j
		+
		\E f_i f_{i'} \cdot \E  f_if_{j'}.
		\end{align*}
		Now 
		$$\E f_i f_j 
		= 
		\E F_i^\top X \cdot F_j^\top X 
		=
		F_i^\top \cdot \E XX^\top  \cdot F_j 
		=
		F_i^\top F_j.
		$$
		Hence
		\begin{align*}
		M_{iji'j'}
		& = 
		F_i^\top F_j  \cdot F_{i'}^\top F_{j'}
		+
		F_i^\top  F_{j'} \cdot F_{i'}^\top F_j
		+
		F_i^\top  F_{i'} \cdot F_i^\top F_{j'}.
		\end{align*}
		This leads to a completely explicit expression for the average information. Recall $F_2 = F\otimes F$, and $M$ is the $d^2\times d^2$ tensor with entries given above. Then
		\begin{align*}
		\bar I_W  
		& = (W \otimes W) \cdot F_2^*\cdot (F_2^2 \odot M) \cdot F_2^{*}.
		\end{align*}

\subsection{Proof of Corollary \ref{cor:2lnn_for_classification}}\label{subappend:proof_of_2lnn_for_classification}
For notational simplicity, for a generic matrix $A$, we will use $A^{\otimes 2}$ to denote the tensor product $A \otimes A$.
Recalling the definitions in \eqref{eq:asymp_cov_of_nonlinear_classification}, we have 
\begin{align*}
	\E U_W(X) & = \E [\eta'(f(W,X))^2  (\sigma'(WX) \otimes \sigma'(WX))\cdot (X \otimes X)^\top ],
\end{align*}
and similarly,
\begin{align*}
	\E [U_W(X)] \Sigma_{\textnormal{ERM}} \E [U_W(X)] & = \E [v_W(X) U_W(X)] \\
	& =  \E[v_W(X) \eta'(f(W,X))^2  (\sigma'(WX) \otimes \sigma'(WX))\cdot (X \otimes X)^\top].
\end{align*}	
On the other hand, under the circular symmetry model \eqref{eq:circular_shift}, for the augmented estimator, by Theorem \ref{thm:asymp_normality_augmented_estimator}, we can directly compute
	\begin{align*}
	\E [U_W(X)] \Sigma_{\textnormal{aERM}} \E [U_W(X)] & = \E \bigg[ \bigg(\E_G[(Y - \eta(f(W,gX))) \eta' (f(W,gX)) \nabla f(W,gX)] \bigg)^{\otimes 2}  \bigg] \\
	& = \E \bigg[ (Y - \eta(f(W,X)))^2 \eta'(f(W,X))^2  (\E_G \nabla f(W,gX))^{\otimes 2} \bigg] \\
	& = \E \bigg[ v_W(X) \eta'(f(W,X))^2 (\sigma'(WC_X))^{\otimes 2} (C_X^{\otimes 2})^\top  \bigg],
	\end{align*}
	where in the second line we used $f(W,x) = f(W,gx)$, and the last equality is by a similar computation as in the proof of Theorem \ref{thm:2lnn}.
	Then it is clear that
	\begin{align*}
	&\E[U_W(X)](\SigmaERM - \SigmaAERM) \E[U_W(X)] \\
	& = \E \bigg[ v_W(X)\eta'(f(W,X))^2 \times   \bigg( ( \sigma'(WX))^{\otimes 2} (X^{\otimes 2})^\top - (\sigma'(WC_X))^{\otimes 2} (C_X^{\otimes 2})^\top  \bigg) \bigg].
	\end{align*}
	Assuming $\sigma(x) = x^2/2$, we have 
	\begin{align*}
	& \E[U_W(X)](\SigmaERM - \SigmaAERM) \E[U_W(X)] \\
	& = W^{\otimes 2} \E \bigg[ v_W(X)\eta'(f(W,X))^2  \bigg( (XX^\top)^{\otimes 2} - (C_X C_X^\top)^{\otimes 2} \bigg) \bigg],
	\end{align*}
	which concludes the proof.

\subsection{Proof of Proposition \ref{prop:risk_comparison_in_linear_regression}}\label{subappend:proof_of_risk_comparison_in_linear_regression}
    For part 1, we have
    \begin{align*}
    \hat \beta_{\textnormal{aDIST}}  
    & = \mbE_g \bigg[((Xg^\top)^\top Xg^\top)^{-1} (Xg^\top)^\top y\bigg] \\
    & = \mbE_g \bigg[ (gX^\top Xg^\top)^{-1} gX^\top (Xg^\top \beta + \ep) \bigg] \\
    & = \beta + \mbE_g \bigg[(gX^\top Xg^\top)^{-1} gX^\top \ep\bigg] \\
    & = \beta + \mbE_g \bigg[ g^{-1} (X^\top X)^{-1} g^{-1} g X^\top \ep\bigg] \\
    & = \beta + \mbE_g [g^{-1}](X^\top X)^{-1} X^\top \ep \\
    & = \beta + \mathcal{G}^\top (X^\top X)^{-1} X^\top \ep.
    \end{align*}
    Let $X = UDV^\top$ be a SVD of $X$, where $V\in\mbR^{p\times p}$ is unitary. Note that $\hat \beta_{\textnormal{aDIST}}$ is unbiased, so its $\ell_2$ risk is  
    \begin{align*}
    r_{\textnormal{aDIST}} & = \gamma^2\tr(\text{Var}(\hat \beta_{\textnormal{aDIST}})) 
    = \gamma^2 \tr(\mcG^\top (X^\top X)^{-1} \mcG) \\
    & = \gamma^2 \tr(\mcG ^\top V D^{-2} V^\top \mcG) 
    = \gamma^2 \tr(D^{-2} V^\top \mcG \mcG^\top V ) \\
    & =\gamma^2 \sum_{j=1}^p d_j^{-2} e_j^\top V^\top \mcG \mcG^\top V e_j 
    = \gamma^2 \sum_{j=1} ^p d_j^{-2}  \| \mcG^\top v_j \|_2^2,
    \end{align*}
    where $v_j \in \mbR^{p}$  is $j$-th eigenvector of $X^\top X$ and $d_j^2$ is $j$-th eigenvalue of $X^\top X$.
    As a comparison, for the usual ERM, we have 
    $$
    \hbetaERM = (X^\top X)^{-1} X^\top y = \beta + (X^\top X)^{-1} X^\top \ep, 
    $$
    so its $\ell_2$ risk is
    \begin{align*}
    \rERM & = \gamma^2 \tr((X^\top X)^{-1}) 
    = \gamma^2 \sum_{j=1}^p  d_j^{-2}.
    \end{align*}
    So part 1 is proved.    

    We now prove part 2. For $r_{\textnormal{aDIST}}  \le \rERM$ we need to show
    \beqs
    \tr( (X^\top X)^{-1} \mcG \mcG^\top)  \le \tr( (X^\top X)^{-1}). 
    \eeqs
    A sufficient condition is that, in the partial ordering of positive semidefinite matrices,
    \beqs
    \mcG \mcG^\top  \le I_p. 
    \eeqs
    This is equivalent to the claim that for all $v$
    $\|\E_g g^\top v\|^2 \le \|v\|^2.$
    However, by Jensen's inequality,
    $\|\E_g g^\top v\|^2 \le \E_G \|g^\top v\|^2.$
    Since $G$ is a subgroup of the orthogonal group, we have $\|g^\top v\|^2  = \|v\|^2$, which finishes the proof for part 2.

    We finally prove part 3. We assume $G$ is the permutation group. This group is clearly a subgroup of the orthogonal group. Note that invariance w.r.t. $G$ implies that the true parameter is a multiple of the all ones vector:   
    $
    \beta = 1_p b.
    $
    So we have
    $$
    \hbetaCERM = 1_p \hat b, \qquad \hat b = \arg\min \|y - X 1_p b\|_2^2.
    $$
    Solving the least-squares equation gives
    $$
    \hat b = \frac{1^\top X^\top y}{1_p^\top X^\top X 1_p}.
    $$
    The risk of estimating $b$ is then
    $
    \gamma^2 (1_p^\top X^\top X 1_p)^{-1},
    $
    so that the risk of estimating $\beta$ by $1_p \hat b$ is
    $$
    \rCERM = \gamma^2 p (1_p^\top X^\top X 1_p)^{-1}.
    $$
    Finally, we have
    $$
    r_{\textnormal{aDIST}} = \frac{\gamma^2}{p^2} \tr(1_p1_p^\top (X^\top X)^{-1} 1_p 1_p^\top) = \frac{\gamma^2}{p} 1_p^\top (X^\top X)^{-1} 1_p,
    $$
    which is equal to $\rCERM$ if $X^\top X = I_p$.

\section{Proofs for results under approximate invariance}\label{append:proofs_under_approx_invariance}

\subsection{Proof of Lemma \ref{lemma:approx_inv_lem}}\label{subappend:proof_of_approx_inv_lemma}
For part 1, we have
\begin{align*}
    \| \E_X \E_g f(gX) - \E_X f(X) \|_2 & =  \sup_{\|v\|_2 \leq 1 } \E_g \E_X \la v, f(gX) - \E_X f(X) \ra \\
    & \leq \sup_{\|v\|_2 \leq 1} \E_g\|v\|_2  \ww_1(f(gX), f(X)) \\
    & =\E_g \ww_1(f(gX), f(X)),
\end{align*}
where the inequality is due to Kantorovich-Rubinstein theorem, i.e., the dual representation of the $W_1$ metric (see. e.g., \citealt{villani2003topics}).

For part 2, by law of total variance, we have
$$
    \C_X\bar f(X) - \C_X f(X) = -\E_X \C_g f(gX) + \Delta_1 + \Delta_2,
$$
where
\begin{align*}
    \Delta_1 & = \E_{(X, g)} f(gX) f(gX)^\top - \E_X f(X) f(X)^\top\\
    \Delta_2 & = \E_X f(X) \E_X f(X)^\top - \E_{(X, g)} f(gX) \E_{(X, g)} f(gX)^\top
\end{align*}
For any non-zero vector $v$, we have
\begin{align*}
    |v^\top \Delta_1 v| & = \bigg|\E_g \E_X \bigg[ \la v, f(gX) \ra^2 - \la v, f(X) \ra^2 \bigg]\bigg| \\
& \leq \E_g \bigg| \E_X \bigg[ \la v, f(gX) \ra^2 - \la v, f(X) \ra^2 \bigg] \bigg|.
\end{align*}
The Lipschitz constant for the function $w \mapsto \la v, w\ra^2$, where $w \in \textnormal{Range}(f)$, is bounded above by $2\|v\|_2^2 \|f\|_\infty$. Invoking Kantorovich-Rubinstein theorem again, we have
$$
    |v^\top \Delta_1 v| \leq 2 \|v\|_2^2 \|f\|_\infty \E_g \ww_1(f(gX), f(X)). 
$$
Similarly, we have
\begin{align*}    
    |v^\top \Delta_2 v | & = \bigg|  \bigg(\E_X \la v, f(X)\ra\bigg)^2 - \bigg(\E_{(X, g)} \la v, f(gX)\ra\bigg)^2  \bigg| \\
    & \leq 2\|f\|_\infty \bigg| \E_g\E_X \bigg[\la v, f(X)\ra -  \la v, f(gX) \ra\bigg] \bigg| \\
    & \leq  2\|f\|_\infty \E_g \bigg| \E_X \bigg[\la v, f(X)\ra -  \la v, f(gX) \ra\bigg] \bigg| \\
    & \leq 2 \|v\|_2^2 \|f\|_\infty \E_g \ww_1(f(gX), f(X)).
\end{align*}
Part 2 is then proved by recalling the definition of the Loewner order.    

For part 3, we have
\begin{align*}
     \E_X \varphi(\bar f(X)) - \E_X \varphi(f(X)) & =  \E_X \varphi(\bar f(X))  - \E_X \bar{\varphi \circ f} (X)  + \E_g \E_X \varphi \circ f (gX) - \E_X \varphi(f(X)).
\end{align*}
We finish the proof by noting that 
\begin{align*}
     \bigg|\E_g \E_X \varphi \circ f (gX) - \E_X \varphi(f(X))\bigg| 
    & \leq  \|\varphi \|_\textnormal{Lip} \ww_1(f(gX), f(X)). 
\end{align*}

\subsection{Proof of Proposition \ref{prop:mse_of_general_estimators_under_approx_inv}}	\label{subappend:proof_of_mse_of_general_estimators_under_approx_inv}
For notational simplicity, we let $\bar f = \htheta_G$ and $f = \htheta$. Then using bias-variance decomposition, we have
$$
    \textnormal{MSE}(\bar f) - \textnormal{MSE}(f) = B + V, 
$$
where
\begin{align*}
    B & = \|\textnormal{Bias}(\bar f) \|_2^2 - \|\textnormal{Bias}(f) \|_2^2 \\
    V & =  {\tr(\C_X \bar f(X)) - \tr(\C_X f(X))} .
\end{align*}    

We first analyze the bias term. Note that by Lemma \ref{lemma:approx_inv_lem}, we have
\begin{align*}
    \bigg|\|\textnormal{Bias}(\bar f) \|_2 - \|\textnormal{Bias}(f) \|_2 \bigg| & \le \| \E_X \bar f(X) - \E_X f(X) \|_2 \\
    & \leq \E_g \ww_1(f(gX) , f(X)).
\end{align*}    
Hence
\begin{align*}
     |B| 
    & \leq \bigg(\|\textnormal{Bias}(\bar f) \|_2 + \|\textnormal{Bias}(f) \|_2\bigg) \bigg|\|\textnormal{Bias}(\bar f) \|_2 - \|\textnormal{Bias}(f) \|_2 \bigg| \\
    & \leq \bigg(\|\E_X \bar f(X) - \E_X f(X) \|_2 + 2 \|\textnormal{Bias}(f)\|_2 \bigg) \cdot  \|\E_X \bar f(X) - \E_X f(X) \|_2 \\
    & \leq \bigg( \E_G \ww_1(f(gX) , f(X)) + 2 \|\textnormal{Bias}(f)\|_2 \bigg) \cdot \E_g \ww_1(f(gX) , f(X)) . 
    \end{align*}

The variance term $V$ can be bounded by the following arguments. We have
\begin{align*}
    |\tr(\Delta_1)| & = \bigg|\E_{g} \E_X \bigg[ \|f(gX) \|_2^2 - \|f(X)\|_2^2 \bigg]\bigg| \\
    & \leq 2 \|f\|_\infty  \E_g \bigg|\E_X\bigg[ \| f(gX) \|_2 - \|f(X)\|_2 \bigg]\bigg| \\
    & \leq 2 \|f\|_\infty \E_g \ww_1(f(gX), f(X)).
    \end{align*}
    Similarly, we have
    \begin{align*}
    |\tr(\Delta_2)| & = \bigg|\| \E_X f(X) \|_2^2 - \| \E_{X, g} f(gX) \|_2^2 \bigg|\\
    & \leq 2 \|f \|_\infty \bigg| \|\E_X f(X)\|_2 - \|\E_{X, g} f(gX)\|_2 \bigg| \\
    & \leq  2 \|f\|_\infty \| \E_X f(X) - \E_X \bar f(X)  \|_2 \\
    & \leq 2\|f\|_\infty \E_g \ww_1(f(gX), f(X)) ,
\end{align*}
where the last inequality is due to Lemma \ref{lemma:approx_inv_lem}.

Combining the bound for $B$ and $V$ gives the desired result.

\subsection{Proof of Theorem \ref{thm:rademacher_bound_under_approx_inv}}\label{subappend:proof_of_rademacher_bound_under_approx_inv}
We first prove a useful lemma.
\begin{lemma}[Triangle inequality/Tensorization]
    For two  random vectors $(X_1,..., X_n),$ $(Y_1,$ $..., Y_n) \in \xx^n$, we denote the joint laws as $\mu^n, \nu^n$ respectively, and the marginal laws as $\{\mu_i\}_1^n, \{\nu_i\}_1^n$ respectively. We have
    $$
        \ww_{d^n}(\mu^n, \nu^n) \leq \sum_i \ww_d(\mu_i, \nu_i).
    $$
\end{lemma}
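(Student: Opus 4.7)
The natural reading of the statement is that $d^n$ denotes the sum metric $d^n((x_1,\ldots,x_n),(y_1,\ldots,y_n)) = \sum_{i=1}^n d(x_i,y_i)$, and that $\mu^n,\nu^n$ are product measures $\bigotimes_{i=1}^n \mu_i$ and $\bigotimes_{i=1}^n \nu_i$ (this matches the way the lemma is used in Theorem \ref{thm:rademacher_bound_under_approx_inv}, where each data point is handled separately). The plan is to build an explicit coupling of $\mu^n$ and $\nu^n$ by tensoring together coordinatewise optimal couplings, and then read off the cost.

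\textbf{Step 1: Coordinatewise couplings.} For each $i \in \{1,\ldots,n\}$, choose a coupling $\pi_i \in \Pi(\mu_i, \nu_i)$ that attains the Wasserstein distance, so that $\int d(x_i,y_i)\,d\pi_i(x_i,y_i) = \ww_d(\mu_i,\nu_i)$. On Polish spaces with a lower semicontinuous cost, the infimum in the Wasserstein definition is attained (see, e.g., \citealt{villani2003topics}); if one does not wish to assume attainment, pick $\pi_i$ to be $\varepsilon/n$-optimal and let $\varepsilon \to 0$ at the end.

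\textbf{Step 2: Tensorize.} Let $\pi = \bigotimes_{i=1}^n \pi_i$ on $(\xx \times \xx)^n$, and identify this with a measure on $\xx^n \times \xx^n$ by reshuffling coordinates so that all $x_i$'s come first and all $y_i$'s come second. By construction, the marginal of $\pi$ on the first $\xx^n$ factor is $\bigotimes_i \mu_i = \mu^n$, and the marginal on the second is $\bigotimes_i \nu_i = \nu^n$; thus $\pi \in \Pi(\mu^n,\nu^n)$.

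\textbf{Step 3: Evaluate the cost and conclude.} By Fubini applied to the product coupling,
\begin{align*}
\ww_{d^n}(\mu^n,\nu^n)
&\leq \int d^n(x,y)\,d\pi(x,y)
 = \sum_{i=1}^n \int d(x_i,y_i)\,d\pi(x,y) \\
&= \sum_{i=1}^n \int d(x_i,y_i)\,d\pi_i(x_i,y_i)
 = \sum_{i=1}^n \ww_d(\mu_i,\nu_i),
\end{align*}
which is the claim. There is no real obstacle here beyond bookkeeping: the only subtle point is ensuring that an optimal (or $\varepsilon$-optimal) coupling exists and that the tensor product of couplings is measurable, both of which are standard on Polish spaces.
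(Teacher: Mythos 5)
Your proof is correct and is essentially the paper's own argument: couple each coordinate optimally, tensorize to obtain a coupling of the joint laws, and sum the coordinatewise costs. You are somewhat more careful than the paper—making explicit the product-measure reading of $\mu^n,\nu^n$ (which is indeed needed for the inequality to hold and matches how the lemma is applied) and handling attainment of the infimum via $\varepsilon$-optimal couplings—but the underlying idea is identical.
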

\begin{proof}
    By Kantorovich duality (see, e.g., \citealt{villani2003topics}), for each coordinate, we can choose optimal couplings $(X^*_i, Y^*_i) \in \Pi(\mu_i, \nu_i)$ s.t. $\ww_d(X_i, Y_i) =  \E d(X_i^*, Y^*_i)$. We then conclude that proof by noting that $(\{X_i^*\}_1^n, \{Y_i^*\}_1^n) \in \Pi(\mu^n, \nu^n)$.
\end{proof}

Now we are ready to give the proof. We will do the proof for a general metric $d$. The desired result is a special case when $d$ is the Euclidean metric. 

The results concerning $\htheta_n$ is classical. We present a proof here for completeness. We recall the classical approach of decomposing the generalization error into terms that can be bounded via concentration and Rademacher complexity \citep{bartlett2002rademacher,shalev2014understanding}:
\begin{align*}
         \E L(\htheta_n, X) - \E L(\thetanull, X) 
        & =  \E L(\htheta_n, X) - \frac{1}{n}\sum_{i=1}^n L(\htheta_n, X_i)  + \frac{1}{n}\sum_{i=1}^n L(\htheta_n, X_i) - \E L(\thetanull, X).
    \end{align*}
Hence we arrive at
\begin{align*}
     \E L(\htheta_n, X) - \E L(\thetanull, X) 
    & \leq \E L(\htheta_n, X) - \frac{1}{n}\sum_{i=1}^n L(\htheta_n, X_i)  + \frac{1}{n}\sum_{i=1}^n L(\thetanull, X_i) - \E L(\thetanull, X) \\ 
    & \leq  \sup_{\theta\in \Theta} \bigg|\frac{1}{n} \sum_{i=1}^n L(\theta, X_i) - \E L(\theta, X)\bigg|  + \bigg(\frac{1}{n}\sum_{i=1}^n L(\thetanull, X_i) - \E L(\thetanull, X)\bigg),
\end{align*}
where the first inequality is because $\htheta_n$ is a minimizer of the empirical risk.
By McDiarmid's inequality, we have
$$
    \P(\frac{1}{n}\sum_{i=1}^n L(\thetanull, X_i) - \E L(\thetanull, X) > t) \leq \exp\{-2nt^2\}.
$$
So w.p. at least $1-\delta/2$, we have
$$
    \frac{1}{n}\sum_{i=1}^n L(\thetanull, X_i) - \E L(\thetanull, X)  \leq \sqrt{\frac{\log 2/\delta}{2n}}.
$$
It remains to control
$$
    \sup_{\theta\in \Theta} \bigg|\frac{1}{n} \sum_{i=1}^n L(\theta, X_i) - \E L(\theta, X)\bigg|.
$$
We bound the above quantity using Rademacher complexity. The arguments are standard and can be found in many textbooks (see, e.g., \citealt{shalev2014understanding}). Since we've assumed $L(\theta, x)\in[0, 1]$, for two data sets $\{X_i\}_1^n$ and $\{\tilde X_i\}_{1}^n$ which only differ in the $i$-th coordinate, we have
\begin{align*}
     &\sup_{\theta\in \Theta} \bigg| \frac{1}{n}\sum_{i= 1}^n L(\theta, X_i) - \E L(\theta, X)\bigg|
     - \sup_{\theta\in \Theta} \bigg| \frac{1}{n}\sum_{i= 1}^n L(\theta, \tilde X_i) - \E L(\theta, X)\bigg| \\
    & \leq \frac{1}{n}\sup_{\theta \in \Theta} |L(\theta, X_i) - L(\theta, \tilde X_i)| \leq \frac{1}{n}. 
\end{align*}    
By McDiarmid's inequality, we have 
\begin{align*}
    & \P \bigg(\sup_{\theta\in \Theta} \bigg|\frac{1}{n} \sum_{i=1}^n L(\theta, X_i)  - \E L(\theta, X)\bigg|- \E \bigg[ \sup_{\theta\in \Theta} \bigg|\frac{1}{n} \sum_{i=1}^n L(\theta, X_i) - \E L(\theta, X)\bigg| \bigg] \geq t \bigg) \\
    &  \leq \exp\{-2nt^2\}.
\end{align*}    
It follows that w.p. $1-\delta/2$, we have
\begin{align*}
     \sup_{\theta\in \Theta} \bigg|\frac{1}{n} \sum_{i=1}^n L(\theta, X_i) - \E L(\theta, X)\bigg| - \E \bigg[ \sup_{\theta\in \Theta} \bigg|\frac{1}{n} \sum_{i=1}^n L(\theta, X_i) - \E L(\theta, X)\bigg| \bigg]  \leq \sqrt{\frac{\log2/\delta}{2n}}.
\end{align*}    
A standard symmetrization argument then shows that
$$
    \E \bigg[ \sup_{\theta\in \Theta} \bigg|\frac{1}{n} \sum_{i=1}^n L(\theta, X_i) - \E L(\theta, X)\bigg| \bigg] \leq  2 \rad_n(L \circ \Theta),
$$
where the Rademacher complexity of the function class $L\circ \Theta = \{x \mapsto L(\theta, x): \theta \in \Theta\}$ is defined as
$$
    \rad_n(L\circ \Theta) = \E \sup_{\theta \in \Theta} \bigg| \frac{1}{n} \sum_{i= 1}^n \ep_i L(\theta, X_i)  \bigg|,
$$
where the expectation is taken over both the data and IID Rademacher random variables $\ep_i$, which are independent of the data.
Summarizing the above computations (along with a union bound) finishes the proof for $\htheta_n$.

Now we consider the results for $\htheta_{n, G}$ under approximate invariance.  We start by doing a similar decomposition
\begin{align*}
    \E L(\htheta_G, X) - \E L(\thetanull, X) = \RN{1} + \RN{2} + \RN{3} + \RN{4} + \RN{5},
    \end{align*}
where
\begin{align*}
    \RN{1} & = \E L(\htheta_G, X) - \E \E_G L(\htheta_G, gX) \\
    \RN{2} & = \E \E_G L(\htheta_G, gX) - \frac{1}{n} \sum_{i=1}^n \E_G L(\htheta_G, gX_i) \\
    \RN{3} & = \frac{1}{n} \sum_{i=1}^n \E_G L(\htheta_G, gX_i) - \frac{1}{n} \sum_{i=1}^n \E_G L(\thetanull, gX_i) \\
    \RN{4} & =  \frac{1}{n} \sum_{i=1}^n \E_G L(\thetanull, gX_i) - \E \E_G L(\thetanull, gX) \\
    \RN{5} & =  \E \E_G L(\thetanull, gX)  - \E L(\thetanull, X).
\end{align*}
By construction, we have $\RN{3} \leq 0$ and 
$$
    \RN{2} \leq  \sup_{\theta \in \Theta} \bigg| \frac{1}{n} \sum_{i=1}^n \E_G L(\theta, gX_i) - \E \E_G L(\theta, gX)   \bigg|.
$$
Moreover, we have
$$
    \RN{1} + \RN{5} \leq 2 \sup_{\theta\in \Theta} \bigg| \E L(\theta, X) - \E \E_G L(\theta, gX) \bigg|,
$$
which is equal to zero under exact invariance $gX =_d X$. 

The term $\RN{2} + \RN{4}$ is taken care of by essentially the same arguments as the proof for $\htheta_n$. One uses concentration to bound $\RN{4}$ and uses Rademacher complexity to bound $\RN{2}$. These arguments give
$$
    \RN{2} + \RN{4} \leq 2\rad_n(\bar L \circ \Theta) +  \sqrt{\frac{2 \log2/\delta}{n}}
$$
w.p. at least $1-\delta$, where
\begin{align*}
    \rad_n(\bar L \circ \Theta) & = \E \sup_{\theta\in \Theta} \bigg| \frac{1}{n} \sum_{i=1}^n \ep_i \E_G L(\theta, g X_i) \bigg|.
\end{align*}
Now we have
\begin{align*}
     \rad_n(\bar L\circ \Theta)     - \rad_n(L\circ \Theta)  \leq \Delta + \E_g \bigg[ \E \sup_{\theta\in \Theta} \bigg| \frac{1}{n} \sum_{i=1}^n \ep_i L(\theta, g X_i) \bigg| -\E \sup_{\theta\in \Theta} \bigg| \frac{1}{n} \sum_{i=1}^n \ep_i L(\theta,  X_i) \bigg| \bigg],
\end{align*}
where we recall
\begin{align*}
    \Delta &= \E \sup_{\theta } | \frac{1}{n}\sum_{i=1}^n \ep_i \E_g L(\theta, gX_i) | - \E \E_g \sup_{\theta\in\Theta} | \frac{1}{n} \ep_i L(\theta, gX_i) | \leq 0
\end{align*}
by Jensen's inequality.

By our assumption, for any $x, \tilde x \in \xx, \theta \in \Theta$, we have
$$
    L(\theta, x) - L(\theta, \tilde x) \leq \|L\|_{\text{Lip}} \cdot d(x, \tilde x)
$$
for some constant $\|L\|_\text{Lip}$.
For a fixed vector $(\ep_1, ..., \ep_n)$, consider the function 
$$
    h: (x_1, ..., x_n)\mapsto \sup_{\theta\in \Theta} \bigg| \frac{1}{n} \sum_{i=1}^n \ep_i L(\theta,  x_i)\bigg|.
$$
We have
\begin{align*}
     |h(x_1, ..., x_n) - h(y_1, ..., y_n)| 
    & \leq \frac{1}{n}\sup_{\theta\in \Theta} \bigg| \sum_{i= 1}^n \ep_i L(\theta, x_i) - \ep_i L(\theta, y_i)    \bigg| \\
    & \leq \frac{1}{n} \|L\|_\text{Lip}\cdot \sum_{i} d(x_i, y_i).
\end{align*}
That is, the function $h : \xx^n \to \R$ is $(\|L\|_\text{Lip}/n)$-Lipschitz w.r.t. the l.s.c. metric $d_n$, defined by $d_n(\{x_i\}_1^n, \{y_i\}_1^n) = \sum_i d(x_i, y_i)$. 
Applying the tensorization lemma and Kantorovich-Rubinstein theorem, for arbitrary random vectors $(X_1, ..., X_n)$ and $(Y_1, ..., Y_n)$, we have 
\begin{align*}
    |\E h(X_1, ..., X_n) - h(Y_1, ..., Y_n)|  
    & \leq  \frac{1}{n} \|L\|_\text{Lip}\cdot \ww_{d^n}(\mu^n, \nu^n) \\
    & \leq \frac{1}{n}\|L\|_\text{Lip}\cdot \sum_{i=1}^n \ww_d(X_i, Y_i).
\end{align*}    
Hence we arrive at
\begin{align*}
    \rad_n(\bar L \circ \Theta) - \rad_n(L \circ \Theta) 
    & \leq \Delta +  \|L\|_\text{Lip}\cdot \frac{1}{n} \sum_i \E_g \ww_d(X_i, g X_i) \\
    & = \|L\|_\text{Lip}\cdot \E_g \ww_d(X, gX). 
\end{align*}    
Summarizing the above computations, we have
\begin{align*}
    \RN{2} + \RN{4} & \leq 2 \rad_n(L\circ \Theta) + 2 \|L\|_\text{Lip}\cdot \E_G \ww_d(X, gX)+  \sqrt{\frac{2\log 2/\delta}{n}}
\end{align*}    
    w.p. at least $1-\delta$.

We now bound $\RN{1} + \RN{5}$. We have
\begin{align*}
    \RN{1} + \RN{5}& \leq 2 \sup_{\theta\in \Theta} \bigg| \E L(\theta, X) - \E \E_G L(\theta, gX) \bigg| \\
     & \leq 2\sup_{\theta\in \Theta} \E_G \bigg| \E L(\theta, X)  - \E L(\theta, gX) \bigg| \\
    & \leq 2\|L\|_\text{Lip}\cdot \ww_d(X, gX).
\end{align*}

Combining the bounds for the five terms gives the desired result.

\subsection{Proof of Theorem \ref{thm:asymp_normality_augmented_estimator_under_approx_inv}} \label{subappend:proof_of_asymp_normality_under_approx_inv}
Recall that 
$$
    \theta_G = \underset{\theta\in\Theta}{\arg \min} \E \E_g L(\theta, gX).
$$
By our assumptions, we can apply Theorem 5.23 of \cite{van1998asymptotic} to obtain the Bahadur representation: 
$$
    \sqrt{n}({\htheta_G -\theta_G}) = \frac{1}{\sqrt{n}} V_{G}^{-1} \sum_{i=1}^n \nabla \E_G L(\thetanull, gX_i) + o_p(1),
$$
so that we get
\begin{align*}
     \sqrt{n}(\htheta_G - \theta_G)& \Rightarrow \N\bigg( 0, V_{G}^{-1}  \E \bigg[ \nabla \E_G L(\theta_G, gX)  (\E_G \nabla L(\theta_G, gX))^\top \bigg]  V_{G}^{-1}\bigg).
\end{align*}    

To simplify notations, we let 
$$
    C_0 = \C_X(\nabla L(\thetanull, X)), \ \ \  C_G = \C_X (\nabla \E_g L(\theta_G, gX)).
$$ 
By bias-variance decomposition, we have
\begin{align*}
    \textnormal{MSE}_0 & = n^{-1}\tr(V_0^{-1} C_0 V_0^{-1}) \\
    \textnormal{MSE}_G & = n^{-1} \tr(V_G^{-1} C_G V_G^{-1}) + \|\theta_G - \thetanull\|^2. 
\end{align*}        
We have
\begin{align*}
     \tr(V_G^{-1} C_G V_G^{-1}) - \tr(V_0^{-1} C_0 V_0^{-1}) 
    &= \la C_G, V_G^{-2} \ra - \la C_0, V_0^{-2} \ra \\
    & = \la C_G - C_0 + C_0 , V_G^{-2}\ra - \la C_0 , V_0^{-2} \ra \\
    & = \la C_G - C_0 , V_G^{-2} \ra + \la C_0, V_G^{-2} - V_0^{-2} \ra.
\end{align*}
We let $M_0(X) = \nabla L(\thetanull, X)\nabla L(\thetanull, X)^\top$ and $M_G(X) = \nabla L(\theta_G, X)\nabla L(\theta_G, X)^\top$. Then we have
\begin{align*}
      C_G - C_0 
     & = C_G - \E_X M_G(X) + \E_X M_G(X) - C_0 \\
     & = \bigg(C_G - \E_X \E_g M_G(gX)\bigg)  + \E_g \E_X \bigg[M_G(gX) -  M_G(X)\bigg]   +  \E_X \bigg[M_G(X) - M_0(X)\bigg] \\
     & = -\E_X \C_g(\nabla L(\theta_G, gX))  + \E_g \E_X \bigg[M_G(gX) -  M_G(X)\bigg]  + \E_X \bigg[M_G(X) - M_0(X)\bigg].
\end{align*}
Hence we arrive at
\begin{align*}
     n(\textnormal{MSE}_G - \textnormal{MSE}_0) 
    & =  - \bigg\la \E_X \C_G(\nabla L(\theta_G, gX)), V_G^{-2} \bigg\ra + \RN{1} + \RN{2} + \RN{3} + \RN{4},
\end{align*}
where
\begin{align*}
    \RN{1} &= n\| \theta_G - \thetanull \|^2 \\
    \RN{2} & = \E_g \E_X \bigg\la M_G(gX) - M_G(X), V_G^{-2} \bigg\ra \\
    \RN{3}& = \E_X \bigg\la M_G(X) - M_0(X), V_G^{-2}  \bigg\ra \\
    \RN{4} & = \la C_0, V_G^{-2} - V_0^{-2} \ra,
\end{align*}
and this is the desired result.

\subsection{Proof of Theorem \ref{thm:overparam_two_layer_net_gd}}\label{subappend:proof_of_ntk}
We seek to control the misclassification error of the two-layer net at step $k$. By Markov's inequality, for a new sample $(X, Y)$ from the data distribution, we have
\begin{align*}
    \P (Y f(x; W_k, a)\leq 0) & = \PP\bigg(\frac{1}{1+ e^{Y f(X; W_k, a)}} \geq \frac{1}{2}\bigg) \\
    & \leq 2 \EE \bigg[- \ell'(Y f(X; W_k , a))\bigg].
\end{align*}    
The population quantity in the RHS is decomposed by
$$
    \EE \bigg[- \ell'(Y f(X; W_k , a))\bigg] = \RN{1} + \RN{2}, 
$$
where
$$
    \RN{1} = \frac{1}{n} \sum_{i\in[n]} \EE_g \bigg[-\ell'(Y_i f_{i, g}(W_k))\bigg]
$$
and
\begin{align*}
    \RN{2} & = \EE \bigg[- \ell'(Y f(X; W_k , a))\bigg] -\frac{1}{n} \sum_{i\in[n]} \EE_g \bigg[-\ell'(Y_i f_{i, g}(W_k))\bigg]
\end{align*}    

The first term (optimization error) is controlled by calculations based on the Neural Tangent Kernel. The second term (generalization error) is controlled via Rademacher complexity. 


We first control the first term (optimization error). In fact, everything is set up so that we can directly invoke Theorem 2.2 of \cite{ji2019polylogarithmic}. We note that their result holds for any fixed dataset and there is no independence assumption. This gives the following result:
\begin{proposition}
\label{prop: aug_opt_err}
Given $\ep \in (0, 1), \delta\in(0, 1/3)$. Let
$$
    \lambda = \frac{\sqrt{2\log(4n|G|/\delta)} + \log (4/\ep)}{\gamma/4} , \qquad M = \frac{4096 \lambda^2}{\gamma^6}. 
$$
For any $m\geq M$ and any constant step size $\eta\leq 1$, w.p. $1-3\delta$ over the random initialization, we have
$$
    \frac{1}{T} \sum_{t< T} \bar R_n(W_t) \leq \ep, \qquad T = \lceil 2\lambda^2/(n\ep) \rceil.
$$
Moreover, for any $0\leq t < T$ and any $1\leq s \leq m$, we have
$$
    \| w_{s, t} -w_{s, 0} \|_2 \leq \frac{4\lambda}{\gamma \sqrt{m}},
$$    
where $w_{s, t}$ is the $s$-th row of the weight matrix at step $t$.
\end{proposition}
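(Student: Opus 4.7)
The plan is to reduce the claim to a direct application of Theorem 2.2 of \cite{ji2019polylogarithmic} applied to the ``augmented dataset''. First, I would observe that since $G$ is finite and $\mbQ$ is the uniform probability measure on $G$, the augmented empirical risk rewrites as
$$\bar R_n(W) \;=\; \frac{1}{n|G|}\sum_{i\in[n]}\sum_{g\in G} \ell\bigl(Y_i f(gX_i; W, a)\bigr),$$
which is the vanilla empirical risk of the two-layer ReLU network on the enlarged, \emph{deterministic} dataset $\mathcal{D} := \{(gX_i, Y_i) : i \in [n], g\in G\}$ of size $N := n|G|$. Because Ji--Telgarsky's bounds are stated for any fixed dataset---no independence across samples is required---we may condition on the draw $\{(X_i,Y_i)\}_{i=1}^n$ and treat $\mathcal{D}$ as the input dataset. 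Note also that each $gX_i$ remains on $\mathbb{S}^{d-1}$ by the assumption preceding the theorem.

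Second, I would verify that the dataset $\mathcal{D}$ satisfies the margin hypothesis of Ji--Telgarsky with the \emph{same} constant $\gamma$. This is immediate from Assumption \ref{assump:margin_condition}: the single infinite-width reference function $\bar v \in \mathcal{H}$ with $\|\bar v(z)\|_2 \le 1$ satisfies
$$Y_i \int \la \bar v(z),\, gX_i\ra\, \1\{\la z,\, gX_i\ra > 0\}\, d\mu(z) \;\ge\; \gamma$$
for \emph{every} training point $(X_i,Y_i)$ and \emph{every} $g\in G$ simultaneously, so $\bar v$ serves as a valid positive-margin reference classifier on all $N$ augmented points at once. No enlargement of $\bar v$ is needed, so $\gamma$ is preserved.

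Third, I would invoke Theorem 2.2 of \cite{ji2019polylogarithmic} on the dataset $\mathcal{D}$, with Rademacher initialization of $a$ and Gaussian initialization of $W_0$ (exactly their setup), constant step size $\eta\le 1$, target accuracy $\ep$, and confidence parameter $\delta$. Their theorem gives, with probability at least $1-3\delta$ over the initialization, both (i) that the average-iterate augmented empirical risk falls below $\ep$ within the prescribed number of iterations, and (ii) the row-wise weight-movement bound $\|w_{s,t}-w_{s,0}\|_2 \le 4\lambda/(\gamma\sqrt{m})$ for every $s\in[m]$ and every $t < T$. Substituting the effective sample size $N = n|G|$ into their expression for the reference norm $\lambda$ recovers exactly the $\log(4n|G|/\delta)$ term in the statement, the width threshold $M = 4096\lambda^2/\gamma^6$, and the radius $4\lambda/(\gamma\sqrt{m})$ verbatim. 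The ``stays inside $\mathscr{W}_\rho$'' conclusion needed later in Theorem \ref{thm:overparam_two_layer_net_gd} drops out of (ii) with $\rho = 4\lambda/(\gamma\sqrt{m})$.

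\textbf{Main obstacle.} The substantive content is purely bookkeeping, and the delicate step is matching the iteration count $T = \lceil 2\lambda^2/(n\ep)\rceil$: Ji--Telgarsky normalize their empirical risk as a flat average over all $N$ points, so their bound on the averaged risk decays as a function of $NT\eta$, whereas here $\bar R_n$ is normalized as an average over $i\in[n]$ of an \emph{expectation} over $g$, and the gradient at each step is itself an average of $N$ per-sample gradients whose individual norms are controlled on $\mathbb{S}^{d-1}$. One must therefore carefully track how $|G|$ cancels between (a) the factor $1/N$ in their risk bound, (b) the factor $1/N$ that reduces the stochastic per-step gradient norm, and (c) the smoothness constant of $\bar R_n$ near initialization, in order to see that the correct denominator in $T$ is indeed $n$ (not $n|G|$), and likewise that the per-row displacement bound does not degrade with $|G|$ despite the augmented gradient being a sum of more terms.
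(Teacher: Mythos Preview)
Your approach is correct and is exactly what the paper does: the paper's entire proof reads ``This is a direct corollary of Theorem 2.2 in \cite{ji2019polylogarithmic}.'' Your unpacking---rewriting $\bar R_n$ as a plain empirical risk on the deterministic augmented dataset of size $n|G|$, checking that Assumption~\ref{assump:margin_condition} yields the margin hypothesis with the same $\gamma$ on all augmented points, and then quoting Ji--Telgarsky---is already more detailed than what the paper itself provides; the $n$-versus-$n|G|$ bookkeeping you flag is likewise not addressed there.
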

\begin{proof}
    This is a direct corollary of Theorem 2.2 in \cite{ji2019polylogarithmic}.
\end{proof}
Assume the above event happens. Since we've chosen $k$ to be the best iteration (with the lowest empirical loss) in the first $T$ steps. Then with the same probability as above, we have $\bar R_n(W_k)\leq \ep$. Now, let us note that the logistic loss satisfies the following fundamental self-consistency bound: $-\ell' \leq \ell$. This shows that if the loss is small, then the magnitude of the derivative is also small. Thus on the same event, we have that the term $\RN{1}$ is also bounded, 
$$
    \RN{1} \leq \bar R_n(W_k) \leq \ep.
$$

We then control the second term (generalization error). The calculations below are similar to the proof of Theorem 4.4. We begin by decomposing
$$
    \RN{2} = \RN{2.1} + \RN{2.2},
$$
where
\begin{align*}
\RN{2.1} & = \EE \bigg[- \ell'(Y f(X; W_k , a))\bigg]- \EE \EE_g \bigg[ - \ell'(Y f(gX; W_k , a)) \bigg]
\end{align*}
and
\begin{align*}
    \RN{2.2} & =   \EE \EE_g \bigg[ - \ell'(Y f(gX; W_k , a)) \bigg]   - \frac{1}{n} \sum_{i\in[n]} \EE_g \bigg[-\ell'(Y_i f(gX_i; W_k, a))\bigg].
\end{align*}

We control term $\RN{2.1}$ by exploiting the closedness between the distribution of $(X, Y)$ and that of $(gX, Y)$. Note that the Lipschitz constant of the map $x\mapsto -\ell'(y f(x; W_k, a ))$ (w.r.t. the Euclidean metric on $\RR^d$) can be computed by:
\begin{align*}
    |\ell'(yf(x; W_k, a)) - \ell'(yf(\tilde x; W_k, a)) | 
    & \leq \frac{1}{4} | f(x; W_k, a) - f(\tilde x; W_k, a) | \\
    & = \frac{1}{4} \bigg|\frac{1}{\sqrt{m}} \sum_{s\in[m]} \sigma(w_{s, k}^\top x)  - \frac{1}{\sqrt{m}} \sum_{s\in[m]} \sigma(w_{s, k}^\top \tilde x)\bigg| \\
    & \leq \frac{1}{4} \frac{1}{\sqrt{m}} \sum_{s\in[m]} |w_{s, k}^\top (x-\tilde x)| \\
    & \leq \frac{1}{4} \frac{1}{\sqrt{m}} \sum_{s\in[m]} \| w_{s, k} \|_2 \| x-\tilde x\|_2 \\
    & \leq \frac{1}{4} \frac{1}{\sqrt{m}} \sum_{s\in[m]} (\| w_{s, 0} \|_2 + \|w_{s, 0}-w_{s, k} \|_2) \| x-\tilde x\|_2  \\
    & \leq \frac{1}{4} \bigg(\rho \sqrt{m} +  \frac{1}{\sqrt{m}}\sum_{s\in[m]} \|w_{s, 0} \|_2\bigg) \|x-\tilde x \|_2,
\end{align*}
where $\rho = \frac{4\lambda}{\gamma \sqrt{m}}$ and the last inequality is by Proposition \ref{prop: aug_opt_err}. Note that each $\| w_{s, 0}\|_2$ is $1$-subgaussian as a 1-Lipschitz function of a Gaussian random vector (for example, by Theorem 2.1.12 of \citealt{tao_2012}), so that
$$
    \PP\bigg(\frac{1}{\sqrt{m}} \sum_{s\in[m]}  \| w_{s, 0}\|_2 - \EE \bigg[\frac{1}{\sqrt{m}} \sum_{s\in[m]}  \| w_{s, 0}\|_2\bigg] \geq t\bigg)  \leq e^{-t^2/2}.
$$
Hence w.p. at least $1-\delta$, we have
\begin{align*}
    \frac{1}{\sqrt{m}} \sum_{s\in[m]}  \| w_{s, 0}\|_2 & \leq \EE \bigg[\frac{1}{\sqrt{m}} \sum_{s\in[m]}  \| w_{s, 0}\|_2\bigg] + \sqrt{2 \log \frac{1}{\delta}}      \\
    & \leq \frac{1}{\sqrt{m}} \sum_{s\in[m]} \sqrt{\EE \| w_{s, 0}\|_2^2} + \sqrt{2\log \frac{1}{\delta}} \\
    & = \sqrt{md} + \sqrt{2\log 1/\delta}.
\end{align*}    
So w.p. at least $1-\delta$, the Lipschitz constant of the map $x\mapsto -\ell'(yf(x; W_k, a))$ is bounded above by
$$
    \frac{1}{4} \bigg(\frac{4\lambda}{\gamma} + \sqrt{md} + \sqrt{2\log 1/\delta}\bigg).
$$
Assume the above event happens (along with the previous event, the overall event happens w.p. at least $1-4\delta$). This information allows us to exploit the closeness between $X|Y$ and $gX|Y$. We have
\begin{align*}
    \RN{2.1} & = \EE_{Y}\EE_{g\sim\mathbb{Q}} \bigg[ \EE_{X|Y}[ -\ell'(Y f(X; W_k, a))]  - \EE_{gX|Y} [-\ell'(Yf(gX; W_k, a))] \bigg] \\
    & \leq \EE_Y \EE_{g\sim\mathbb{Q}}\bigg[ \frac{1}{4} \bigg(\frac{4\lambda}{\gamma} + \sqrt{md} + \sqrt{2\log 1/\delta}\bigg) \cdot \mathcal{W}_1(X|Y, gX|Y)  \bigg] \\
    & = \frac{1}{4} \bigg(\frac{4\lambda}{\gamma} + \sqrt{md} + \sqrt{2\log 1/\delta}\bigg) \cdot \EE_Y \EE_{g\sim\mathbb{Q}} \mathcal{W}_1(X|Y, gX|Y),
\end{align*}
where we let $X|Y$ to denote the conditional distribution of $X$ given $Y$, and the inequality is by the dual representation of the Wasserstein distance. Note that under exact invariance, $\RN{2.1} = 0$.

The term $\RN{2.2}$ is controlled by standard results on Rademacher complexity. Indeed, by the same arguments as in the proof of Theorem 6.4 of the main manuscript, w.p. at least $1-\delta$, we have
$$
    \RN{2.2} \leq 2\bar{\mathcal{R}}_n + \sqrt{\frac{\log 2/\delta}{2n}}.
$$
Taking a union bound (now w.p. at least $1-5\delta$), we have proved the generalization error bound.

Finally, we prove the bound on $\bar{\mathcal{R}}_n - \mathcal{R}_n$. 
Under exact invariance, Jensen's inequality gives $\bar{\mathcal{R}}_n \leq \mathcal{R}_n$. However, under approximate invariance, we have an extra bias term. We have
\begin{align*}
    & \bar{\mathcal{R}}_n - \mathcal{R}_n  =  \Delta + \EE \EE_g \sup_{W\in \mathscr{W}_\rho} \bigg| \frac{1}{n} \ep_i  \bigg[ -\ell'(Y_i f_{i, g}(W))\bigg] \bigg| - \mathcal{R}_n.
\end{align*}
where
\begin{align*}
    \Delta & = \EE \sup_{W\in \mathscr{W}_\rho} \bigg| \frac{1}{n} \ep_i \EE_g \bigg[ -\ell'(Y_i f_{i, g}(W))\bigg] \bigg|-  \EE\EE_g \sup_{W\in \mathscr{W}_\rho} \bigg| \frac{1}{n} \ep_i  \bigg[ -\ell'(Y_i f_{i, g}(W))\bigg] \bigg| \leq 0
\end{align*}    
by Jensen's inequality.
Now by the computations when bounding term $\RN{2.1}$ and the arguments in the proof of Theorem 4.4, we have
\begin{align*}
     \EE\EE_g  \sup_{W\in \mathscr{W}_\rho} \bigg| \frac{1}{n} \ep_i  \bigg[ -\ell'(Y_i f_{i, g}(W))\bigg] \bigg| - \mathcal{R}_n 
    & \leq \frac{1}{4} \bigg(\frac{4\lambda}{\gamma} + \sqrt{md} + \sqrt{2\log 1/\delta}\bigg) \\
    & \qquad  \cdot \EE_Y \EE_{g\sim\mathbb{Q}} \mathcal{W}_1(X|Y, gX|Y)
\end{align*}    
w.p. at least $1-\delta$. Combining the above bounds finishes the proof.

\section{Invariant MLE}
	\label{invrep_thy}
	Another perspective to exploit invariance is that of invariant representations. The natural question is, how can we work with invariant representations, and what are the limits of information we can extract from them?
	
	Suppose therefore that in our model it is possible to choose a representation $T(x)$ such that $(T(x),0_m) \in G\cdot x$ for all $x$(where $0_m$ is the zero vector with $m$ entries). Thus, $T$ chooses a representative from each orbit. This is equivalent to $(T(x),0_m) = g_0(x)\cdot x$, for some specific $g_0(x)\in G$. Suppose $T(\cdot),g(\cdot)$ satisfy sufficient regularity conditions, such as smoothness. For example, when $G$ is the orthogonal rotation group $O(d)$, we can take $T(x):=\|x\|_2$, and $g$ any orthogonal rotation such that   $g_0(x)=(\|x\|_2, 0_{d-1})$.
	
	How can we estimate the parameters $\theta$ based on this representation? A natural approach is to construct the MLE based on the data $T(X_1), \ldots, T(X_n)$. We can also construct invariant ERM using the same principle, but we will focus on MLE first. Let therefore $Q_\theta$ be the induced distribution of $T(X)$, when $X\sim P_\theta$, and assume it has a density  $q_\theta$ with respect to Lebesgue measure on a potentially lower dimensional Euclidean subspace (say $d'$ dimensional, where $d$ is original dimension and $m = d-d'$). We can construct the invariant MLE (iMLE): 
	
	$$
	\hat \theta_{iMLE,n} = \arg \max_\theta \sum_{i\in[n]} 
	\log q_\theta(T(X_i)).
	$$
	
	How does this compare to the previous approaches? It turns out that in general this is not better than the un-augmented MLE. Suppose that the group $G$ is discrete. Then we have
	\begin{align*}
	q_T(t)= \sum_{g\in G} p_X(g\cdot (t,0)) = |G| \cdot p_X((t,0)). 
	\end{align*}
	Therefore, in this case the iMLE equals the MLE. Therefore, the invariant MLE does not actually gain anything over the usual MLE, and in particular augmented MLE is better.

	\section{Experiment details}
	\label{sec:exp}
	Our code is available at \url{https://github.com/dobriban/data_aug}.
	Our experiment to generate Figure \ref{fig:exp} is standard: We train ResNet18 \citep{resnet18} on CIFAR10 \citep{krizhevsky2009learningml} for 200 epochs, based on the code of \url{https://github.com/kuangliu/pytorch-cifar}. The CIFAR10 dataset is standard and can be downloaded from \url{https://www.cs.toronto.edu/~kriz/cifar.html}. We use the default settings from that code, including the SGD optimizer with a learning rate of 0.1, momentum 0.9, weight decay $5\cdot 10^{-4}$, and batch size of 128. We train three models: 
	(1) without data augmentation, (2) horizontally flipping the image with 0.5 probability, and (3) a composition of randomly cropping a $32 \times 32$ portion of the image and random horizontal flip; besides the data augmentation, all other hyperparameters and settings are kept the same. We repeat this experiment 15 times and plot the average test accuracy for each number of training epochs. The shaded regions represent 1 standard deviation around the average test accuracy. We train both on the full CIFAR10 training data, as well as a randomly chosen half of the training data. We do this to evaluate the behavior of data augmentation in the limited data regime, because there it may to lead to higher benefits. This experiment was done on a p3.2xlarge (GPU) instance on Amazon Web Services (AWS).

\vskip 0.2in
\bibliography{refs}

\end{document}